\def\isarxiv{1} %%% for icml submission version, we comment this line
\renewcommand*{\citet}{\cite}
\renewcommand*{\citep}{\cite}
\theoremstyle{plain}
\newtheorem{theorem}{Theorem}[section]
\newtheorem{lemma}[theorem]{Lemma}
\newtheorem{definition}[theorem]{Definition}
\newtheorem{corollary}[theorem]{Corollary}
\newtheorem{fact}[theorem]{Fact}
\newtheorem{remark}[theorem]{Remark}
\newcommand{\wh}{\widehat}
\newcommand{\wt}{\widetilde}
\newcommand{\ov}{\overline}
\newcommand{\N}{\mathcal{N}}
\newcommand{\R}{\mathbb{R}}
\newcommand{\Tinit}{{\cal T}_{\mathsf{init}}}
\newcommand{\Tquery}{{\cal T}_{\mathsf{query}}}
\newcommand{\Tupdate}{{\cal T}_{\mathsf{update}}}
\renewcommand{\d}{\mathrm{d}}
\DeclareMathOperator*{\E}{{\mathbb{E}}}
\DeclareMathOperator*{\var}{\mathrm{Var}}
\DeclareMathOperator{\diag}{diag}
\DeclareMathOperator{\sgn}{sgn}
\newcommand*{\RN}[1]{\expandafter\@slowromancap\romannumeral #1@}
\begin{document}

\ifdefined\isarxiv

\date{}

\title{HSR-Enhanced Sparse Attention Acceleration}

\author{
Bo Chen\thanks{\texttt{ bc7b@mtmail.mtsu.edu}. Middle Tennessee State University.}
\and 
Yingyu Liang\thanks{\texttt{
yingyul@hku.hk}. The University of Hong Kong. \texttt{
yliang@cs.wisc.edu}. University of Wisconsin-Madison.} 
\and
Zhizhou Sha\thanks{\texttt{ shazz20@mails.tsinghua.edu.cn}. Tsinghua University.}
\and
Zhenmei Shi\thanks{\texttt{
zhmeishi@cs.wisc.edu}. University of Wisconsin-Madison.}
\and 
Zhao Song\thanks{\texttt{ magic.linuxkde@gmail.com}. The Simons Institute for the Theory of Computing at the UC, Berkeley.}
}

\else

\title{\includegraphics[scale=0.042]{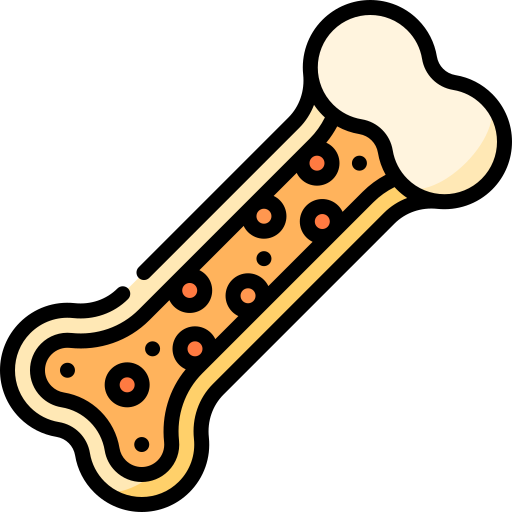} HSR-Enhanced Sparse Attention Acceleration}

\author{%
  Bo Chen\textsuperscript{1}, ~~~Yingyu Liang\textsuperscript{2,3} , ~~~Zhizhou Sha\textsuperscript{4} , ~~~Zhenmei Shi\textsuperscript{3} , ~~~Zhao Song\textsuperscript{5} 
  \\
    \textsuperscript{1}Middle Tennessee State University \quad \textsuperscript{2}The University of Hong Kong\\
     \textsuperscript{3}University of Wisconsin-Madison \quad
      \textsuperscript{4}Tsinghua University\\
       \textsuperscript{5}The Simons Institute for the Theory of Computing at the University of California,
Berkeley, \\
  \texttt{bc7b@mtmail.mtsu.edu, yingyul@hku.hk, shazz20@mails.tsinghua.edu.cn, zhmeishi@cs.wisc.edu, magic.linuxkde@gmail.com}
}

\fi

\ifdefined\isarxiv
\begin{titlepage}
  \maketitle
  \begin{abstract}
Large Language Models (LLMs) have demonstrated remarkable capabilities across various applications, but their performance on long-context tasks is often limited by the computational complexity of attention mechanisms. We introduce a novel approach to accelerate attention computation in LLMs, particularly for long-context scenarios. We leverage the inherent sparsity within attention mechanisms, both in conventional Softmax attention and ReLU attention (with $\mathsf{ReLU}^\alpha$ activation, $\alpha \in \mathbb{N}_+$), to significantly reduce the running time complexity. Our method employs a Half-Space Reporting (HSR) data structure to identify non-zero or ``massively activated'' entries in the attention matrix. We present theoretical analyses for two key scenarios: generation decoding and prompt prefilling. Our approach achieves a running time of $O(mn^{4/5})$ significantly faster than the naive approach $O(mn)$ for generation decoding, where $n$ is the context length, $m$ is the query length, and $d$ is the hidden dimension. We can also reduce the running time for prompt prefilling from $O(mn)$ to $O(mn^{1 - 1 / \lfloor d/2\rfloor} + mn^{4/5})$. Our method introduces only provably negligible error for Softmax attention. This work represents a significant step towards enabling efficient long-context processing in LLMs.

  \end{abstract}
  \thispagestyle{empty}
\end{titlepage}

{\hypersetup{linkcolor=black}
\tableofcontents
}
\newpage

\else
\maketitle 
\begin{abstract}

\end{abstract}

\fi

\section{Introduction}
Large Language Models (LLMs) have showcased remarkable capabilities across various applications, including context-aware question answering, content generation, summarization, and dialogue systems, among others \citep{tdh+22, cdi+21, wtb+22, zld+24}. 
Long-context tasks of LLMs have gained more and more attention. 
Several LLMs extend their context length to $128$K tokens, such as Yarn~\citep{pqj+23}, GPT-4~\citep{gpt4turbo}, Claude 3.5 \citep{claude3.5}, Llama 3.1 \citep{llama3}, Phi-3.5 \citep{phi3}, Mistral Nemo \citep{mistral_nemo}, etc. 
A bottleneck for long-context tasks is the computational cost of the attention mechanism in LLMs.
The key to LLM success is the transformer architecture \citep{vsp+17}, wildly used in various practical scenarios \citep{rwc+19, kjt19, wsd+23, wcz+23, wxz+24}, whose critical component is the attention mechanism. 
Let $n$ be the data length, $m$ be the length of query tokens, and $d$ be the feature dimension\footnote{As $d$ is always fixed in practice, there is no need to scale up $d$ in analysis. Thus, in this work, we always assume $d$ is a small constant.}. 
The conventional attention uses Softmax activation and is defined as follows: 
\begin{definition}[Softmax attention] \label{def:Softmax_attention}
Let $Q \in \R^{m \times d} $ and $K, V \in \R^{n \times d}$ denote the query, key, and value matrix.
The Softmax attention is:
\begin{align*}
    \mathsf{Attn}_s(Q,K,V) := \mathsf{Softmax}(QK^\top / \sqrt{d}) V = D^{-1} A_s V \in \R^{m \times d},
\end{align*}
where (1) $A_s := \exp( Q K^\top / \sqrt{d}) \in \R^{m \times n}$ and $\exp$ is applied element-wise , (2) $D := \diag (A_s \cdot {\bf 1}_n ) \in \R^{m \times m}$ denotes the normalization matrix, (3) $  D^{-1} A_s \in \R^{m\times n}$ denotes the attention matrix. 
\end{definition}

In practical LLM applications, there are two scenarios for attention computation depending on the context length $n$ and query length $m$. 
The first case, $m=\Theta(1)$, represents the generation decoding based on the pre-computed Key Value Cache (KV), which stores the intermediate attention key and value matrices.
The second case, $m=\Theta(n)$, represents the prompt prefilling before text generation or the cross-attention computation. 
However, in both cases, when the context window $n$ becomes larger, the running time will increase correspondingly, i.e., it will be linear and quadratic in $n$ for $m=\Theta(1)$ and $m=\Theta(n)$, respectively. 
Thus, reducing the running time of attention computations with long context input becomes essential to minimize response latency and increase throughput for LLMs.

In this work, we introduce novel methods to reduce the running time complexity for both cases, i.e., $m=\Theta(1)$ and $m=\Theta(n)$. 
We are inspired by the inherent sparsity within attention mechanisms. 
Numerous prior studies have highlighted the significant sparsity in the attention matrix \citep{cgrs19, apb+23, lwd+23, tzz+24, sckl24}. 
This manifestation of sparsity in Softmax attention is that a large number of attention scores, i.e., $QK^\top$, concentrate on a small number of entries, which is known as ``massive activation''. 
Due to this nature, Softmax attention can be accelerated by
only calculating the entries that contain large attention scores, introducing negligible approximation errors~\citep{zsz+23, lhy+24}. 

When talking about ReLU activation, one can easily accelerate the computation process by only calculating the entries activated by ReLU (since other non-activated entries will eventually be set to zero by ReLU).
ReLU attention is another attention mechanism widely used, substituting the conventional Softmax activation function with ReLU. ReLU attention has demonstrated performance comparable to Softmax attention in various downstream tasks \citep{wlgk23, hdll22} (see  Section~\ref{sec:related_work} for more details). We present the formal definition of ReLU attention as follows.
\begin{definition}[ReLU attention] \label{def:relu_attention}
Let $Q \in \R^{m \times d} $ and $K, V \in \R^{n \times d}$ denote the query, key, and value matrix. Let $\alpha \in \mathbb{N}_+$. 
The ReLU attention is:
\begin{align*}
    \mathsf{Attn}_r(Q,K,V) :=  D^{-1} A_r V \in \R^{m \times d},
\end{align*}
where (1) $A_r := \mathsf{ReLU}^{\alpha}( Q K^\top /\sqrt{d} - b) \in \R^{m \times n}$ and $\mathsf{ReLU}^{\alpha}$ denotes the $\alpha$-th power of ReLU activation for any $\alpha \in \mathbb{N}_+$, 
(2) $D := \diag (A_r \cdot {\bf 1}_n ) \in \R^{m \times m}$ denotes the normalization matrix, 
(3) $b \in \R$ denotes position bias, 
(4) $D^{-1} A_r \in \R^{m \times n}$
denotes the attention matrix. 
\end{definition}

\begin{wrapfigure}{r}{0.6\textwidth}
\centering
\includegraphics[width=\linewidth]{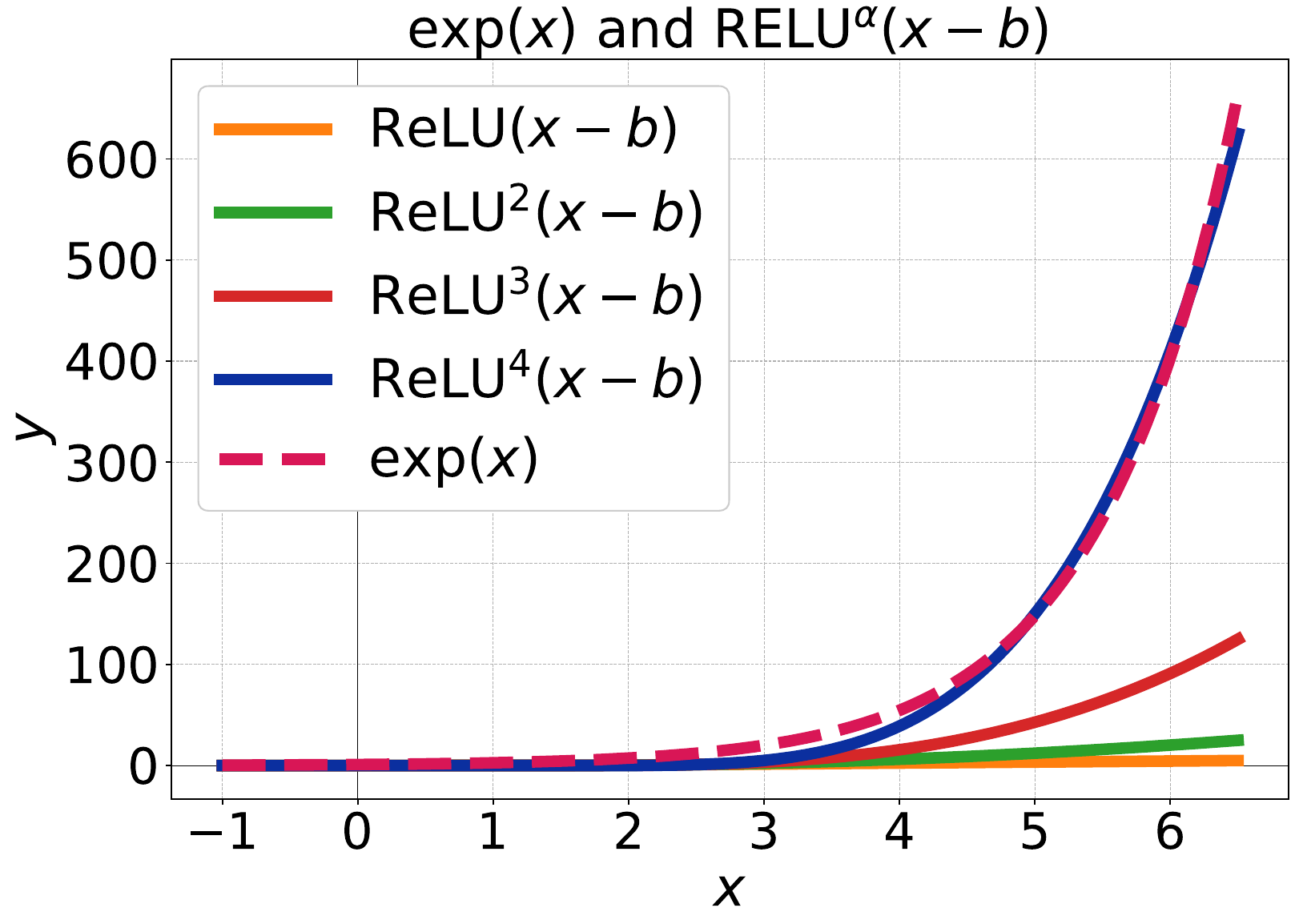}
\caption{
The trending of the Softmax activation ($\exp$) and the ReLU activation with different powers. 
Here, we choose $b = 1.5$ as the threshold for the ReLU activation. 
}
\label{fig:exp_relu_power}
\end{wrapfigure}

To expedite the computation, the critical task is to identify the large/non-zero entries for Softmax/ReLU attention, respectively. 
In this work, We utilize the half-space reporting (HSR) data structure to tackle this problem. HSR was first introduced in \cite{aem92} to address the half-space range reporting problem (More details can be found in Section~\ref{sec:preliminary:hsr}). 

In our framework, we define the half-space as the region where the attention scores (the inner products of key and query vectors) exceed some threshold. 
We leverage the HSR data structure’s ability to quickly answer range reporting to expedite the identification of non-zero entries within the ReLU attention matrix and large entries in Softmax attention. 
Consequently, we accelerate the computation for ReLU attention and expedite the computation of Softmax attention with negligible approximation error, resulting in a substantial reduction in computation time. 

Then, we state our results under the generation decoding scenario ($m=\Theta(1)$) and prompt prefilling scenario ($m=\Theta(n)$). 
When $m=\Theta(1)$, we accelerate ReLU and Softmax attention computation time over the naive approach from $O(mn)$ to $O(mn^{4/5})$ with pre-processed KV cache (Algorithm~\ref{alg:relu_attn_gen}). 
When $m=\Theta(n)$, we accelerate ReLU and Softmax attention computation time over the naive approach from $O(mn)$ to $O(mn^{1 - 1 / \lfloor d/2  \rfloor} + mn^{4/5})$ (Algorithm~\ref{alg:calculation_general_framework}). 
Furthermore, Section~\ref{sec:experiments} shows that the approximation error associated with Softmax attention utilizing ``massive activated'' entries only is small in practice, which is consistent with our theoretical analysis.

\paragraph{Our contributions:}
\begin{itemize}
    \item 
    To the best of our knowledge, this is the first work incorporating the HSR data structure with attention computation to reduce the running time complexity with the help of the sparsity within the attention mechanisms.
    \item We provide rigorous theoretical proofs for reducing the computational time (1) for ReLU attention generation decoding from $O(mn)$ to $O(mn^{4/5})$  (Algorithm~\ref{alg:relu_attn_gen}); (2) for ReLU attention prompt prefilling from $O(mn)$ to $O(mn^{1 - 1 / \lfloor d/2\rfloor} + mn^{4/5})$ (Algorithm~\ref{alg:calculation_general_framework}). 
    \item We achieve the same running time speed up for the conventional Softmax attention (Theorem~\ref{thm:Softmax_attention_generation:informal}, \ref{thm:Softmax_attention_computation:informal}), and we give rigorous theoretical proofs to ensure that the approximation error remains negligible (Theorem~\ref{thm:err_analysis_of_Softmax_attn_with_index_set:informal}). And we provide an empirical evaluation to prove our theoretical error analysis (Section~\ref{sec:experiments}). 
    \item We conduct empirical experiments on prominent LLMs to verify the approximation error associated with Softmax attention utilizing ``massive activated'' entries only. The results show that the error using a few top entries is already insignificant, consistent with our theoretical analysis.
\end{itemize}

\paragraph{Comparison with Previous Works. }
\cite{fa23} uses an approximated nearest neighbors search, which requires that the data are well-conditioned, e.g., uniformly separated, while our algorithm supports an exact nearest neighbor search. Their nearest neighbors search may introduce large approximation errors, while the approximation error of our algorithm can be small. On the other hand, \cite{as23} uses low-rank approximation methods to accelerate the attention computation while they require bounded entries assumptions, which is not required in this work. The bounded entry assumption may not always be held in practical scenarios. Our work is based on practical observation that the attention matrix is sparse.

\paragraph{Roadmap.} 
Section~\ref{sec:related_work} presents related work.
Section~\ref{sec:preliminary} introduces essential concepts.
Section~\ref{sec:main_result} presents our main results, i.e., guarantees on run time reduction and approximation error.
Section~\ref{sec:extension} introduces the extension of our method on prompt prefilling scenarios. 
Section~\ref{sec:tech_overview} provides a summary of the techniques used in our proof. 
Section~\ref{sec:experiments} provides our empirical evaluation for the approximation error on Softmax attention. 
Section~\ref{sec:discussion} discusses the potential of extending our method.
Section~\ref{sec:conclusion} concludes our algorithm and contributions.

\section{Related Work} \label{sec:related_work}

\subsection{Attention Acceleration for Long Context Input}
A long context window is essential for transformer-based LLMs in many downstream tasks. 
However, due to the quadratic time complexity associated with attention mechanisms, 
transformers are usually hard to run inference efficiently. 
Numerous methods have been proposed to enhance the inference efficiency. One approach involves using alternative architectures as proxies for attention to support faster inference, such as Mamba~\citep{gd23,dg24}, PolySketchFormer~\citep{kmz23}, Hopfield Models~\citep{hyw+23,whl+24,hlsl24,xhh+24,whhl24,hcl+24,hcw+24} and Linearizing Transformers~\citep{zbkr24,mvk+24}. 
Another line of research focuses on approximating attention matrix computation~\citep{as23,as24_arxiv,as24_iclr,hjk+23,zhmk24,lssz24a,pmn+23,ctwc24,lssy24,lls+24b,gswy23,dyz+24,lss+24,kll+25_var,chl+24_rope}. Nevertheless, these methods often rely on assumptions that may not be practical. For instance, some approaches use polynomial methods to approximate the exponential function, which requires all entries to be bounded by a small constant.
However, our HSR-enhanced attention framework is designed based on practical observation and validated by empirical support. 
These advancements not only improve general model performance but also play a crucial role in enhancing in-context learning capabilities, where models leverage information from the immediate context to perform tasks without fine-tuning. We refer the readers to some other related works~\citep{lss+24_relu, lls+24_io, lls+24_prune, lssz24_dp, lssz24a, lssz24_gm, swxl24, xsl24, smn+24, llss24_sparsegpt, syz23, qszz23, syyz23_dp, z22, szz24, syz23, ccl+25, cgl+25_homo, cll+25_icl, cll+24_rope, ssz+24_dit, ssz+24_pruning,cll+25,lll+25_loop,kls+25,lls+25_grok,wms+24,hwsl24,hwl+24,hwg+24,wsh+24}.

\subsection{ReLU Attention}
ReLU attention employs the ReLU activation function in place of the traditional Softmax function for attention computation. 
Previous studies have highlighted the promising potential of ReLU attention in various domains. 
From the empirical side, \cite{wlgk23} has demonstrated that incorporating ReLU as the activation function in vision transformers enhances performance on downstream tasks. \cite{sgt+23} has shown that transformers equipped with ReLU attention outperform those with Softmax attention, particularly when dealing with large key-value memory in machine translation tasks.
From the theoretical side, the scale-invariant property of ReLU attention \citep{lbz+22} facilitates the scalability of transformer networks. 
Furthermore, \cite{bcw+23, fgbm23, lss+25_relu} have shown that the inherent properties of ReLU attention contribute positively to the learning process of transformer models.
Another key advantage is that the ReLU function effectively sets all negative values to zero, allowing us to bypass these non-contributory elements during attention computation and thereby reducing its running time. 
Omitting these zero and negative entries does not introduce any error into the final output of the ReLU attention mechanism. 

\subsection{Half-Space Reporting (HSR) Data Structure}
The HSR data structure, initially proposed by \cite{aem92}, was developed to address the half-space range reporting problem.
The expedited range query capability inherent to HSR has been demonstrated to significantly enhance computational efficiency across a variety of tasks.
Studies such as \cite{jswz21} and \cite{bks23} have applied HSR to facilitate solving general linear programming (LP) problems. 
Another line of research has highlighted HSR's potential in expediting the training process of contemporary neural networks \citep{qsy23, gqsw22}.
There is also a collection of research that concentrates on leveraging HSR for the advancement of solutions to geometric and graphical challenges \citep{csx05, jfl+13, egkm17}. 

\section{Preliminary} \label{sec:preliminary}

\subsection{Notations} \label{sec:preliminary:notations}

We first introduce basic notations used in this paper.
For any positive integer $n$, we use $[n]$ to denote set $\{1,2,\cdots, n\}$.  
We use $\var[]$ to denote the variance.
For two vectors $x \in \R^n$ and $y \in \R^n$, we use $\langle x, y \rangle$ to denote the inner product between $x,y$.
We use ${\bf 1}_n$ to denote a length-$n$ vector where all the entries are ones.
We use $X_{i,j}$ to denote the $i$-row, $j$-th column of $X \in \R^{m \times n}$.
We use $\|A\|_{\infty}$ to denote the $\ell_{\infty}$ norm of a matrix $A \in \R^{n \times d}$, i.e. $\|A\|_{\infty} := \max_{i \in [n], j \in [d]} |A_{i,j}|$.

\subsection{Half-Space Reporting (HSR) Data Structure} \label{sec:preliminary:hsr}

Due to the limitation of space, we provide only the core corollary of the HSR data structure here. We refer the readers to Appendix~\ref{sub:app_preli:hsr_data_structure} for more details. 

In \cite{aem92}, the author introduces a data structure named HSR to solve the half-space range reporting problem. The interface of the HSR data structure can be summarized as in Algorithm~\ref{alg:half_space_report}.
Intuitively, the HSR data structure recursively partitions the set $S$ and organizes the points in a tree data structure. Then for a given query $(a, b)$, all $k$ points of $S$ with $\sgn(\langle a, x\rangle -b)\geq 0$ are reported quickly. 
Note that the query $(a, b)$ here defines the half-space $H$ in Definition~\ref{def:HSR}.
We summarize the running time complexity of the HSR data structure as follows:
\begin{corollary}[HSR data-structure time complexity \cite{aem92}, informal version of Corollary~\ref{cor:hsr_running_time}] \label{cor:hsr_running_time:informal}

Let $\Tinit$ denote the pre-processing time to build the data structure, $\Tquery$ denote the time per query, and $\Tupdate$ time per update. Given a set of $n$ points in $\R^d$, the half-space range reporting problem can be solved with the following performances:
\begin{itemize}
    \item Part 1.
    $\Tinit(n,d)=O_d(n\log n)$, $\Tquery(n,d,k)=O(d n^{1 - 1/ \lfloor d/2\rfloor} + d k)$.
    \item Part 2.
    $\Tinit(n,d)=O(n^{\lfloor d/2\rfloor})$, $\Tquery(n,d,k)=O(d\log(n)+dk)$. 
\end{itemize}
\end{corollary}

\section{Main Results on Generation Decoding} \label{sec:main_result}

In this section, we present our key findings regarding generation decoding, $m = \Theta(1)$, for both ReLU and Softmax attention mechanisms. We reduce the time complexity from a naive $O(mn)$ to $O(mn^{4/5})$. 
Our method only introduces a negligible approximation error for Softmax attention. 

\begin{figure*}[!ht]
\centering
\includegraphics[width=1.0\textwidth]{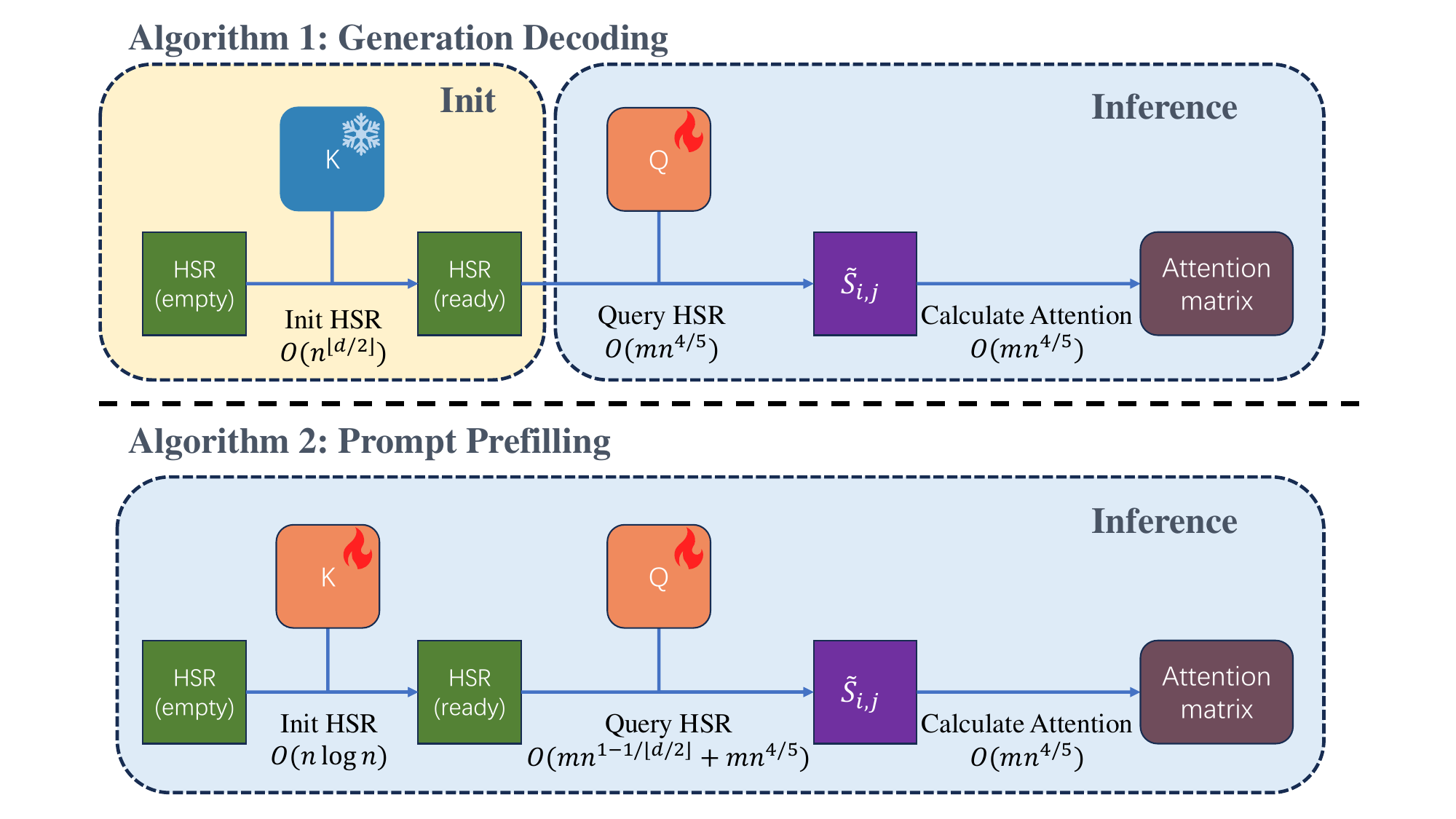}
\caption{
An outline of our principal algorithms. \textbf{Top:} Algorithm~\ref{alg:relu_attn_gen} for generation decoding is depicted, with the key matrix $K$ is fixed. During each inference step, the input query $Q$ interacts with the HSR data structure to get the activated indices set $\wt{S}_{i, j}$. Then, we can calculate the attention matrix according to $\wt{S}_{i, j}$.
\textbf{Bottom:} Algorithm~\ref{alg:calculation_general_framework} for prompt prefilling is shown, where both the key matrix $K$ and the query matrix $Q$ are variable across iterations. Consequently, the HSR data structure must first be initialized with $K$, followed by querying it using $Q$. Finally, according to the activated entries set $\wt{S}_{i, j}$ reported by the HSR data structure, the attention matrix can be calculated. For more information, please refer to Remark~\ref{rem:difference_of_alg2_and_alg3}. 
}
\label{fig:difference_of_alg2_and_alg3}
\end{figure*}

\begin{algorithm}[!ht] 
\caption{Generation decoding} 
\label{alg:relu_attn_gen}
\begin{algorithmic}[1]
\State {\bf data structure} \textsc{GenerationDecoding} 
\Comment{Lemma~\ref{lem:generation_general_framework:informal}}
\State {\bf members}
\State \hspace{4mm} \textsc{HalfSpaceReport} \textsc{hsr} \Comment{Part 2 of Corollary~\ref{cor:hsr_running_time:informal} 
}
\State \hspace{4mm} $\{K_i\}_{i \in [n]}$ \Comment{Key matrix}
\State \hspace{4mm} $V \in \R^{n \times d}$ \Comment{Value matrix}
\State \hspace{4mm} $b \in \R$ \Comment{Threshold of ReLU activation} 
\State {\bf end members}
\Procedure{Init}{$\{K_i\}_{i\in [n]}, V, n, d$}
\State $\{K_i\}_{i\in [n]},  V \gets \{K_i\}_{i\in [n]}, V$ \Comment{Store necessary matrices}
\State $b \gets \sigma_a \cdot \sqrt{0.4 \log n}$ \Comment{Init essential parameters and data structure. Lemma~\ref{lem:sparsity_analysis:informal}}
\State \textsc{hsr}.\textsc{Init}($\{K_i\}_{i\in [n]}, n, d$) \Comment{It takes $O(n^{\lfloor d/2\rfloor})$ time}
\EndProcedure
\Procedure{Inference}{$Q \in \R^{m \times d}, m$}
\State $A \gets {\bf 0}_{m \times n}$
\For{$i = 1 \to m$} \Comment{Loop for $m$ query vectors}
\State $\wt{S}_{i,\mathrm{fire}}\gets \textsc{hsr}.\textsc{Query}(Q_i, b)$ \label{line:preprocess_k:hsr_query} \Comment{It takes $O(n^{4/5})$ time}
\For{$j \in \wt{S}_{i,\mathrm{fire}}$} \Comment{Calculate the ReLU attention output according to $\wt{S}_{i,\mathrm{fire}}$}\label{line:preprocess_k:forloop_Sifire_start}
\State $A_{i, j} \gets \mathsf{ReLU}^{\alpha}( \langle Q_i, K_j \rangle /\sqrt{d} - b)$ or $A_{i, j} \gets \mathsf{Softmax}( \langle Q_i, K_j \rangle /\sqrt{d} )$ \label{line:preprocess_k:calculate_Arj}
\EndFor \label{line:preprocess_k:forloop_Sifire_end}\label{line:preprocess_k:forloop_m_end}
\EndFor
\State \Return $D^{-1} A V$ 
\EndProcedure
\State {\bf end data structure}
\end{algorithmic}
\end{algorithm}

We begin with introducing our result on ReLU attention generation decoding as follows:
\begin{theorem} [Running time of ReLU attention generation decoding, informal version of Theorem~\ref{thm:relu_gen_running_time}] \label{thm:relu_gen_running_time:informal}
Let ReLU attention be defined as Definition~\ref{def:relu_attention}. 
Assume each entry of $K$ conforms Gaussian ${\cal N}(0, \sigma_k^2)$, and each entry of $Q$ conforms Gaussian ${\cal N}(0, \sigma_q^2)$. 
Let $\delta \in (0, 1)$ denote the failure probability.
Let $\sigma_a = 4 \cdot ( 1 + d^{-1} \log(m / \delta))^{1/2} \cdot \sigma_q \sigma_k$.
Let $b = \sigma_a \cdot \sqrt{0.4 \log n}$.
Suppose we have KV Cache $K, V \in \R^{n \times d}$. We want to generate a $m$ length answer, where $n \gg m$. 
Then, our inference function in Algorithm~\ref{alg:relu_attn_gen},
with probability at least $1 - \delta$, takes $O(m n^{4/5} )$ time to generate the answer. 
\end{theorem}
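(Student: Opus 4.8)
The plan is to decompose the cost of the \textsc{Inference} procedure in Algorithm~\ref{alg:relu_attn_gen} into three parts and show each is $O(mn^{4/5})$: (i) the $m$ calls to \textsc{hsr}.\textsc{Query}; (ii) the evaluation of the reported entries $A_{i,j}$; and (iii) the final aggregation $D^{-1}AV$. Since the HSR structure is built once during \textsc{Init} on the KV cache, its preprocessing cost $\Tinit(n,d)$ is not charged to generation (this matters, as Part~2 of Corollary~\ref{cor:hsr_running_time:informal} has an expensive $O(n^{\lfloor d/2\rfloor})$ build). Using that same Part~2, the $i$-th query costs $\Tquery(n,d,\tilde k_i)=O(d\log n+d\tilde k_i)$, where $\tilde k_i := |\tilde S_{i,\mathrm{fire}}|$ is the number of reported keys; evaluating those entries costs $O(d\tilde k_i)$, and since row $i$ of $A$ has only $\tilde k_i$ nonzeros, forming $A_{i,:}V$ and the diagonal scaling also cost $O(d\tilde k_i)$. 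Hence the procedure runs in $O\big(md\log n+d\sum_{i=1}^m \tilde k_i\big)$ time, and as $d$ is a fixed constant everything reduces to controlling the total sparsity $\sum_i \tilde k_i$. The crux is therefore to prove that, with probability at least $1-\delta$, $\tilde k_i=O(n^{4/5})$ simultaneously for every $i\in[m]$.

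For the sparsity bound I would condition on $Q$ and control the scale of the normalized scores. First, each $\|Q_i\|_2^2$ is $\sigma_q^2$ times a chi-squared variable with $d$ degrees of freedom, so a standard $\chi^2$ tail bound together with a union bound over the $m$ queries shows that, except with probability $\delta/2$, $\|Q_i\|_2\,\sigma_k/\sqrt d\le \sigma_a$ for all $i$; this is exactly the role of the factor $4(1+d^{-1}\log(m/\delta))^{1/2}$ in the definition of $\sigma_a$. Conditioned on a fixed $Q_i$, the scores $Z_{i,j}:=\langle Q_i,K_j\rangle/\sqrt d$ are independent across $j$ and each is $\mathcal{N}(0,\|Q_i\|_2^2\sigma_k^2/d)$, hence Gaussian with standard deviation at most $\sigma_a$. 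The Gaussian tail bound then gives, for the threshold $b=\sigma_a\sqrt{0.4\log n}$, $\Pr[Z_{i,j}\ge b]\le \exp(-b^2/(2\sigma_a^2))=\exp(-0.2\log n)=n^{-1/5}$. The reported set $\tilde S_{i,\mathrm{fire}}$ is contained in $\{j: Z_{i,j}\ge b\}$ (up to the harmless $\sqrt d$ scaling between the HSR threshold and the ReLU bias, which only affects constants), so $\E[\tilde k_i]\le n\cdot n^{-1/5}=n^{4/5}$. Because $\tilde k_i$ is a sum of $n$ independent indicators, a Chernoff bound gives $\tilde k_i\le 2n^{4/5}$ except with probability $\exp(-\Omega(n^{4/5}))$, and a union bound over $i\in[m]$ absorbs this into the remaining $\delta/2$ budget.

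Combining the two pieces, with probability at least $1-\delta$ we have $\sum_{i=1}^m\tilde k_i=O(mn^{4/5})$, and substituting into the running-time decomposition yields total cost $O(md\log n+mdn^{4/5})=O(mn^{4/5})$, where the last step uses that $d=O(1)$ and $n^{4/5}\ge\log n$ for large $n$. The correctness that no error is introduced for ReLU attention is immediate, since $\mathsf{ReLU}^{\alpha}(Z_{i,j}-b)$ vanishes on every index outside $\tilde S_{i,\mathrm{fire}}$, so computing only the reported entries recovers the exact output.

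I expect the main obstacle to be the sparsity analysis rather than the running-time accounting. One must calibrate the constants in $\sigma_a$ and $b$ so that the chi-squared concentration of $\|Q_i\|_2$ and the Gaussian tail of the scores together yield precisely the $n^{-1/5}$ per-entry firing probability (and hence the $n^{4/5}$ sparsity), while carefully partitioning the failure probability $\delta$ between the norm event and the Chernoff concentration and keeping track of the conditioning on $Q$ when invoking independence across the keys $K_j$.
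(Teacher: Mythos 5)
Your proposal is correct and follows essentially the same route as the paper: the same running-time decomposition (HSR queries, evaluation of reported entries, and the $D^{-1}AV$ aggregation, with $\Tinit$ charged to preprocessing rather than generation), and the same sparsity argument (chi-square control of $\|Q_i\|_2$ to calibrate $\sigma_a$, the Gaussian tail giving the $n^{-1/5}$ per-entry firing probability at threshold $b=\sigma_a\sqrt{0.4\log n}$, concentration of the sum of independent indicators, and a union bound over the $m$ rows). The paper's Lemma~\ref{lem:sparsity_analysis} uses Bernstein's inequality where you invoke a Chernoff bound; for bounded indicators these are the same estimate, so this difference is immaterial.

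The one thing you omit is the auto-regressive bookkeeping that the paper's proof of Theorem~\ref{thm:relu_gen_running_time} carries out explicitly: during generation, the $i$-th query $q_i$ must attend not only to the preprocessed KV cache $K$ (handled by the HSR structure) but also to the keys $k_1,\dots,k_{i-1}$ of the tokens generated so far, which are not in that structure. The paper processes these by brute force at cost $O(i\cdot d)$ per step, for $O(m^2 d)$ in total, and then absorbs this into $O(mn^{4/5})$ using the hypothesis $n \gg m$. Your analysis treats the inference as $m$ independent queries against a static cache, which matches Algorithm~\ref{alg:relu_attn_gen} as written but not the full generation process claimed in the statement. This is a minor accounting step rather than a conceptual gap, but a complete proof of the theorem should include it.
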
 

Theorem~\ref{thm:relu_gen_running_time:informal} shows that our Algorithm~\ref{alg:relu_attn_gen} accelerates the running time of ReLU attention generation decoding from naive $O(mn)$ to $O(mn^{4/5} )$, which is a significant speed up when the KV Cache is large. We provide an example for the sparsity with different $n$ in Table~\ref{tab:sparsity_example} in the Appendix.
The at least $1 - \delta$ success probability originates from the sparsity analysis of ReLU attention (Lemma~\ref{lem:sparsity_analysis:informal}), where with probability at least $1 - \delta$, we have the number of non-zero entries of each row of the attention matrix no larger than $n^{4/5}$. 

Then, we move on to presenting our result on Softmax attention generation decoding.
Our results consist of two parts: the improved running time and the approximation error analysis.
Firstly, we introduce our result about the improved running time of Softmax attention generation decoding as follows:

\begin{theorem} [Running time of Softmax attention generation decoding, informal version of Theorem~\ref{thm:Softmax_attention_generation}] \label{thm:Softmax_attention_generation:informal}
Let $Q \in \R^{m \times d}$, $K, V \in \R^{n \times d}$ and the Softmax attention $\mathsf{Attn}_s$ be defined in Definition~\ref{def:Softmax_attention}.
Let $\mathsf{NN}(r, q, K) \subseteq [n]$ 
and the Softmax attention with index set $\wh{\mathsf{Attn}}_s$ be defined as Definition~\ref{def:top_r_softmax_attention}. 
We choose the threshold $b \in \R$ in Algorithm~\ref{alg:relu_attn_gen}  such that $R = \mathsf{NN}(n^{4/5}, q, K)$. 
Then, we can show that
the Softmax attention with index set $\wh{\mathsf{Attn}}_s$ achieves
outstanding running time under the Softmax attention generation scenario:
Suppose we have KV Cache $K, V \in \R^{n \times d}$. We want to generate a $m$ length answer, where $n \gg m$. Our inference function in Algorithm~\ref{alg:relu_attn_gen} (replacing ReLU attention with Softmax attention) takes $O(m n^{4/5} )$ time to generate the answer. 
\end{theorem}

Theorem~\ref{thm:Softmax_attention_generation:informal} demonstrates that if we choose the threshold $b$ satisfying $R = \mathsf{NN}(n^{4/5}, q, K)$, we can achieve a significant running time improvement of the Softmax attention generation. 

Indeed, this method introduces an approximation error due to the exclusion of certain entries. 
Nevertheless, under mild assumptions about the distribution of the attention scores, we demonstrate that this approximation error is indeed negligible.
The proof's intuitive explanation lies in the fact that the majority of attention scores are focused on the small subset of entries that we retain. 
We organize our results as follows:
\begin{theorem}
[Error analysis of Softmax attention with index set, informal version of Theorem~\ref{thm:err_analysis_of_Softmax_attn_with_index_set}] \label{thm:err_analysis_of_Softmax_attn_with_index_set:informal}
Let $Q \in \R^{m \times d}$, $K, V \in \R^{n \times d}$ and the Softmax attention $\mathsf{Attn}_s$ be defined in Definition~\ref{def:Softmax_attention}.
Let $q \in \R^d$ denote a single row of $Q \in \R^{m \times d}$. 
Let $\gamma \in [0,1]$, $\beta_1 \ge \beta_2 \ge 0$.
Let the index set $R$ and the Softmax attention with index set $\wh{\mathsf{Attn}}_s$ be defined as Definition~\ref{def:top_r_softmax_attention}.
Let $\mathsf{NN}(r, q, K) \subseteq [n]$ denote the indices of top-$r$ entries of $q K$.
Let $R = \mathsf{NN}(n^\gamma, q, K) \subseteq [n]$, where $|R| = n^\gamma$.
Assume the query $q$ and key cache $K$ have $(\gamma, \beta_1, \beta_2)$ massive activation property (Definition~\ref{def:massive}).
Then, we have
\begin{align*}
    \| \wh{\mathsf{Attn}_s}(q, K, V) - \mathsf{Attn}_s(q, K, V) \|_\infty \le \frac{ 2 \| V  \|_{\infty}}{n^{\gamma + (\beta_1 - \beta_2)\cdot \|q\|_2 -1} } .
\end{align*} 
\end{theorem}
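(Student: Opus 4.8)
The plan is to split the full attention into its top-$r$ part and its complementary ``tail,'' then show that the tail is exponentially dominated by the massive entries. First I would record the elementary consequences of partitioning the index set $[n]$ into $R = \mathsf{NN}(n^\gamma, q, K)$ and $\ov{R}$: namely $\alpha = \wh{\alpha} + \ov{\alpha}$ and $u V = \wh{u}\wh{V} + \ov{u}\ov{V}$, both immediate from Definition~\ref{def:top_r_softmax_attention}. Using these, the error can be rewritten as
\begin{align*}
\wh{\mathsf{Attn}}_s(q,K,V) - \mathsf{Attn}_s(q,K,V) = (\wh{\alpha}^{-1} - \alpha^{-1})\,\wh{u}\wh{V} - \alpha^{-1}\,\ov{u}\ov{V},
\end{align*}
and since $\wh{\alpha}^{-1} - \alpha^{-1} = \ov{\alpha}/(\wh{\alpha}\,\alpha)$, the triangle inequality reduces the task to bounding $\|\wh{u}\wh{V}\|_\infty$ and $\|\ov{u}\ov{V}\|_\infty$.

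Next I would bound these two matrix--vector products entrywise. Because $\wh{u}, \ov{u}$ are nonnegative and sum to $\wh{\alpha}, \ov{\alpha}$ respectively, for any coordinate $k$ we have $|(\wh{u}\wh{V})_k| \le \sum_i \wh{u}_i |\wh{V}_{i,k}| \le \wh{\alpha}\,\|V\|_\infty$, and likewise $\|\ov{u}\ov{V}\|_\infty \le \ov{\alpha}\,\|V\|_\infty$. Substituting these back, both terms collapse to a multiple of $\ov{\alpha}/\alpha$, yielding the clean intermediate bound
\begin{align*}
\|\wh{\mathsf{Attn}}_s(q,K,V) - \mathsf{Attn}_s(q,K,V)\|_\infty \le \frac{2\,\ov{\alpha}}{\alpha}\,\|V\|_\infty.
\end{align*}
At this point the whole problem is reduced to controlling the single scalar ratio $\ov{\alpha}/\alpha$.

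The heart of the argument is to show $\ov{\alpha}/\alpha \le n^{\,1-\gamma-(\beta_1-\beta_2)\|q\|_2}$ using the $(\gamma,\beta_1,\beta_2)$ massive activation property with $r = n^\gamma$. For the denominator I would lower bound $\alpha \ge \wh{\alpha}$ and apply Jensen's inequality to the convex function $\exp$: since the average of $\langle q, K_i\rangle$ over $i \in \mathsf{NN}(n^\gamma, q, K)$ is at least $\beta_1 \|q\|_2 \log n$, we obtain $\wh{\alpha} = \sum_{i \in R}\exp(\langle q, K_i\rangle) \ge n^\gamma \exp(\beta_1 \|q\|_2 \log n) = n^{\gamma + \beta_1 \|q\|_2}$. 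For the numerator I would use the per-entry upper bound $\langle q, K_i\rangle \le \beta_2 \|q\|_2 \log n$ on the tail to get $\ov{\alpha} \le n \cdot \exp(\beta_2 \|q\|_2 \log n) = n^{1 + \beta_2 \|q\|_2}$. Dividing these estimates gives exactly $n^{1 - \gamma - (\beta_1-\beta_2)\|q\|_2} = 1/n^{\gamma + (\beta_1-\beta_2)\|q\|_2 - 1}$, and combining with the intermediate bound finishes the proof.

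I expect the main obstacle to be the lower bound on $\wh{\alpha}$: the massive activation property controls only the \emph{average} of the inner products $\langle q, K_i\rangle$, whereas $\wh{\alpha}$ is a sum of \emph{exponentials}, so one must pass through convexity (Jensen) rather than manipulate the bound directly. Care is also needed to keep every inequality oriented consistently --- lower bounding $\alpha$ while upper bounding $\ov{\alpha}$ --- so that $\ov{\alpha}/\alpha$ is genuinely upper bounded; a single sign slip there would flip the final exponent.
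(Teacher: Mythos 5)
Your proposal is correct and follows essentially the same route as the paper: you first re-derive the paper's intermediate bound $\frac{2\ov{\alpha}}{\alpha}\|V\|_\infty$ (Lemma~\ref{lem:Softmax_general_err_bound}) via the same decomposition, triangle inequality, and $\|uV\|_\infty \le \|u\|_1\|V\|_\infty$ argument, and then control $\ov{\alpha}/\alpha$ exactly as the paper does, using Jensen's inequality for the lower bound $\wh{\alpha} \ge n^{\gamma+\beta_1\|q\|_2}$ and the per-entry tail bound for $\ov{\alpha} \le n^{1+\beta_2\|q\|_2}$. Your closing remark about needing Jensen (since the massive activation property only constrains the average of the inner products) is precisely the point the paper's proof also invokes.
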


Theorem~\ref{thm:err_analysis_of_Softmax_attn_with_index_set:informal} presents the error of Softmax attention with index set is relatively small.
Consequently, omitting the remaining less significant entries is a justifiable compromise.
\begin{remark}
With mild assumptions on $V$, we can have more precious results from Theorem~\ref{thm:err_analysis_of_Softmax_attn_with_index_set:informal}. For example, if the entries in $V$ conform to subgaussian distribution with constant variance, 
we have $\|V\|_\infty = O(\log(n))$ with high probability. 
\end{remark}
\section{Extension on Prompt Prefilling} \label{sec:extension}

In this section, we extend our results to prompt prefilling scenarios where the number of queries and keys is proportional, i.e., $m = \Theta(n)$. 
For ReLU attention, we leverage Part 1 result of Corollary~\ref{cor:hsr_running_time:informal} to accelerate the identification of non-zero entries (activated entries). We introduce our result on ReLU attention as follows:

\begin{algorithm}[!ht] 
\caption{
Prompt Prefilling
} 
\label{alg:calculation_general_framework}
\begin{algorithmic}[1]
\State {\bf data structure} \textsc{PromptPrefilling} 
\Comment{Lemma~\ref{lem:calculation_general_framework:informal}}
\State {\bf members}
\State \hspace{4mm} \textsc{HalfSpaceReport} \textsc{hsr} \Comment{Part 1 of Corollary~\ref{cor:hsr_running_time:informal}
}
\State {\bf end members}
\State 
\Procedure{Inference}{$\{K_i\}_{i\in [n]}, \{Q_r\}_{r\in [m]}, V, n, m, d$}
\State $b \gets \sigma_a \cdot \sqrt{0.4 \log n}$. \Comment{Threshold of ReLU activation (Lemma~\ref{lem:sparsity_analysis:informal})}
\State \textsc{hsr}.\textsc{Init}($\{K_i\}_{i\in [n]}, n, d$) \Comment{It takes $O(n \log n)$ time}
\State $A \gets {\bf 0}_{m \times n}$
\For{$i = 1 \to m$} \Comment{Loop for $m$ query vectors}
\State $\wt{S}_{i,\mathrm{fire}}\gets \textsc{hsr}.\textsc{Query}(Q_i,b)$ \label{line:preprocess_q:hsr_query} \Comment{It takes $O(n^{1 - 1/\lfloor d/2\rfloor} + \wt{k}_i)$ time.}
\For{$j \in \wt{S}_{i,\mathrm{fire}}$} \label{line:preprocess_q:forloop_Sifire_start} \Comment{Calculate the ReLU attention output according to $\wt{S}_{i,\mathrm{fire}}$}
\State $A_{i, j} \gets \mathsf{ReLU}^{\alpha}( \langle Q_i, K_j \rangle /\sqrt{d} - b)$ or $A_{i, j} \gets \mathsf{Softmax}( \langle Q_i, K_j \rangle /\sqrt{d} )$ \label{line:preprocess_q:cal_Aji}
\EndFor \label{line:preprocess_q:forloop_Sifire_end}\label{line:preprocess_q:forloop_m_end}
\EndFor
\State \Return $D^{-1} A V$ 
\EndProcedure
\State {\bf end data structure}
\end{algorithmic}
\end{algorithm}

\begin{theorem} [Running time of ReLU attention prompt prefilling, informal version of Theorem~\ref{thm:relu_cal_running_time}]
\label{thm:relu_cal_running_time:informal}
Let ReLU attention be defined as Definition~\ref{def:relu_attention}. 
Assume each entry of $K$ is from Gaussian ${\cal N}(0, \sigma_k^2)$, and each entry of $Q$ is from Gaussian ${\cal N}(0, \sigma_q^2)$. 
Let $\delta \in (0, 1)$ denote the failure probability.
Let $\sigma_a = 4 \cdot ( 1 + d^{-1} \log(m / \delta))^{1/2} \cdot \sigma_q \sigma_k$.
Let $b = \sigma_a \cdot \sqrt{0.4 \log n}$.
Suppose we have $Q, K, V \in \R^{n \times d}$. 
There exist an algorithm (Algorithm~\ref{alg:calculation_general_framework}), with probability at least $1 - \delta$,
takes $O(n^{2 - 1 / \lfloor d/2\rfloor} + n^{1+4/5})$ time to compute the full ReLU attention of $Q, K, V$. 
\end{theorem}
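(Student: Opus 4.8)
The plan is to bound the total running time of Algorithm~\ref{alg:calculation_general_framework} by decomposing it into four parts: (i) a single HSR initialization, (ii) $m$ output-sensitive HSR queries, (iii) the per-entry $\mathsf{ReLU}^\alpha$ evaluations over the reported indices, and (iv) the final normalized product $D^{-1} A V$. Correctness (i.e., that the output is exactly the full ReLU attention, with no approximation error) comes for free from the defining property of $A_r$ in Definition~\ref{def:relu_attention}: any entry with $\langle Q_i, K_j\rangle/\sqrt{d} \le b$ satisfies $\mathsf{ReLU}^\alpha(\langle Q_i, K_j\rangle/\sqrt d - b) = 0$. Hence, once the query threshold is aligned (up to the $\sqrt d$ rescaling) so that the call $\textsc{hsr}.\textsc{Query}(Q_i, b)$ returns a set $\wt{S}_{i,\mathrm{fire}}$ equal to the support of row $i$ of $A_r$, skipping every unreported index loses nothing. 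The only quantity that then governs the running time is the total number of reported entries $\sum_{i=1}^m \wt{k}_i$, where $\wt{k}_i := |\wt{S}_{i,\mathrm{fire}}|$.

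To control this quantity I would first invoke the sparsity analysis (Lemma~\ref{lem:sparsity_analysis:informal}). Because every entry of $Q$ and $K$ is Gaussian and the threshold $b = \sigma_a\sqrt{0.4\log n}$ uses $\sigma_a = 4(1 + d^{-1}\log(m/\delta))^{1/2}\sigma_q\sigma_k$, the probability that a fixed key fires for a fixed query is $O(n^{-1/5})$, and the $\log(m/\delta)$ slack inside $\sigma_a$ is precisely what lets a single union bound over all $m$ rows hold: with probability at least $1-\delta$, every row fires at most $\wt{k}_i \le n^{4/5}$ times simultaneously. This yields the uniform bound $\sum_{i=1}^m \wt{k}_i \le m\, n^{4/5}$ on the event of interest.

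Next I would plug in Part 1 of Corollary~\ref{cor:hsr_running_time:informal}: initialization costs $\Tinit(n,d) = O_d(n\log n)$, and each query costs $\Tquery(n,d,\wt{k}_i) = O(d\, n^{1 - 1/\lfloor d/2\rfloor} + d\,\wt{k}_i)$. Summing the query cost over $i$, and adding the $O(d\sum_i \wt{k}_i)$ needed for the inner-product and $\mathsf{ReLU}^\alpha$ evaluations and their contribution to $A V$, the $O(\sum_i \wt{k}_i)$ to assemble the diagonal $D$, and the $O(md)$ for the row rescaling by $D^{-1}$, the total is $O_d(n\log n) + O(d\, m\, n^{1-1/\lfloor d/2\rfloor}) + O(d\sum_{i}\wt{k}_i)$. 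Specializing to the full-attention regime $m = \Theta(n)$ with $d = O(1)$ and substituting $\sum_i \wt{k}_i \le m\,n^{4/5} = n^{1+4/5}$, the three terms become $O(n\log n)$, $O(n^{2 - 1/\lfloor d/2\rfloor})$, and $O(n^{1+4/5})$; the first is absorbed, giving the claimed $O(n^{2 - 1/\lfloor d/2\rfloor} + n^{1+4/5})$ with probability at least $1-\delta$.

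The main obstacle is the \emph{output-sensitive} nature of the analysis: both the query cost $\Tquery(n,d,\wt{k}_i)$ and the entry-evaluation cost scale with the realized $\wt{k}_i$, so the running time is only as strong as a worst-case sparsity bound that holds for all $m$ rows at once. Everything therefore hinges on the high-probability uniform guarantee $\max_i \wt{k}_i \le n^{4/5}$ from Lemma~\ref{lem:sparsity_analysis:informal}, and the delicate point is ensuring the failure probability $\delta$ and the $\log(m/\delta)$ factor in $\sigma_a$ are matched so that one union bound suffices; once that is in place, the remaining aggregation is routine arithmetic.
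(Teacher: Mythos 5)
Your proposal is correct and follows essentially the same route as the paper: the same four-part decomposition of the running time of Algorithm~\ref{alg:calculation_general_framework} (HSR initialization, the $m$ output-sensitive queries via Part~1 of Corollary~\ref{cor:hsr_running_time:informal}, the per-entry evaluations, and the $D^{-1}AV$ assembly), controlled by the uniform high-probability bound $\wt{k}_i = O(n^{4/5})$ from Lemma~\ref{lem:sparsity_analysis:informal} and then specialized to $m = \Theta(n)$. The only differences are cosmetic—you make explicit the exactness of the output and the absorption of the $O(n\log n)$ initialization term, both of which the paper leaves implicit.
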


In Theorem~\ref{thm:relu_cal_running_time:informal}, we improve the running time of ReLU attention prompt prefilling from $O(n^2)$ to $O(n^{2 - 1 / \lfloor d/2\rfloor} + n^{1+4/5})$, which is a notable uplift of the running time when $n$ is extremely large.

Then, we present our result on Softmax attention. Intuitively, we use the Part 1 result of Corollary~\ref{cor:hsr_running_time:informal} to identify those ``massive activated'' entries (top-$r$ indices) within the attention matrix of Softmax attention and calculate the Softmax attention with top-$r$ indices. We organize our results as follows:
\begin{theorem} [Running time of Softmax attention prompt prefilling, informal version of Theorem~\ref{thm:Softmax_attention_computation}] \label{thm:Softmax_attention_computation:informal}
Let $Q \in \R^{m \times d}$, $K, V \in \R^{n \times d}$ and the Softmax attention $\mathsf{Attn}_s$ be defined in Definition~\ref{def:Softmax_attention}.
Let $\mathsf{NN}(r, q, K) \subseteq [n]$ 
and the Softmax attention with index set $\wh{\mathsf{Attn}}_s$ be defined as Definition~\ref{def:top_r_softmax_attention}. 
We choose the threshold $b \in \R$ in Algorithm~\ref{alg:calculation_general_framework} such that $R = \mathsf{NN}(n^{4/5}, q, K)$. 
Then, we have
the Softmax attention with index set $\wh{\mathsf{Attn}}_s$ achieves
outstanding running time under Softmax attention prompt prefilling scenario: Suppose we have $m = \Theta(n)$. 
Algorithm~\ref{alg:calculation_general_framework} (replacing ReLU attention with Softmax attention) takes $O(n^{2 - 1 / \lfloor d/2\rfloor} + n^{1+4/5})$ time to compute the full ReLU attention of $Q, K, V$. 
\end{theorem}

Theorem~\ref{thm:Softmax_attention_computation:informal} demonstrates our $O(n^{2 - 1 / \lfloor d/2\rfloor} + n^{1+4/5})$ running time on Softmax attention prompt prefilling, which improves from naive running time $O(n^2)$. 

\section{Technical Overview} \label{sec:tech_overview}

Section~\ref{sec:tech_overview:sparsity_analysis} introduces our analysis of the sparsity in the ReLU attention mechanism.
Section~\ref{sec:tech_overview:general_attention_framework} presents our results of two general attention frameworks.
Section~\ref{sec:tech_overview:error_analysis_of_softmax_attn_with_index_set} provides our error analysis of Softmax attention with index set. We have shown that with mild assumption on the distribution of attention scores, the error of Softmax attention with index set is negligible. 

\subsection{Sparsity Analysis of ReLU Attention} \label{sec:tech_overview:sparsity_analysis}

Intuitively, the ReLU activation will deactivate some key and query pairs. 
We introduce the results of employing the concentration inequality to quantitatively analyze the number of non-zero entries.

\begin{lemma} [Sparsity analysis, informal version of Lemma~\ref{lem:sparsity_analysis}] 
\label{lem:sparsity_analysis:informal}
Let the ReLU attention be defined as Definition~\ref{def:relu_attention}. 
Let $Q \in \R^{m \times d}$ and $K, V \in \R^{n \times d}$ be defined as Definition~\ref{def:relu_attention}. 
Let $b \in \R$ denote the threshold of ReLU activation, as defined in Definition~\ref{def:relu_attention}. 
For $i \in [m]$, let $\wt{k}_i$ denote the number of non-zero entries in $i$-th row of $A \in \R^{m \times n}$.
Assume each entry of $K$ is from Gaussian ${\cal N}(0, \sigma_k^2)$, and each entry of $Q$ is from Gaussian ${\cal N}(0, \sigma_q^2)$. 
Let $\delta \in (0, 1)$ denote the failure probability.
Let $\sigma_a = 4 \cdot ( 1 + d^{-1} \log(m / \delta))^{1/2} \cdot \sigma_q \sigma_k$.
Let $b = \sigma_a \cdot \sqrt{0.4 \log n}$. 
Then, we have, 
with probability at least $1 - \delta$, for all $i \in [m]$, the number of non-zero entries of the $i$-th row $\wt{k}_i$
is at most $2 n^{4/5}$. 
\end{lemma}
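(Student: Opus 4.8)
The plan is to characterize exactly when an entry fires and then count fires via a two-stage randomness argument, conditioning first on the query row and then on the keys. An entry $A_{i,j}$ is non-zero precisely when $\mathsf{ReLU}^{\alpha}(\langle Q_i, K_j\rangle/\sqrt{d} - b) > 0$, i.e.\ when $\langle Q_i, K_j\rangle/\sqrt{d} > b$, so $\wt{k}_i = \sum_{j=1}^n \mathbbm{1}[\langle Q_i, K_j\rangle/\sqrt{d} > b]$. The key structural observation is that, conditioned on the fixed vector $Q_i$, each $\langle Q_i, K_j\rangle = \sum_{\ell=1}^d Q_{i,\ell} K_{j,\ell}$ is a linear combination of the independent Gaussian coordinates of $K_j$, hence $\langle Q_i, K_j\rangle/\sqrt{d} \sim \N(0, \|Q_i\|_2^2 \sigma_k^2 / d)$; moreover these are independent across $j \in [n]$ because the rows $K_j$ are independent. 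Thus, conditioned on $Q_i$, the count $\wt{k}_i$ is a sum of $n$ i.i.d.\ Bernoulli indicators.

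First I would control the conditional variance. Since $\|Q_i\|_2^2/\sigma_q^2 \sim \chi^2_d$, a standard chi-squared tail bound (e.g.\ $\Pr[\chi^2_d \ge d + 2\sqrt{dt} + 2t] \le e^{-t}$) with $t = \log(m/\delta)$ gives, after the elementary inequality $2\sqrt{dt} \le d + t$, that $\|Q_i\|_2^2 \le 16\sigma_q^2(d + \log(m/\delta))$ with probability at least $1 - \delta/m$. This is exactly the event under which the conditional standard deviation satisfies $\|Q_i\|_2 \sigma_k/\sqrt{d} \le \sigma_a$, which is the whole purpose of the constant $4$ (equivalently $16$ in the variance) baked into $\sigma_a$. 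On this event, each fire probability is a Gaussian tail that only shrinks as the true standard deviation drops below $\sigma_a$, so $\Pr[\langle Q_i, K_j\rangle/\sqrt{d} > b \mid Q_i] \le \Pr[\N(0,1) > b/\sigma_a] = \Pr[\N(0,1) > \sqrt{0.4\log n}] \le e^{-0.2\log n} = n^{-1/5}$, using $b/\sigma_a = \sqrt{0.4\log n}$ and the bound $\Pr[\N(0,1) > s] \le e^{-s^2/2}$. Hence $\E[\wt{k}_i \mid Q_i] \le n \cdot n^{-1/5} = n^{4/5}$.

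Next I would apply a multiplicative Chernoff bound to the conditionally independent Bernoulli sum. With conditional mean $\mu_i \le n^{4/5}$ and target $2n^{4/5} \ge 2\mu_i$, the standard estimate $\Pr[\mathrm{Bin}(n,p) \ge t] \le e^{-\mu}(e\mu/t)^{t}$ is increasing in $\mu$ for $\mu < t$, so it is maximized at $\mu_i = n^{4/5}$, yielding $\Pr[\wt{k}_i \ge 2 n^{4/5} \mid Q_i] \le (e/4)^{n^{4/5}}$, which is doubly negligible for large $n$. Finally I would combine the two sources of failure: a union bound over $i \in [m]$ on the query-norm event costs $\delta$ (this is where the $m/\delta$ inside $\sigma_a$ is consumed), while a union bound over the $m$ Chernoff failures contributes only $m(e/4)^{n^{4/5}}$, absorbed for $n$ large; integrating out $Q_i$ preserves these unconditional bounds. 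Together these give $\wt{k}_i \le 2n^{4/5}$ simultaneously for all $i \in [m]$ with probability at least $1 - \delta$.

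The main obstacle is the careful bookkeeping of the two-stage conditioning and the probability budget: one must verify that the Gaussian-tail computation is valid uniformly over the realized query norm (so that $\sigma \le \sigma_a$ really does upper bound every fire probability), and that the chi-squared slack is precisely what the constant in $\sigma_a$ was chosen to absorb. The Chernoff step itself is routine and extremely loose here, so the delicate part is matching the parameter $\sigma_a$ to the concentration of $\|Q_i\|_2$ rather than to the concentration of $\wt{k}_i$.
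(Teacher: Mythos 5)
Your proposal is correct and follows essentially the same route as the paper's proof: condition on the query row, bound $\|Q_i\|_2$ via the chi-square tail with the constant in $\sigma_a$ chosen to absorb exactly that slack, bound each fire probability by the Gaussian tail $e^{-b^2/(2\sigma_a^2)} = n^{-1/5}$, concentrate the resulting Bernoulli sum, and finish with a union bound over the $m$ rows. The only substantive difference is that you use a multiplicative Chernoff bound where the paper invokes Bernstein's inequality (both yield a $(e/4)^{n^{4/5}}$-type failure term), and you are in fact slightly more careful than the paper in noting that the fire probability is monotone in the conditional standard deviation, so the event $\|Q_i\|_2\sigma_k/\sqrt{d} \le \sigma_a$ genuinely upper-bounds every tail.
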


In Lemma~\ref{lem:sparsity_analysis:informal}, we use $\wt{k}_i$ to denote the number of non-zero entries in $i$-th row of attention matrix $A_r \in \R^{m \times n}$. 
It indicates that if we choose $b = \sigma_a \sqrt{0.4 \log n}$, with high probability, the number of activated (non-zero) entries can be bounded by $O(n^{4/5})$. 

\subsection{General Attention Frameworks} \label{sec:tech_overview:general_attention_framework}

First, we introduce our general framework for generation decoding. 
Here, we use the Part 2 result of the HSR data structure. 
As this framework is designed for the generation decoding task, the key matrix $K$ is fixed in each inference.
Therefore, in the \textsc{Init} procedure, we initialize the HSR data structure with the key matrix $K$.
Then, in each inference, we use the same HSR data structure to answer the query from each row of the query matrix $Q$.
We provide the result of this general generation decoding framework as follows. 
\begin{lemma} [General generation decoding framework, informal version of Lemma~\ref{lem:generation_general_framework}] \label{lem:generation_general_framework:informal}
Let $Q \in \R^{m \times d}$ and $K, V \in \R^{n \times d}$ be defined as Definition~\ref{def:relu_attention}.
Assume each entry of $K$ is from Gaussian ${\cal N}(0, \sigma_k^2)$, and each entry of $Q$ is from Gaussian ${\cal N}(0, \sigma_q^2)$.
Let $\delta \in (0, 1)$ denote the failure probability.
Let $\sigma_a = 4 \cdot ( 1 + d^{-1} \log(m / \delta))^{1/2} \cdot \sigma_q \sigma_k$.
Let $b = \sigma_a \cdot \sqrt{0.4 \log n}$.
Let \textsc{hsr} data structure be defined as Part 2 in Corollary~\ref{cor:hsr_running_time:informal}.
There exists an algorithm
(Algorithm~\ref{alg:relu_attn_gen}), with at least $1 - \delta$ probability, has the following performance:
\begin{itemize}
    \item {\bf Part 1.} The \textsc{Init} procedure runs in $O(n^{\lfloor d/2\rfloor})$ time. 
    \item {\bf Part 2.} For each query, the \textsc{Inference} procedure runs in $O(m n^{4/5} )$ time. 
\end{itemize}
\end{lemma}

The general framework for prompt prefilling is quite different from the previous one.
Namely, we choose the Part 1 result of the HSR data structure.
Since in each inference, both the query matrix $Q$ and the key matrix $K$ differ from any other inference, we first initialize the HSR data structure with the key matrix $K$. 
Then, for each row of the query matrix $Q$, we query the HSR data structure to find the activated entries.
\begin{lemma} [General prompt prefilling framework, informal version of Lemma~\ref{lem:calculation_general_framework}] \label{lem:calculation_general_framework:informal}
Let $Q \in \R^{m \times d}$ and $K, V \in \R^{n \times d}$ be defined as Definition~\ref{def:relu_attention}.
Assume each entry of $K$ is from Gaussian ${\cal N}(0, \sigma_k^2)$, and each entry of $Q$ is from Gaussian ${\cal N}(0, \sigma_q^2)$. 
Let $\delta \in (0, 1)$ denote the failure probability.
Let $\sigma_a = 4 \cdot ( 1 + d^{-1} \log(m / \delta))^{1/2} \cdot \sigma_q \sigma_k$.
Let $b = \sigma_a \cdot \sqrt{0.4 \log n}$.
Let \textsc{hsr} data structure be defined as Part 1 in Corollary~\ref{cor:hsr_running_time:informal}. 
There exists an algorithm (Algorithm~\ref{alg:calculation_general_framework}), with at least $1 - \delta$ probability, computes full attention of $Q, K, V$ in $O(m n^{1 - 1/\lfloor d/2\rfloor} + m n^{4/5})$ time. 
\end{lemma}

Then, we use the following Remark to demonstrate the different intuitions behind Lemma~\ref{lem:generation_general_framework:informal} (Algorithm~\ref{alg:relu_attn_gen}) and Lemma~\ref{lem:calculation_general_framework:informal} (Algorithm~\ref{alg:calculation_general_framework}). 
\begin{remark} \label{rem:difference_of_alg2_and_alg3}
Algorithm~\ref{alg:relu_attn_gen} is tailored for generation decoding scenarios, where the key matrix $K$ remains constant throughout each inference. Consequently, our optimization efforts are directed at decreasing the time required for individual inferences, which is achieved by adopting Part 2 of Corollary~\ref{cor:hsr_running_time:informal}. In contrast, Algorithm~\ref{alg:calculation_general_framework} is intended for prompt prefilling, a context in which the key matrix $K$ varies with each inference. Thus, our objective shifts to minimizing the time complexity associated with initializing the HSR data structure, leading us to select Part 1 of Corollary~\ref{cor:hsr_running_time:informal}.
For more details, please refer to Figure~\ref{fig:difference_of_alg2_and_alg3}. 
\end{remark}

\subsection{Error Analysis of Softmax Attention with Top-\texorpdfstring{$r$}{} Indices} \label{sec:tech_overview:error_analysis_of_softmax_attn_with_index_set}

Calculating the Softmax attention on the ``massive activated'' index set will introduce an approximation error.
In the following Lemma, we analyze the quantity of this approximation error.
Here, we use $\alpha$ to denote the summation of all entries activated by $\exp(x)$ function, and we use $\ov{\alpha}$ to denote the summation of those entries which are excluded from ``massive activated'' index set.
We provide the general error bound of Softmax attention with an index set as follows:
\begin{lemma}
[General error analysis of Softmax attention with index set, informal version of Lemma~\ref{lem:Softmax_general_err_bound}] 
\label{lem:Softmax_general_err_bound:informal}
Let $Q \in \R^{m \times d}$, $K, V \in \R^{n \times d}$ and the Softmax attention $\mathsf{Attn}_s$ be defined in Definition~\ref{def:Softmax_attention}.
Let $q \in \R^d$ denote a single row of $Q \in \R^{m \times d}$. 
Let $\alpha, \ov{\alpha}$ and $\wh{\mathsf{Attn}}_s$ be defined as Definition~\ref{def:top_r_softmax_attention}. 
Then, we have
\begin{align*}
     \| \mathsf{Attn}_s(q, K, V) -  \wh{\mathsf{Attn}}_s(q, K, V) \|_{\infty} \le  \frac{2 \ov{\alpha}}{\alpha}  \cdot \| V  \|_{\infty}.
\end{align*}
\end{lemma}

Note that Lemma~\ref{lem:Softmax_general_err_bound:informal} only provides a general error analysis of Softmax attention with an index set. 
Under mild assumption on the distribution of attention scores, we show that this error is actually very small. For more details, please refer to Theorem~\ref{thm:err_analysis_of_Softmax_attn_with_index_set:informal}.

\section{Experiments} \label{sec:experiments}

In this section, we present our empirical results of evaluating three mainstream LLMs with Softmax attention with top-$r$ indices on different $r$, showing that the results of the experiments are consistent with our theoretical analysis. 

\paragraph{Datasets.} 
To estimate the approximation error of the Softmax attention with ``massive activation'' entries, we conduct experiments on the PaulGrahamEssays datasets from LLMTest-NeedleInAHaystack \citep{llm_needle_test}. Specifically, for each article in the dataset, we first input $2^{15} = 32768$ tokens to the LLMs, then generate $1024$ tokens. 

\paragraph{Metric.} We evaluate the generation quality by the classical perplexity.
Perplexity is defined as the exponentiated average negative log-likelihood of a sequence. If we have a tokenized sequence $X = (x_0, x_1, \cdots, x_N)$, then the perplexity of $X$ is:
$
    \mathrm{Perplexity}(X) = \exp({-\frac{1}{N}\sum_{i=1}^{N}\log p_\theta(x_i |x_{<i} )}),
$
where $\log p_\theta(x_i |x_{<i} )$ is the log-likelihood of the $i$-th token conditioned on the preceding tokens. 
Intuitively, it can be thought of as an evaluation of the model’s ability to predict uniformly among the set of specified tokens in a corpus. 
Importantly, the tokenization procedure has a direct impact on a model’s perplexity, which should be taken into consideration when comparing different models.

\paragraph{Models.} To demonstrate the generalization of our approximation error bound, we conducted experiments on three mainstream large models: LLaMA 3.1 8B Instruct\footnote{\scriptsize\url{https://huggingface.co/meta-llama/Meta-Llama-3.1-8B-Instruct}} \citep{llama3}, Mistral Nemo 12B Instruct\footnote{\scriptsize\url{https://huggingface.co/mistralai/Mistral-Nemo-Base-2407}} \citep{mistral_nemo}, and Phi 3.5 Mini 3.8B Instruct\footnote{\scriptsize\url{https://huggingface.co/microsoft/Phi-3.5-mini-instruct}} \citep{phi3}.

\paragraph{Results.} 
The experiments are conducted in the setting discussed in previous paragraphs.
We evaluated the performance of three mainstream LLMs using Softmax attention with top-$r$ indices. In particular, we chose $r$ from the set $\{2^2, 2^4, 2^6, 2^8, 2^{10}, 2^{12}, 2^{15}\}$. As depicted in Figure~\ref{fig:top_r_softmax_attn}, a significant increase in the perplexity (drop in performance) of LLMs is observed only when $r$ falls below $2^4$. This suggests that the ``massive activated'' tokens are predominantly found within the top-$2^4$ entries. In comparison to the total of $2^{15}$ entries, the ``massive activated'' entries constitute a relatively minor fraction. The observed results align with our theoretical analysis, confirming the approximation error of the Softmax attention with top-$r$ indices is negligible for larger values of $r$.

\begin{figure*}[!ht]
\centering
\includegraphics[width=1.0\textwidth]{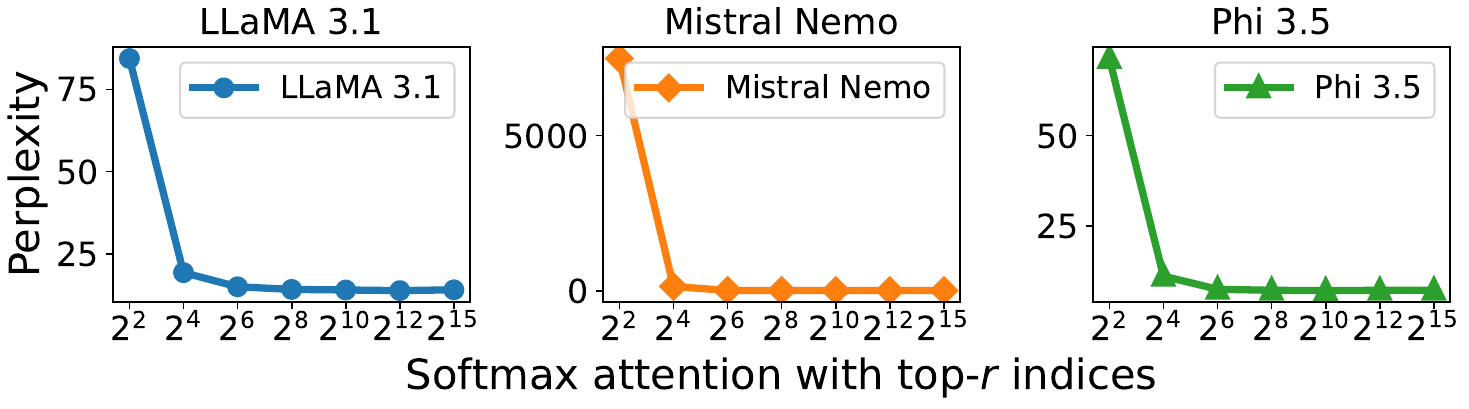}
\caption{
We evaluated the perplexity of three mainstream language models: LLaMA 3.1 8B Instruct, Mistral Nemo 12B, and Phi 3.5 Mini 3.8B Instruct,  using Softmax attention with top-$r$ indices on the PaulGrahamEssays dataset. 
The results
indicate a significant increase in perplexity only when the number of selected entries, $r$, falls below $2^4$. This observation aligns with our earlier findings that the proportion of ``massive activated'' entries is minimal compared to the total number of entries. Furthermore, the approximation error introduced by using top-$r$ indices in Softmax attention remains negligible unless $r$ becomes excessively small.
}
\label{fig:top_r_softmax_attn}
\end{figure*}

\section{Discussion and Future Work} \label{sec:discussion}

The sparsity within neural networks arises primarily from the incorporation of non-linear activation functions. 
These non-linear functions determine the mechanism or circuit of the neural networks~\citep{oen+22}. 
Gaining insight into these non-linear layers not only enhances our understanding of how neural networks work but also paves the way for optimizing training and inference. 
We believe our analysis may inspire efficient neural network architecture design. 
This work represents the initial point of this envisioned blueprint. 
We concentrate on analyzing the combinations of LLMs and fundamental non-linear activation functions, ReLU and Softmax. 
By analyzing these functions, we aim to demonstrate to the research community that a thorough examination of a model's nonlinear characteristics can significantly enhance neural networks' running-time complexity.

In real-world scenarios, various non-linear activation functions exist beyond ReLU and Softmax, such as $\mathsf{SELU}(x) = \mathrm{scale} \cdot (\max (0, x) + \min(0, \alpha \cdot (\exp(x) - 1)))$ \citep{kgmh17}, $\mathsf{CELU}(x) = \max (0, x) + \min(0, \alpha \cdot (\exp(x / \alpha) - 1))$ \citep{b17}, and $\mathsf{PRELU}(x) = \max (0, x) + \mathrm{weight} \cdot \min(0,x)$ \citep{hzrs15}. 
Analyzing these alternative functions poses multiple challenges. We will explore these additional functions in the future.

\section{Conclusion}\label{sec:conclusion}

This work investigates the exploitation of the intrinsic sparsity present in both ReLU and Softmax attention mechanisms to decrease the computational complexity of generation decoding and prompt prefilling scenarios. We employ the HSR data structure to accelerate the process of identifying non-zero or ``massive activated'' entries within ReLU and Softmax attentions. Our approach results in only a negligible approximation error for Softmax attention.

\ifdefined\isarxiv
\section*{Acknowledgement}
Research is partially supported by the National Science Foundation (NSF) Grants 2023239-DMS, CCF-2046710, and Air Force Grant FA9550-18-1-0166.
\bibliographystyle{alpha}
\bibliography{ref}
\else
\section*{Acknowledgement}
Research is partially supported by the National Science Foundation (NSF) Grants 2023239-DMS, CCF-2046710, and Air Force Grant FA9550-18-1-0166.
\bibliography{ref}

\fi

\newpage
\onecolumn
\appendix

\begin{center}
    \textbf{\LARGE Appendix}
\end{center}

% \section*{Appendix}

{\bf Roadmap.} 
In Section~\ref{sec:app:limitation}, we discuss some limitations of our work. 
In Section~\ref{sec:app_preli}, we introduce more fundamental lemmas and facts. 
In Section~\ref{sec:app_relu_attn_calculation}, we extend the analysis to ReLU attention calculation, demonstrating improved performance over standard attention computation under specific conditions.
In Section~\ref{sec:app_relu_attn_generation}, we first introduce and analyze the time complexity of ReLU attention generation using half-space reporting (HSR) data structures. 
In Section~\ref{sec:app_sparsity}, we analyze the sparsity of ReLU attention matrices. 
In Section~\ref{sec:app:running_time_of_softmax_attention}, we introduce our results on reducing the running time of Softmax attention. 
In Section~\ref{sec:app_error_Softmax}, we analyze error bounds for Softmax attention with index sets, balancing efficiency and accuracy.

% \begin{wraptable}{r}{5.5cm}
\begin{table}[h!]
% \multirow{2}{*}{Speedup}
\centering
\begin{tabular}{c | c c } 
 \hline
 \toprule
 Sequence & Activated  & Sparsity \\  
 length & entries & ratio \\  
 % \hline \hline
 \midrule
 $1$k & 251 & 0.75 \\
$2$k & 437 & 0.78 \\
$4$k & 761 & 0.81 \\
$8$k & 1325 & 0.83 \\
$16$k & 2308 & 0.86 \\
$32$k & 4019 & 0.87 \\
$64$k & 6997 & 0.89 \\
$128$k & 12183 & 0.90 \\
$256$k & 21212 & 0.92 \\
$512$k & 36933 & 0.93 \\
$1024$k & 64304 & 0.94 \\
 \bottomrule
 \hline
\end{tabular}
\caption{
An illustration of the sparsity level attained by our algorithm across varying sequence lengths, $n$. Our approach activates merely $n^{4/5}$ entries per inference, resulting in a computational savings of up to $90\%$ when $n = 1024$k.
}
\label{tab:sparsity_example}
\end{table}
% \end{wraptable}

\section{Limitations} \label{sec:app:limitation}

Our work is fundamentally theoretical in nature. We concentrate on the theoretical analysis of our proposed algorithms (Algorithms \ref{alg:relu_attn_gen} and \ref{alg:calculation_general_framework}). Our study does not include an implementation of the suggested algorithms, a limitation that arises from the absence of an existing implementation for the original HSR data structure, initially proposed in \cite{aem92}. We are confident that our theoretical insights will inspire the development of future algorithmic designs.

\section{Full Background and Definition}\label{sec:app_preli}
In this section, we display more fundamental concepts. 
In Section~\ref{sec:preliminary:Softmax_attn}, we introduce a modified version of Softmax attention that operates on a specific subset of indices. It defines the top-$r$ nearest neighbors Softmax attention, which focuses on the most relevant entries in the attention matrix. In Section~\ref{sec:preliminary:Softmax_attn_extension:massive_activation}, we describe the massive activation property for attention mechanisms. 
In Section~\ref{sub:app_preli:probability_tools}, we introduce several important probability properties and bounds. In Section~\ref{sub:app_preli:hsr_data_structure}, we detail the time complexity and performance of half-space reporting (HSR) data structures.

\subsection{Softmax Attention with Index Set} \label{sec:preliminary:Softmax_attn}

Recall that we have already provided the definition of ReLU attention in Definition~\ref{def:relu_attention}. Here, we present the key concepts of Softmax attention. 
For Softmax attention, since we only calculate the ``massive activated'' entries to get our approximated results, we introduce the formal definition: 
\begin{definition}
[Input with index set]
\label{def:input_index}
Let $K \in \R^{n \times d}$ and $V \in \R^{n \times d}$ be defined in Definition~\ref{def:Softmax_attention}. 
Let $R \subseteq [n]$ be an index set of size $|R| = r \in [n]$.  
Let $\ov{R}:= [n] \setminus R$ be the complementary set, where $|\ov{R}| = n-r$.
We define 
\begin{align*}
    \wh{K} := K_{R} \in \R^{r \times d} \quad \wh{V} := V_{R} \in \R^{r \times d} \quad \ov{K}:= K_{\ov{R}} \in \R^{(n-r) \times d} \quad \ov{V}:= V_{\ov{R}} \in \R^{(n-r) \times d}
\end{align*}
as the submatrix of $K$ and $V$, i.e., whose row index is in $R$ or $\ov{R}$, respectively.  
\end{definition}

We consider calculating the Softmax attention on the ``massive activation'' index set, where we define the ``massive activation'' index set as the top-$r$ indices, where the formal definition is as follows:
\begin{definition}[
Top-$r$ indices Softmax attention 
]\label{def:top_r_softmax_attention}
Let $q \in \R^d$, $K, V \in \R^{n \times d}$  be defined in Definition~\ref{def:Softmax_attention}.
Let $\mathsf{NN}(r, q, K) \subseteq [n]$ denote the indices of top-$r$ entries of $q K$, where $|\mathsf{NN}(r, q, K)| = r$. 
Let $\wh{K}, \wh{V} \in \R^{r \times d}$ and $\ov{K}, \ov{V} \in \R^{(n-r) \times d}$ be defined in Definition~\ref{def:input_index}.
We define the top-$r$ nearest neighbors (NN) Softmax attention computation $\wh{\mathsf{Attn}}_s(q, K, V) \in \R^{d}$ as follows:
\begin{align*}
    \wh{\mathsf{Attn}}_s(q, K, V) := \mathsf{Softmax}(q \wh{K}^\top) \wh{V} = \wh{\alpha}^{-1} \wh{u} \wh{V} \in \R^{d}
\end{align*}
where
\begin{align*}
    \wh{u} := \exp( q \wh{K}^\top) \in \R^r \quad \text{and} \quad \wh{\alpha} := \langle \wh{u}, {\bf 1}_{r} \rangle \in \R.
\end{align*}
Furthermore, we define $\ov{u} := \exp( q \ov{K}^\top) \in \R^{n-r}$, $\ov{\alpha} := \langle \ov{u}, {\bf 1}_{n-r} \rangle \in \R$, and $u := \exp( q K^\top) \in \R^{n+1} $, $\alpha := \langle u, {\bf 1}_{n+1} \rangle \in \R$. 
\end{definition}

In Definition~\ref{def:top_r_softmax_attention}, we view the ``massive activated'' entries as the top-$r$ entries. Therefore, we only calculate the Softmax attention based on $\wh{K}, \wh{V} \in \R^{r \times d}$, instead of $K, V \in \R^{n \times d}$.

\subsection{Massive Activation} \label{sec:preliminary:Softmax_attn_extension:massive_activation}
Now, we introduce our observations on the properties of the attention scores (the inner products of query vectors and key vectors). 
This further facilitates the error analysis of the top-$r$ indices Softmax attention. To begin with, we provide the definition of the massive activation property as follows: 
\begin{definition}[Massive activation property]\label{def:massive}
Let $\gamma \in [0,1]$, $\beta_1 \ge \beta_2 \ge 0$.
 Let $\mathsf{NN}(r, q, K) \subseteq [n]$ denote the indices of top-$r$ entries of $q K^\top$.
We define $(\gamma, \beta_1, \beta_2)$ massive activation for a query $q \in \R^d$ and key cache $K \in \R^{n \times d}$, if the following conditions hold:
\begin{itemize}[leftmargin=*]
    \item The top-$n^\gamma$ entries are massive, i.e., 
    $\frac{1}{n^\gamma \cdot \|q\|_2} \sum_{i\in \mathsf{NN}(n^\gamma, q, K)} \langle q, K_i \rangle  \ge  \beta_1 \log(n) $. 
    \item The remaining terms are upper bounded, i.e, 
    $\forall i \in [n] \setminus \mathsf{NN}(n^\gamma, q, K)$, $ \frac{1}{\|q\|_2} \langle q, K_i \rangle \le \beta_2 \log(n)$. 
\end{itemize}

\end{definition}

An intuitive understanding of Definition~\ref{def:massive} is that the summation of ``massive activated'' entries dominates the summation of all entries, and the entries we ignored only contribute little to the final summation. Therefore, it is reasonable for us to omit those non ``massive activated'' entries. 

\begin{remark}
There are many distributions satisfying the property in Definition~\ref{def:massive}, such as
(1) $K$ drawing from any subexponential distribution, e.g., multivariate Laplace distributions, (2) $K$ drawing from any mixture of Gaussian distribution with $n^{1-\gamma}$ Gaussian clusters. 
\end{remark}

\subsection{Probability Tools}\label{sub:app_preli:probability_tools}
We state several fundamental properties and bounds for some common distributions.
\begin{fact} [Weighted summation of Gaussian] \label{fact:Gaussian_weighted_sum}
If the following conditions hold:
\begin{itemize}
    \item Let $x \in \R^d$ be a fixed vector and $y \in \R^d$ be a random vector. 
    \item For $i \in [d]$, let $x_i$ denote the $i$-th entry of $x$. 
    \item Suppose for $i \in [d]$, $y_i \sim \N(0, \sigma^2)$. 
\end{itemize}

Then the inner product of $x$ and $y$, $\langle x, y \rangle$ conforms Gaussian distribution $\N(0, \| x\|_2^2 \sigma^2)$. 
Namely, we have $\langle x, y \rangle \sim \N(0, \| x\|_2^2 \sigma^2)$. 
\end{fact}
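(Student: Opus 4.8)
The plan is to treat $\langle x, y\rangle = \sum_{i=1}^d x_i y_i$ as a linear combination of the Gaussian coordinates $y_1,\dots,y_d$ and invoke the standard closure of the Gaussian family under linear combinations, then pin down the two parameters by computing the mean and variance directly. First I would record the (implicit) assumption that the coordinates $y_i$ are mutually independent; this is what lets the variance decouple into a sum with no cross terms. Under this assumption, each summand $x_i y_i$ is itself Gaussian, since $x_i$ is a fixed scalar and scaling a centered Gaussian by a constant keeps it centered Gaussian, i.e. $x_i y_i \sim \N(0, x_i^2 \sigma^2)$.

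Next I would compute the two moments. By linearity of expectation, $\E[\langle x,y\rangle] = \sum_{i=1}^d x_i \E[y_i] = 0$. By independence, the variance of the sum is the sum of the variances, so $\var[\langle x,y\rangle] = \sum_{i=1}^d x_i^2 \var[y_i] = \sigma^2 \sum_{i=1}^d x_i^2 = \sigma^2 \|x\|_2^2$. Invoking the fact that an independent linear combination of Gaussians is again Gaussian, these two moments determine the distribution completely, yielding $\langle x,y\rangle \sim \N(0, \|x\|_2^2 \sigma^2)$.

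The cleanest self-contained alternative, which I would use if I wanted to avoid citing the closure property as a black box, is a moment generating function computation: by independence, $\E[\exp(t\langle x,y\rangle)] = \prod_{i=1}^d \E[\exp(t x_i y_i)] = \prod_{i=1}^d \exp(t^2 x_i^2 \sigma^2 / 2) = \exp(t^2 \sigma^2 \|x\|_2^2 / 2)$, which is exactly the MGF of $\N(0, \sigma^2\|x\|_2^2)$; uniqueness of the MGF then finishes the claim. There is no substantive obstacle here, as this is a textbook fact; the only subtlety worth flagging is the independence assumption, without which the variance would acquire covariance cross terms and the stated formula could fail.
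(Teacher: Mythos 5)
Your proof is correct. Note that the paper itself offers no proof of this fact at all: it is stated as a standard ``Fact'' in the preliminaries and invoked as a black box (e.g., in the proof of Lemma~\ref{lem:bernstein_on_k}), so there is no paper argument to compare yours against. Both of your routes---computing the mean and variance and invoking closure of the Gaussian family under independent linear combinations, or the self-contained moment generating function computation $\E[\exp(t\langle x, y\rangle)] = \prod_{i=1}^d \exp(t^2 x_i^2 \sigma^2/2) = \exp(t^2 \sigma^2 \|x\|_2^2/2)$---are standard and complete. Your flag about independence is also a genuinely worthwhile observation: the paper's hypotheses only state the marginal law of each coordinate $y_i$, and without mutual independence (or at least joint Gaussianity with zero cross-covariances) the conclusion can fail, both in the variance formula and in the Gaussianity of the sum itself; the paper implicitly relies on this assumption since $K$ has i.i.d.\ Gaussian entries in all its applications.
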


\begin{fact} [Independence between $\langle x, y_i \rangle$ and $\langle x, y_j \rangle$] \label{fact:yi_yj_independent}
If the following conditions hold:
\begin{itemize}
    \item Let $x \in \R^d$ be a fixed vector. 
    \item Let $y_1, y_2, \cdots y_n \in \R^d$ be $n$ random vectors. 
    \item For any $i, j \in [n], i \neq j$, $y_i$ and $y_j$ are independent.
\end{itemize}

Then, for any $i, j \in [n], i \neq j$, $\langle x, y_i \rangle$ and $\langle x, y_j \rangle$ are independent. 
\end{fact}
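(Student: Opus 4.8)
The plan is to reduce this to the standard measure-theoretic fact that measurable functions of independent random variables are themselves independent. The key observation is that, since $x \in \R^d$ is \emph{fixed}, the map $\phi : \R^d \to \R$ defined by $\phi(y) := \langle x, y \rangle$ is a single deterministic function. Consequently, $\langle x, y_i \rangle = \phi(y_i)$ depends only on the randomness in $y_i$, and $\langle x, y_j \rangle = \phi(y_j)$ depends only on the randomness in $y_j$.

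First I would record the relevant black-box fact from probability theory: if $U$ and $W$ are independent random vectors taking values in $\R^d$, and $f, g : \R^d \to \R$ are Borel-measurable functions, then the real-valued random variables $f(U)$ and $g(W)$ are independent. This is immediate from the definition of independence via $\sigma$-algebras, since $\sigma(f(U)) \subseteq \sigma(U)$ and $\sigma(g(W)) \subseteq \sigma(W)$, and the latter two $\sigma$-algebras are independent by hypothesis. Next I would verify the only technical precondition, namely that $\phi(y) = \langle x, y \rangle$ is Borel-measurable; this holds because $\phi$ is a fixed linear functional on $\R^d$, hence continuous, hence Borel-measurable.

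Finally I would apply the black-box fact with $f = g = \phi$, $U = y_i$, and $W = y_j$, for the arbitrary pair $i, j \in [n]$ with $i \neq j$ that are assumed independent. This yields that $\langle x, y_i \rangle = \phi(y_i)$ and $\langle x, y_j \rangle = \phi(y_j)$ are independent, which is exactly the claim. Since $i$ and $j$ were arbitrary, the conclusion holds for every such pair.

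There is no serious obstacle here: the statement is essentially a restatement of the closure of independence under deterministic postprocessing, and the entire content is the observation that the inner product against a fixed vector is such a postprocessing of a single coordinate of the random input. The only thing worth stating explicitly, to keep the argument self-contained, is the measurability of the linear functional $\phi$; everything else follows from the definition of independence.
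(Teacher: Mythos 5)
Your proof is correct. The paper itself states this as a \textbf{Fact} with no accompanying proof, treating it as standard, and your argument is precisely the canonical justification being implicitly invoked: $\langle x, \cdot \rangle$ is a fixed, continuous (hence Borel-measurable) linear functional, and measurable functions of independent random vectors are independent since $\sigma(\phi(y_i)) \subseteq \sigma(y_i)$. Nothing is missing; your write-up simply makes explicit what the paper leaves unstated.
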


We provide tail bounds for chi-square and Gaussian distributed random variables:

\begin{lemma}[Chi-square tail bound, Lemma 1 in \cite{lm00} 
]\label{lem:chi_square_bound}
    Let $X \sim \mathcal{X}_k^2$ be a chi-squared distributed random variable with $k$ degrees of freedom. Each one has zero means and $\sigma^2$ variance. 
    
    Then, it holds that
    \begin{align*}
        \Pr[X - k\sigma^2 \geq (2\sqrt{k t} + 2t) \sigma^2]
        \leq & ~ \exp{(-t)} \\
        \Pr[k\sigma^2 - X \geq 2\sqrt{k t}\sigma^2]
        \leq & ~ \exp{(-t)}
    \end{align*}
\end{lemma}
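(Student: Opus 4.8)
The plan is to reduce the statement to the standard (unit-variance) chi-square tail bound and then establish that bound by the Cram\'er--Chernoff (exponential-moment) method. Writing $X = \sum_{i=1}^k Y_i^2$ with $Y_i \sim \N(0,\sigma^2)$ independent, I would first factor out the variance: set $Y_i = \sigma Z_i$ with $Z_i \sim \N(0,1)$, so that $X = \sigma^2 W$ where $W = \sum_{i=1}^k Z_i^2 \sim \mathcal{X}_k^2$ is a standard chi-square with $k$ degrees of freedom. Dividing both displayed inequalities through by $\sigma^2$, they become $\Pr[W - k \ge 2\sqrt{kt} + 2t] \le \exp(-t)$ and $\Pr[k - W \ge 2\sqrt{kt}] \le \exp(-t)$, so it suffices to prove these two unit-variance statements.

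Next, I would control the moment generating function of the centered variable $W - k$. Since the $Z_i^2$ are i.i.d.\ with $\E[e^{\lambda Z_i^2}] = (1-2\lambda)^{-1/2}$ for $\lambda < 1/2$, independence gives $\log \E[e^{\lambda(W-k)}] = -\lambda k - \tfrac{k}{2}\log(1-2\lambda)$. The key elementary inequality is $-\log(1-u) - u \le \tfrac{u^2/2}{1-u}$ for $u \in (0,1)$, applied with $u = 2\lambda$; this yields $\log \E[e^{\lambda(W-k)}] \le \frac{\lambda^2 (2k)}{2(1-2\lambda)}$ for $0 < \lambda < 1/2$, exhibiting $W - k$ as sub-gamma with variance factor $\nu = 2k$ and scale parameter $c = 2$. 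For the lower tail I would symmetrically bound $\log \E[e^{-\lambda(W-k)}] = \lambda k - \tfrac{k}{2}\log(1+2\lambda) \le k\lambda^2$ using $u - \log(1+u) \le u^2/2$ for $u \ge 0$, which shows the lower deviations are in fact sub-Gaussian with proxy $2k$ (no scale term).

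Finally I would apply Markov's inequality to $e^{\lambda(W-k)}$ and optimize over $\lambda$. For the upper tail, $\Pr[W - k \ge v] \le \exp\big(-\lambda v + \tfrac{\lambda^2(2k)}{2(1-2\lambda)}\big)$; the standard sub-gamma optimization (equivalently, invoking the generic sub-gamma tail $\Pr[\,\cdot \ge \sqrt{2\nu t} + c t\,] \le e^{-t}$ with $\nu = 2k$, $c = 2$) produces exactly the threshold $\sqrt{4kt} + 2t = 2\sqrt{kt} + 2t$. For the lower tail the sub-Gaussian bound with proxy $2k$ gives $\Pr[k - W \ge \sqrt{4kt}] = \Pr[k - W \ge 2\sqrt{kt}] \le e^{-t}$. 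Scaling back by $\sigma^2$ then recovers the two stated inequalities.

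The main obstacle is the upper-tail optimization: verifying that the Legendre transform of $\tfrac{\lambda^2(2k)}{2(1-2\lambda)}$ evaluated at $v = 2\sqrt{kt}+2t$ is at least $t$ requires the careful sub-gamma bookkeeping rather than a one-line sub-Gaussian argument, because the $(1-2\lambda)^{-1}$ factor diverges as $\lambda \to 1/2$ and is precisely what forces the additive $2t$ term in the threshold. Everything else---the scaling reduction and the two elementary logarithm inequalities---is routine.
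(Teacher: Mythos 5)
Your proposal is correct, and the comparison here is degenerate: the paper does not prove this lemma at all, but imports it verbatim as Lemma~1 of \cite{lm00} (Laurent--Massart). Your argument --- rescaling by $\sigma^2$ to the standard chi-square, the MGF identity $\E[e^{\lambda Z_i^2}]=(1-2\lambda)^{-1/2}$, the inequalities $-\log(1-u)-u\le \frac{u^2/2}{1-u}$ and $u-\log(1+u)\le u^2/2$, and the resulting sub-gamma (upper tail, $\nu=2k$, $c=2$) and sub-Gaussian (lower tail) Chernoff inversions --- is precisely the classical proof of that cited result, so your reconstruction matches the source the paper relies on.
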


\begin{fact} [Gaussian tail bound] \label{fact:gaussian_tail_bound}
Suppose we have a random variable $x \sim \N(\mu, \sigma)$.

Then, for $t \in \R$, we have
\begin{align*}
    \Pr [x \geq \mu + t] \leq \exp(-\frac{t^2}{2 \sigma^2})
\end{align*}
\end{fact}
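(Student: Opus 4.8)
The plan is to use the standard Chernoff (exponential Markov) method. First I would reduce to the standard normal case: setting $z := (x - \mu)/\sigma$, we have $z \sim \N(0,1)$ and $\Pr[x \ge \mu + t] = \Pr[z \ge t/\sigma]$. Writing $s := t/\sigma$, it then suffices to establish $\Pr[z \ge s] \le \exp(-s^2/2)$. Note that the argument below genuinely needs $t \ge 0$ (equivalently $s \ge 0$); this is the interesting regime, since for $t < 0$ the right-hand side $\exp(-t^2/(2\sigma^2))$ is already smaller than $\Pr[x \ge \mu+t] > 1/2$ and the claim would not hold, so I read the statement as implicitly restricting to $t \ge 0$.

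Second, for any parameter $\lambda \ge 0$, I would apply Markov's inequality to the nonnegative random variable $\exp(\lambda z)$:
\begin{align*}
    \Pr[z \ge s] = \Pr[\exp(\lambda z) \ge \exp(\lambda s)] \le \exp(-\lambda s) \cdot \E[\exp(\lambda z)].
\end{align*}
The third and only genuine computation is the moment generating function of a standard Gaussian. I would evaluate $\E[\exp(\lambda z)] = \frac{1}{\sqrt{2\pi}} \int_{\R} \exp(\lambda z - z^2/2) \, \d z$ by completing the square, rewriting the exponent as $-\tfrac{1}{2}(z - \lambda)^2 + \tfrac{1}{2}\lambda^2$, so that the remaining integral is again a normalized Gaussian and the identity $\E[\exp(\lambda z)] = \exp(\lambda^2/2)$ follows. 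Substituting gives $\Pr[z \ge s] \le \exp(\lambda^2/2 - \lambda s)$.

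Finally, I would optimize the free parameter by minimizing the exponent $\lambda^2/2 - \lambda s$ over $\lambda \ge 0$; its derivative $\lambda - s$ vanishes at $\lambda = s$, which is admissible precisely because $s \ge 0$, yielding the minimal exponent $-s^2/2$ and hence $\Pr[z \ge s] \le \exp(-s^2/2)$. Unwinding the substitution $s = t/\sigma$ recovers the claimed bound $\Pr[x \ge \mu + t] \le \exp(-t^2/(2\sigma^2))$. The main obstacle is essentially only the MGF evaluation via completing the square; everything else is the routine Chernoff template together with a one-line optimization, the sole point of care being that Markov's inequality restricts us to $\lambda \ge 0$ and thereby to the regime $t \ge 0$.
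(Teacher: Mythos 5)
Your proposal is correct and follows essentially the same route as the paper's own proof: the exponential-Markov (Chernoff) bound, evaluation of the Gaussian moment generating function by completing the square, and optimization over $\lambda \ge 0$; standardizing to $z = (x-\mu)/\sigma$ first versus working with $y = x - \mu$ directly is an immaterial difference. Your observation that the bound genuinely requires $t \ge 0$ is a valid point of care that the paper's statement (which says ``for $t \in \R$'') glosses over.
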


\begin{proof}

    We can show 
\begin{align}\label{eq:markov_lambda}
    \Pr [x \geq \mu + t] =&~ \Pr [x - \mu \geq t] \notag\\
    =&~ \Pr [e^{x - \mu} \geq e^t] \notag \\
    =&~ \inf_{\lambda \geq 0} \Pr [ e^{\lambda(x - \mu)} \geq e^{\lambda t}] \notag \\
    \leq &~ \inf_{\lambda \geq 0}\frac{\E [e^{\lambda(x - \mu)}]}{e^{\lambda t}}
\end{align}
where the first step, the second step follows from basic algebra, the third step follows from that the inequality holds for any $\lambda >0$, and the fourth step follows from Markov's inequality.

Then we consider the numerator, and we use $y = x - \mu$ to simplify the calculation, we have
\begin{align}\label{eq:expectation_of_e_lambda_y}
    \E [e^{\lambda y}] =&~ \int_\R e^{\lambda y} \frac{e^{-y^2/2\sigma^2}}{\sqrt{2\pi }\sigma} \d y \notag \\
    =&~ \int_\R \frac{e^{-(y-\lambda / \sigma^2)^2 \cdot \frac{1}{2\sigma^2} e^{\lambda^2 \sigma^2 /2}}}{\sqrt{2 \pi \sigma}} \d y \notag \\
    =&~ e^{\frac{\lambda^2 \sigma^2}{2}} \int_\R \frac{e^{-(y-\lambda / \sigma^2) \cdot \frac{1}{2\sigma^2}}}{\sqrt{2 \pi} \sigma}\d y \notag \\
    =&~ e^{\frac{\lambda^2 \sigma^2}{2}}
\end{align}
where the first step follows from the definition of the moment generating function, the second and the third steps follow from basic algebra, and the fourth step follows from the property of the probability density function.

Then we have 
\begin{align*}
    \Pr [x \geq \mu + t] \leq &~ \inf_{\lambda \geq 0} \exp(\frac{\lambda^2 - \sigma^2}{2} - \lambda t) \\
    \leq &~ \exp(- \frac{t^2}{2\sigma^2})
\end{align*}
where the first step follows from Eq.~\eqref{eq:markov_lambda} and Eq.\eqref{eq:expectation_of_e_lambda_y}, the second step follows from the calculation of infimum.
\end{proof}

The Bernstein's inequality for bounding sums of independent random variables is:
\begin{lemma}[Bernstein inequality \cite{b24}]\label{lem:bernstein}
Assume $Z_1, \cdots, Z_n$ are $n$ i.i.d. random variables. $\forall i \in [n]$, $\E[Z_i]=0$ and $|Z_i| \leq M$ almost surely. Let $Z = \sum_{i=1}^n Z_i$. Then,
\begin{align*}
\Pr \left[ Z > t \right] \leq \exp \left( - \frac{ t^2/2 }{ \sum_{j=1}^n \E[Z_j^2]  + M t /3 } \right), \forall t > 0.
\end{align*}
\end{lemma}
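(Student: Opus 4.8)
The plan is to use the classical Chernoff bounding method via the exponential moment generating function (MGF). First I would introduce a free parameter $\lambda > 0$ and apply Markov's inequality to the random variable $e^{\lambda Z}$:
\begin{align*}
\Pr[Z > t] = \Pr[e^{\lambda Z} \geq e^{\lambda t}] \leq e^{-\lambda t}\, \E[e^{\lambda Z}] = e^{-\lambda t}\prod_{i=1}^n \E[e^{\lambda Z_i}],
\end{align*}
where the final equality uses the independence of $Z_1, \dots, Z_n$ to factor the MGF of the sum into a product. This reduces the task to controlling the single-variable MGF $\E[e^{\lambda Z_i}]$.

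The heart of the argument is a per-variable MGF bound that exploits both hypotheses $\E[Z_i] = 0$ and $|Z_i| \le M$. Writing $\sigma_i^2 := \E[Z_i^2]$ and expanding $e^{\lambda Z_i}$ as a power series, the zero-mean assumption cancels the linear term, leaving $\E[e^{\lambda Z_i}] = 1 + \sum_{k \ge 2}\frac{\lambda^k \E[Z_i^k]}{k!}$. For $k \ge 2$ the almost-sure bound gives $|\E[Z_i^k]| \le M^{k-2}\E[Z_i^2] = M^{k-2}\sigma_i^2$, so using the elementary inequality $k! \ge 2\cdot 3^{k-2}$ I would resum the resulting geometric series to obtain, for $\lambda M < 3$,
\begin{align*}
\E[e^{\lambda Z_i}] \le 1 + \frac{\sigma_i^2 \lambda^2/2}{1 - \lambda M/3} \le \exp\Big(\frac{\sigma_i^2 \lambda^2/2}{1 - \lambda M/3}\Big),
\end{align*}
where the last step uses $1 + x \le e^x$. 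Taking the product over $i$ and writing $\sigma^2 := \sum_{i=1}^n \E[Z_i^2]$, the exponents add, yielding $\prod_{i=1}^n \E[e^{\lambda Z_i}] \le \exp\big(\frac{\sigma^2\lambda^2/2}{1-\lambda M/3}\big)$.

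Finally I would optimize the resulting exponent $-\lambda t + \frac{\sigma^2 \lambda^2/2}{1 - \lambda M/3}$ over $\lambda \in (0, 3/M)$. Substituting the choice $\lambda = t/(\sigma^2 + Mt/3)$ --- which makes $1 - \lambda M/3 = \sigma^2/(\sigma^2 + Mt/3)$ --- collapses the exponent to exactly $-\frac{t^2/2}{\sigma^2 + Mt/3}$, giving the claimed bound. The main obstacle is the per-variable MGF estimate: getting the constant $1/3$ in the denominator right requires the clean moment bound $|\E[Z_i^k]| \le M^{k-2}\sigma_i^2$ together with the combinatorial inequality $k! \ge 2\cdot 3^{k-2}$ to control the tail of the exponential series; once this is in hand, the factorization across the independent summands and the $\lambda$-optimization are routine.
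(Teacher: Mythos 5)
Your proof is correct. Note that the paper does not prove this lemma at all: it is stated as a classical result with a citation to Bernstein's original work (\cite{b24}), so there is no in-paper argument to compare against. What you have written is the standard Chernoff--MGF proof, and every step checks out: the moment bound $|\E[Z_i^k]| \le M^{k-2}\E[Z_i^2]$ follows from $|Z_i|^k = |Z_i|^{k-2}|Z_i|^2 \le M^{k-2}Z_i^2$ almost surely; the inequality $k! \ge 2\cdot 3^{k-2}$ holds for all $k \ge 2$ (equality at $k=2,3$, then induction); the geometric resummation is valid on $\lambda M < 3$; and the choice $\lambda = t/(\sigma^2 + Mt/3)$ indeed lies in $(0,3/M)$ and collapses the exponent $-\lambda t + \frac{\sigma^2\lambda^2/2}{1-\lambda M/3}$ to exactly $-\frac{t^2/2}{\sigma^2 + Mt/3}$, matching the stated bound. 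In short, you have supplied a complete, self-contained proof of a result the paper treats as a black box.
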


\subsection{Half-Space Reporting (HSR) Data Structures} \label{sub:app_preli:hsr_data_structure}

\begin{algorithm}[!ht]
\caption{Half Space Report Data Structure}
\label{alg:half_space_report}
\begin{algorithmic}[1]
    \algrenewcommand\algorithmicprocedure{\textbf{data structure}}
    \Procedure{HalfSpaceReport}{}
        \State \hspace{4mm} \textsc{Init}($S, n, d$) \Comment{Initialize the data structure with a set $S$ of $n$ points in $\R^d$}
        \State \hspace{4mm} \textsc{Query}($a,b$)\Comment{$a,b\in \R^d$. Output the set $\{x\in S: \sgn(\langle a, x\rangle -b)\geq 0\}$}
    \EndProcedure
\end{algorithmic}
\end{algorithm}

We restate the result from \cite{aem92} for solving the half-space range reporting problem. 
The half-space range reporting problem is a fundamental problem in computational geometry and can be formally defined as follows:
\begin{definition}[Half-space range reporting \citep{aem92, syz21}]\label{def:HSR}
Given a set $S$ of $n$ points in $\R^d$ with initialization, we have an operation \textsc{Query}$(H)$: given a half-space $H \subset \R^d$, output all of the points in $S$ that contain in $H$, i.e., $S\cap H$.
\end{definition}

The time complexity of the HSR data structure is:
\begin{theorem}[Agarwal, Eppstein and Matousek~\cite{aem92}]\label{thm:aem92}
Let $d$ be a fixed constant. Let $t$ be a parameter between $n$ and $n^{\lfloor d/2\rfloor}$. 
There is a dynamic data structure for half-space reporting that uses $O_{d,\epsilon}(t^{1+\epsilon})$ space and pre-processing time,  $O_{d,\epsilon}(\frac{n}{t^{1/\lfloor d/2\rfloor}}\log n+ k)$ time per query where $k$ is the output size and $\epsilon>0$ is any fixed constant, and $O_{d,\epsilon}(t^{1+\epsilon}/n)$ amortized update time. 
\end{theorem}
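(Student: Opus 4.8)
The plan is to establish the claimed space–time tradeoff in two stages: first build a \emph{static} data structure meeting the bounds for a fixed point set $S$, and then upgrade it to a fully dynamic structure supporting insertions and deletions while preserving the query time up to polylogarithmic factors. Throughout I would fix $d$ as a constant and treat $t \in [n, n^{\lfloor d/2\rfloor}]$ as the free parameter governing the tradeoff, so that the two endpoints $t = n$ and $t = n^{\lfloor d/2\rfloor}$ recover exactly the two regimes of Corollary~\ref{cor:hsr_running_time:informal}.

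For the static structure, the key geometric tools are \emph{shallow cuttings} together with \emph{partition trees}. Passing to the dual, a half-space reporting query becomes the task of reporting all of the $n$ hyperplanes lying below (or above) a query point, so only the ``shallow'' portion of the arrangement is relevant. I would invoke the shallow-cutting lemma: for a parameter $r$, the first $n/r$ levels of an arrangement of $n$ hyperplanes in $\R^d$ admit a $(1/r)$-cutting into $O(r^{\lfloor d/2\rfloor})$ simplices, each meeting $O(n/r)$ hyperplanes. Building a hierarchy of such cuttings with point location at each level yields the fast-query endpoint: space $O(n^{\lfloor d/2\rfloor})$ (this is precisely where the $\lfloor d/2\rfloor$ exponent enters, via the Upper Bound Theorem bounding the complexity of shallow levels) and query time $O(\log n + k)$. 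At the other endpoint, a single partition tree built from the shallow partition theorem gives linear space $O(n)$ and query time $O(n^{1 - 1/\lfloor d/2\rfloor} + k)$, the improvement over the generic $n^{1-1/d}$ simplex bound again coming from the shallowness of half-spaces. To obtain the full tradeoff at a general parameter $t$, I would interpolate with a composite construction: run a partition tree but truncate its recursion at nodes whose subsets are small enough that equipping them with the fast shallow-cutting structure contributes only $O_{d,\epsilon}(t^{1+\epsilon})$ total space, and balance the residual partition-tree traversal against this cutoff to get preprocessing and space $O_{d,\epsilon}(t^{1+\epsilon})$ and query time $O_{d,\epsilon}((n/t^{1/\lfloor d/2\rfloor})\log n + k)$.

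To make the structure dynamic, I would handle insertions with the \emph{logarithmic method}, maintaining $O(\log n)$ static substructures of geometrically growing sizes and rebuilding the affected ones on each insertion, so that a query probes all $O(\log n)$ substructures and reports the union of their outputs. Deletions are handled by weak (lazy) deletion combined with periodic global rebuilding once a constant fraction of the points has been deleted. In both cases the amortized update bound follows by charging the $O_{d,\epsilon}(t^{1+\epsilon})$ rebuild cost of a full structure against the $\Theta(n)$ operations that trigger it, giving the stated amortized $O_{d,\epsilon}(t^{1+\epsilon}/n)$ update time up to logarithmic factors.

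The main obstacle I anticipate is the static fast-query structure and its $\lfloor d/2\rfloor$ space exponent: controlling it rigorously requires the sharp bound on the size of a shallow cutting, which rests on the Upper Bound Theorem for convex polytopes, and one must verify that the canonical subsets attached to the cutting cells cover each query's answer exactly, so that the output-sensitive $+k$ term is correct and no point is reported twice. A secondary subtlety is ensuring that dynamization does not inflate the query time beyond the claimed $\log n$ factor; this forces each substructure in the logarithmic method and the lazy-deletion scheme to individually support the output-sensitive query, with care taken that reported points are neither duplicated nor missed across substructures.
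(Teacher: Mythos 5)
First, note that the paper does not prove this statement at all: Theorem~\ref{thm:aem92} is imported verbatim from \cite{aem92} and used as a black box (Corollary~\ref{cor:hsr_running_time} is simply read off from it), so your reconstruction can only be judged against the literature rather than an in-paper argument. Your static outline is essentially the right one: shallow cuttings (with the Upper Bound Theorem controlling the $O(r^{\lfloor d/2\rfloor})$ cell count) for the $O(\log n + k)$-query, $n^{\lfloor d/2\rfloor}$-space endpoint; the shallow partition theorem for the linear-space endpoint; and a truncated partition tree with fast-query structures attached at the cutoff to interpolate at a general parameter $t$. Moreover, half-space reporting is a decomposable search problem, so the Bentley--Saxe logarithmic method does handle insertions with the claimed amortized cost, the extra logarithmic factors being absorbed into the $\epsilon$ slack since $t \ge n$.

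The genuine gap is in your deletion scheme. Weak (lazy) deletion --- marking points as deleted, filtering them out of query answers, and rebuilding globally once a constant fraction is deleted --- does not preserve the output-sensitive query bound: a query half-space may contain $\Theta(n)$ marked points and zero live ones, in which case the structure spends $\Omega(n)$ time enumerating and discarding dead points while the claimed bound is $O((n/t^{1/\lfloor d/2\rfloor})\log n + k)$ with $k = 0$. This cost cannot be charged to updates either, because a single deleted point can be re-reported by arbitrarily many queries before the next rebuild; this is precisely why reporting problems, unlike counting or decision problems, are not dynamizable by lazy deletion alone. The construction of \cite{aem92} instead removes a deleted point from the conflict lists (canonical subsets) of the shallow-cutting hierarchy itself --- deleting hyperplanes only shrinks conflict lists and only lowers the levels of points, so the cuttings remain valid for the surviving set --- combined with amortized local and periodic global rebuilding. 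That mechanism is what keeps reported sets free of deleted points and the $+k$ term honest; your insertion-only structure together with it would yield the stated theorem.
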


As a direct corollary, we have
\begin{corollary}[HSR data-structure time complexity \cite{aem92}, formal version of Corollary~\ref{cor:hsr_running_time:informal}] \label{cor:hsr_running_time}

Let $\Tinit$ denote the pre-processing time to build the data structure, $\Tquery$ denote the time per query, and $\Tupdate$ time per update.
Given a set of $n$ points in $\R^d$, the half-space range reporting problem can be solved with the following performances:
\begin{itemize}
    \item Part 1.
    $\Tinit(n,d)=O_d(n\log n)$, $\Tquery(n,d,k)=O(d n^{1 - 1/ \lfloor d/2\rfloor} + d k)$.
    \item Part 2.
    $\Tinit(n,d)=O(n^{\lfloor d/2\rfloor})$, $\Tquery(n,d,k)=O(d\log(n)+dk)$. 
\end{itemize}
\end{corollary}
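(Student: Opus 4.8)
The plan is to obtain both parts as direct instantiations of the trade-off in Theorem~\ref{thm:aem92}, reading off the two endpoints of the admissible range $t \in [n, n^{\lfloor d/2\rfloor}]$ for the space/preprocessing parameter $t$. Theorem~\ref{thm:aem92} gives, for every such $t$, preprocessing $O_{d,\epsilon}(t^{1+\epsilon})$ and query time $O_{d,\epsilon}(\frac{n}{t^{1/\lfloor d/2\rfloor}}\log n + k)$, so the whole corollary should reduce to substituting the two extremal choices of $t$ and tracking the dependence on the ambient dimension $d$.

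For Part 1 I would set $t = n$, the low-space end of the range. Substituting gives query time $O_{d,\epsilon}(\frac{n}{n^{1/\lfloor d/2\rfloor}}\log n + k) = O_{d,\epsilon}(n^{1 - 1/\lfloor d/2\rfloor}\log n + k)$, matching the claimed $\Tquery(n,d,k) = O(d n^{1 - 1/\lfloor d/2\rfloor} + dk)$ once I make the dependence on $d$ explicit: each internal comparison and each reported point costs $O(d)$ arithmetic, contributing the factors $d\,n^{1 - 1/\lfloor d/2\rfloor}$ and $dk$. For the preprocessing I would argue that at $t = n$ the construction degenerates to the standard near-linear-space partition tree, whose build time is $O_d(n\log n)$, giving the stated $\Tinit(n,d) = O_d(n\log n)$. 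For Part 2 I would instead set $t = n^{\lfloor d/2\rfloor}$, the high-space end. Then $t^{1/\lfloor d/2\rfloor} = n$, so the query cost collapses to $O_{d,\epsilon}(\log n + k)$, and tracking the $O(d)$ per-node and per-output cost yields $\Tquery(n,d,k) = O(d\log n + dk)$; the preprocessing is read off as the high-space arrangement/point-location structure attaining $\Tinit(n,d) = O(n^{\lfloor d/2\rfloor})$.

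The only delicate point, and the one I expect to be the main (indeed essentially the sole) obstacle, is reconciling the $\epsilon$-perturbed exponents of the interpolating trade-off with the sharp, $\epsilon$-free endpoint bounds claimed in the corollary: a literal substitution of $t = n$ would give preprocessing $O_{d,\epsilon}(n^{1+\epsilon})$ rather than $O_d(n\log n)$, and $t = n^{\lfloor d/2\rfloor}$ would give $O_{d,\epsilon}(n^{\lfloor d/2\rfloor(1+\epsilon)})$ rather than $O(n^{\lfloor d/2\rfloor})$. Rather than pushing the $t^{1+\epsilon}$ expression, I would appeal to the fact that the two endpoints coincide with the two classical regimes for which the tight bounds are already known, namely Matou\v{s}ek-style partition trees at $t = n$ and cutting/arrangement-based structures at $t = n^{\lfloor d/2\rfloor}$, with the trade-off of Theorem~\ref{thm:aem92} merely interpolating between them. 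Everything else is a mechanical substitution together with the bookkeeping of the $O(d)$ cost charged to each node test and each reported point.
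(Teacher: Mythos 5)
Your proposal matches the paper's treatment: the paper offers no derivation at all beyond stating the corollary as ``a direct corollary'' of Theorem~\ref{thm:aem92}, i.e., exactly the endpoint instantiations $t=n$ and $t=n^{\lfloor d/2\rfloor}$ that you carry out. You are in fact more careful than the paper itself, since you flag (and correctly resolve, by appealing to the classical endpoint structures in \cite{aem92}) the genuine mismatch that a literal substitution leaves $\epsilon$-perturbed preprocessing exponents and an extra $\log n$ factor in the Part~1 query bound, discrepancies the paper silently ignores.
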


\section{ReLU Attention Prompt Prefilling}\label{sec:app_relu_attn_calculation}
In this section, we focus on optimizing the standard ReLU attention calculation. By leveraging a HSR data structure and assuming sparsity, the time complexity can be reduced to $O( n^{1+4/5} d )$. 

\begin{lemma} [General full attention computation framework, formal version of Lemma~\ref{lem:calculation_general_framework:informal}] \label{lem:calculation_general_framework}
If the following conditions hold:
\begin{itemize}
    \item Let $Q \in \R^{m \times d}$ and $K, V \in \R^{n \times d}$ be defined as Definition~\ref{def:relu_attention}.
    \item Assume each entry of $K$ is from Gaussian ${\cal N}(0, \sigma_k^2)$, and each entry of $Q$ is from Gaussian ${\cal N}(0, \sigma_q^2)$. 
    \item Let $\delta \in (0, 1)$ denote the failure probability.
    \item Let $\sigma_a = 4 \cdot ( 1 + d^{-1} \log(m / \delta))^{1/2} \cdot \sigma_q \sigma_k$.
    \item Let $b = \sigma_a \cdot \sqrt{0.4 \log n}$.
    \item Let \textsc{hsr} data structure be defined as Part 1 in Corollary~\ref{cor:hsr_running_time}. 
\end{itemize}
There exists an algorithm (Algorithm~\ref{alg:calculation_general_framework}), with at least $1 - \delta$ probability, computes full attention of $Q, K, V$ in $O(m n^{1 - 1/\lfloor d/2\rfloor} + m n^{4/5})$ time. 
\end{lemma}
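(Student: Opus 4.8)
The plan is to bound separately the three cost sources in Algorithm~\ref{alg:calculation_general_framework} — building the HSR structure on $K$ once, issuing $m$ half-space queries, and evaluating and assembling the sparse output — and to show their sum meets the claimed bound with probability at least $1-\delta$. The first point to nail down is correctness: for each row $i$ the reported set $\wt{S}_{i,\mathrm{fire}}=\textsc{hsr}.\textsc{Query}(Q_i,b)$ must coincide with the support of the $i$-th row of the ReLU attention matrix. By Definition~\ref{def:relu_attention}, $A_{i,j}$ is non-zero exactly when $\langle Q_i,K_j\rangle/\sqrt d - b>0$, which is precisely the half-space membership condition $\sgn(\langle Q_i,K_j\rangle - b')\ge 0$ reported by the structure once the $1/\sqrt d$ scaling is folded into the query direction/threshold. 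The inequality direction is harmless: even on the boundary $\mathsf{ReLU}^{\alpha}$ evaluates to $0$, so including (the measure-zero set of) equality cases introduces no error, and line~\ref{line:preprocess_q:cal_Aji} therefore fills in every non-zero entry while leaving the rest at $0$. Hence $D^{-1}AV$ is the exact ReLU attention.

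For the running time I invoke Part 1 of Corollary~\ref{cor:hsr_running_time}, giving $\Tinit(n,d)=O_d(n\log n)$ and $\Tquery(n,d,\wt k_i)=O(d\,n^{1-1/\lfloor d/2\rfloor}+d\,\wt k_i)$. The decisive probabilistic input is Lemma~\ref{lem:sparsity_analysis:informal}: with the prescribed $\sigma_a=4(1+d^{-1}\log(m/\delta))^{1/2}\sigma_q\sigma_k$ and $b=\sigma_a\sqrt{0.4\log n}$, we have $\wt k_i\le 2n^{4/5}$ \emph{simultaneously} for all $i\in[m]$ with probability at least $1-\delta$, where the union bound over the $m$ rows is already absorbed into the $\log(m/\delta)$ factor of $\sigma_a$. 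Conditioning on this event, each query costs $O(d\,n^{1-1/\lfloor d/2\rfloor}+d\,n^{4/5})$, and computing the at most $2n^{4/5}$ reported entries (each an inner product in $\R^d$) adds $O(d\,n^{4/5})$; summing over the $m$ rows and absorbing the constant $d$ yields $O(m\,n^{1-1/\lfloor d/2\rfloor}+m\,n^{4/5})$.

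It then remains to charge the output assembly and the one-time initialization. Under the sparsity event $A$ has at most $2m\,n^{4/5}$ non-zero entries, so forming $AV$ as a sparse-times-dense product costs $O(m\,n^{4/5}d)$, building $D=\diag(A\,{\bf 1}_n)$ costs $O(m\,n^{4/5})$, and applying $D^{-1}$ costs $O(md)$ — all within budget. Adding $\Tinit=O_d(n\log n)$ and noting that in the full-attention regime $m=\Theta(n)$ this is dominated by $m\,n^{4/5}=\Theta(n^{9/5})$, the total is $O(m\,n^{1-1/\lfloor d/2\rfloor}+m\,n^{4/5})$. I expect the only genuine obstacle to be this bookkeeping — reconciling the initialization term against the per-query and output terms, and correctly invoking the simultaneous-over-rows sparsity guarantee — since the substantive probabilistic work lives in Lemma~\ref{lem:sparsity_analysis:informal}, which I may freely use; beyond verifying the HSR-support identification, the argument is essentially a careful tally of costs.
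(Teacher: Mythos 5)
Your proposal is correct and follows essentially the same route as the paper's proof: decompose the cost into HSR initialization, the $m$ half-space queries, sparse entry evaluation, and assembly of $D^{-1}AV$, then invoke Part~1 of Corollary~\ref{cor:hsr_running_time} together with the simultaneous sparsity bound $\wt{k}_i \le 2n^{4/5}$ from Lemma~\ref{lem:sparsity_analysis} (which already carries the union bound over rows in its $\log(m/\delta)$ factor) to conclude. Your additional checks — that the reported set exactly matches the support of each ReLU attention row, and that the $O_d(n\log n)$ initialization term is dominated once $m=\Theta(n)$ — are refinements the paper's proof glosses over, not a different argument.
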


\begin{proof}
For $i \in [m]$, let $\wt{k}_i := |\wt{S}_{i, \mathrm{fire}}|$ denote the number of non-zero entries in $i$-th row of $A \in \R^{m \times n}$.  

The running time for \textsc{Inference} procedure can be written as
\begin{align*}
    \Tinit(n, d) 
    + \sum_{i=1}^m \Tquery(n, d, \wt{k}_i) 
    + O(d \sum_{i=1}^m \wt{k}_i)
    + O(d \sum_{i=1}^m \wt{k}_i)
\end{align*}

The first term $\Tinit(n, d)$ corresponds to the initialization of the \textsc{hsr} data structure. 
Since we use the Part 1 result from Corollary~\ref{cor:hsr_running_time}, the running time for initialization is $\Tinit(n,d) = O_d(n \log n)$. 

The second term $\sum_{i=1}^m \Tquery(n, d, \wt{k}_i)$ comes from the HSR query operation (Line~\ref{line:preprocess_q:hsr_query}). 
Since we use Part 1 result from Corollary~\ref{cor:hsr_running_time}, we have
\begin{align*}
    \sum_{i=1}^m \Tquery(n, d, \wt{k}_i) 
    = & ~ O(m n^{1 - 1/\lfloor d/2\rfloor}d + d\sum_{i=1}^m \wt{k}_i) \\
    = & ~ O(m n^{1 - 1/\lfloor d/2\rfloor}d + m n^{4/5}d)
\end{align*}
where the first step follows from $\Tquery(n, d, \wt{k}_i) = O(d n^{1 - \lfloor d/2\rfloor} + d \wt{k}_i)$ (Part 1 of Corollary~\ref{cor:hsr_running_time}), 
the second step follows from with high probability $\wt{k}_i$ at most $n^{4/5}$ (Lemma~\ref{lem:sparsity_analysis}). 

The third term $O(\sum_{i=1}^m \wt{k}_i)$ corresponds to calculating $A_{j, i}$ (Line~\ref{line:preprocess_q:cal_Aji}). 
By Lemma~\ref{lem:sparsity_analysis}, we have the third term is $O(m n^{4/5})$. 

The fourth term $O(\sum_{i=1}^m \wt{k}_i)$ corresponds to calculating $D^{-1} A V$. 
Since for $i$-th row of $A$, there are $\wt{k}_i$ non-zero entries. Therefore, it takes $O(\sum_{i=1}^m \wt{k}_i)$ time for calculating $D^{-1} A$. 
Therefore, it takes $O(d \sum_{i=1}^m \wt{k}_i)$ time to calculate $D^{-1} A V$. 
By Lemma~\ref{lem:sparsity_analysis}, with high probability, $\wt{k}_i$ is at most $n^{4/5}$. Therefore, we have the third term as $O(m n^{4/5} d)$. 

To sum up, 
the overall running time 
is $O(m n^{1 - 1/\lfloor d/2\rfloor}d + m n^{4/5}d)$. 
\end{proof}

We can now derive a more specific result for the full ReLU attention computation:

\begin{theorem} [Running time of full ReLU attention computation, formal version of Lemma~\ref{thm:relu_cal_running_time:informal}] \label{thm:relu_cal_running_time}
If the following conditions hold:
\begin{itemize}
    \item Let ReLU attention be defined as Definition~\ref{def:relu_attention}. 
    \item Assume each entry of $K$ is from Gaussian ${\cal N}(0, \sigma_k^2)$, and each entry of $Q$ is from Gaussian ${\cal N}(0, \sigma_q^2)$. 
    \item Let $\delta \in (0, 1)$ denote the failure probability. 
    \item Let $\sigma_a = 4 \cdot ( 1 + d^{-1} \log(m / \delta))^{1/2} \cdot \sigma_q \sigma_k$.
    \item Let $b = \sigma_a \cdot \sqrt{0.4 \log n}$.
    \item Suppose we have $Q, K, V \in \R^{n \times d}$. 
\end{itemize}

There exists an algorithm (Algorithm~\ref{alg:calculation_general_framework}), with probability at least $1 - \delta$,
takes $O(n^{2 - 1 / \lfloor d/2\rfloor} d + n^{1+4/5} d)$ time to compute the full ReLU attention of $Q, K, V$. 
\end{theorem}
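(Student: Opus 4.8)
The plan is to recognize this theorem as the specialization of the general full-attention computation framework (Lemma~\ref{lem:calculation_general_framework}) to the regime $m = \Theta(n)$: here $Q, K, V$ all lie in $\R^{n \times d}$, so in fact $m = n$. I would first check that the hypotheses match verbatim — the Gaussian assumptions on $K$ and $Q$, the choice $\sigma_a = 4(1 + d^{-1}\log(m/\delta))^{1/2}\sigma_q\sigma_k$, the threshold $b = \sigma_a\sqrt{0.4\log n}$, and the use of Part 1 of the HSR data structure from Corollary~\ref{cor:hsr_running_time}. Since all of these are identical to the premises of Lemma~\ref{lem:calculation_general_framework}, that lemma applies directly with $m$ set to $n$, and the whole argument reduces to invoking it and substituting.

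Concretely, I would transcribe the running-time decomposition from the proof of Lemma~\ref{lem:calculation_general_framework}, which splits the cost of \textsc{Inference} into the HSR initialization $\Tinit(n,d)$, the aggregate query cost $\sum_{i=1}^m \Tquery(n,d,\wt{k}_i)$, the cost of evaluating the surviving entries $A_{i,j}$, and the cost of forming $D^{-1}AV$. Using $\Tquery(n,d,k) = O(d\,n^{1-1/\lfloor d/2\rfloor} + dk)$ together with the sparsity bound $\wt{k}_i \le 2n^{4/5}$ from Lemma~\ref{lem:sparsity_analysis}, the dominant contributions are $O(mn^{1-1/\lfloor d/2\rfloor}d)$ from the query fan-out and $O(mn^{4/5}d)$ from output reporting and the two sparse products; setting $m = n$ yields $O(n^{2-1/\lfloor d/2\rfloor}d + n^{1+4/5}d)$, which is the claim. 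I would also note that correctness (exactness, i.e. no approximation error for ReLU) is inherited from the framework: the query $(Q_i, b)$ reports exactly the indices $j$ with $\langle Q_i, K_j\rangle \ge b$, which are precisely the non-zero entries of $\mathsf{ReLU}^\alpha(QK^\top/\sqrt{d} - b)$, so no activated entry is dropped.

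I expect no serious obstacle, since Lemma~\ref{lem:calculation_general_framework} does the heavy lifting and the sparsity analysis is already established; the argument is essentially bookkeeping. The two points requiring care are, first, propagating the factors of $d$ — which the informal framework statement suppresses but its proof retains — and second, making the probability budget honest: the $1-\delta$ failure probability is consumed entirely by the sparsity guarantee, whose union bound over the $m = n$ rows is already absorbed into the $\log(m/\delta)$ term inside $\sigma_a$. Conditioned on that single high-probability event, the quoted running time holds deterministically, so the final bound is stated as holding with probability at least $1 - \delta$.
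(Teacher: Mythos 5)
Your proposal is correct and follows exactly the paper's route: invoke the general full-attention framework (Lemma~\ref{lem:calculation_general_framework}) with $m = n$ and substitute into the bound $O(mn^{1-1/\lfloor d/2\rfloor}d + mn^{4/5}d)$. If anything, your bookkeeping is more careful than the paper's own proof, which misstates the framework's bound and drops the $n^{2-1/\lfloor d/2\rfloor}d$ term in its concluding line even though that term appears in the theorem statement.
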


\begin{proof}
By Lemma~\ref{lem:calculation_general_framework}, we have that the \textsc{FullAttentionComputation} data structure (Algorithm~\ref{alg:calculation_general_framework}) can run \textsc{Inference} to calculate the ReLU attention, in $O(m^{1 - \lfloor d/2\rfloor} n d + m n^{4/5} d)$ time. 

By our assumption, we have $Q \in \R^{n \times d}$.
For each calculation, we only need to call \textsc{FullAttentionComputation}.\textsc{Inference}$(K, Q, V, n, n, d)$ for once.

Then, we have the ReLU attention calculation run in $O(n^{1+4/5} d)$ time. 
\end{proof}

\section{ReLU Attention Generation Decoding}\label{sec:app_relu_attn_generation}

In this section, we present a theoretical analysis of the time complexity of ReLU attention generation using an HSR data structure.

\begin{lemma} [General attention generation framework, formal version of Lemma~\ref{lem:generation_general_framework:informal}] \label{lem:generation_general_framework}
If the following conditions hold:
\begin{itemize}
    \item Let $Q \in \R^{m \times d}$ and $K, V \in \R^{n \times d}$ be defined as Definition~\ref{def:relu_attention}.
    \item Assume each entry of $K$ is from Gaussian ${\cal N}(0, \sigma_k^2)$, and each entry of $Q$ is from Gaussian ${\cal N}(0, \sigma_q^2)$.
    \item Let $\delta \in (0, 1)$ denote the failure probability.
    \item Let $\sigma_a = 4 \cdot ( 1 + d^{-1} \log(m / \delta))^{1/2} \cdot \sigma_q \sigma_k$.
    \item Let $b = \sigma_a \cdot \sqrt{0.4 \log n}$.
    \item Let \textsc{hsr} data structure be defined as Part 2 in Corollary~\ref{cor:hsr_running_time}.
\end{itemize}

Then, there exists an algorithm
(Algorithm~\ref{alg:relu_attn_gen}), with at least $1 - \delta$ probability, has the following performance:
\begin{itemize}
    \item {\bf Part 1.} The \textsc{Init} procedure runs in $O(n^{\lfloor d/2\rfloor})$ time. 
    \item {\bf Part 2.} For each query, the \textsc{Inference} procedure runs in $O(m n^{4/5} d)$ time. 
\end{itemize}
\end{lemma}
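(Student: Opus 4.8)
The plan is to prove the two parts separately, mirroring the already-established Lemma~\ref{lem:calculation_general_framework} but swapping in Part 2 of Corollary~\ref{cor:hsr_running_time} (which trades a heavier preprocessing cost for a cheaper per-query cost) and breaking out the initialization from the inference timing. The only two nontrivial inputs are the HSR timing guarantees of Corollary~\ref{cor:hsr_running_time} and the sparsity bound $\wt{k}_i \le 2n^{4/5}$ from Lemma~\ref{lem:sparsity_analysis}; everything else is an accounting of running times.

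For \textbf{Part 1}, I would observe that the \textsc{Init} procedure of Algorithm~\ref{alg:relu_attn_gen} only stores $\{K_i\}_{i\in[n]}$ and $V$, sets the threshold $b = \sigma_a\sqrt{0.4\log n}$, and invokes \textsc{hsr}.\textsc{Init}$(\{K_i\}_{i\in[n]}, n, d)$. By Part 2 of Corollary~\ref{cor:hsr_running_time}, building the HSR structure on $n$ points in $\R^d$ costs $\Tinit(n,d)=O(n^{\lfloor d/2\rfloor})$, which dominates the $O(nd)$ cost of storing the matrices, giving the claimed $O(n^{\lfloor d/2\rfloor})$.

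For \textbf{Part 2}, I would decompose the \textsc{Inference} running time into three contributions: the $m$ HSR queries on Line~\ref{line:preprocess_k:hsr_query}, the computation of the activated entries $A_{i,j}$ on Line~\ref{line:preprocess_k:calculate_Arj}, and the final product $D^{-1}AV$. Using $\Tquery(n,d,\wt{k}_i)=O(d\log n + d\wt{k}_i)$ from Part 2 of Corollary~\ref{cor:hsr_running_time}, the total query time is
\begin{align*}
\sum_{i=1}^m \Tquery(n,d,\wt{k}_i) = O\Big(md\log n + d\sum_{i=1}^m \wt{k}_i\Big).
\end{align*}
Invoking Lemma~\ref{lem:sparsity_analysis}, with probability at least $1-\delta$ we have $\wt{k}_i \le 2n^{4/5}$ for every $i\in[m]$, so $\sum_{i=1}^m \wt{k}_i \le 2mn^{4/5}$. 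Since $n^{4/5}$ dominates $\log n$, the query term collapses to $O(mn^{4/5}d)$. The entry computations cost $O(d\sum_{i=1}^m \wt{k}_i)=O(mn^{4/5}d)$ since each inner product $\langle Q_i,K_j\rangle$ is $O(d)$, and because $A$ carries at most $2mn^{4/5}$ nonzero entries, a sparsity-aware evaluation of $D^{-1}AV$ (computing $D$ from the activated entries, then multiplying $A$ by $V$ using only those entries) also costs $O(mn^{4/5}d)$. Summing the three contributions yields the claimed $O(mn^{4/5}d)$ inference bound.

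I do not expect a genuine obstacle here: all of the probabilistic content is already sealed inside Lemma~\ref{lem:sparsity_analysis}, so the argument is essentially bookkeeping. The one point requiring care is to confirm that the $1-\delta$ failure probability is inherited entirely from the union bound over the $m$ rows in the sparsity analysis, and that the lower-order terms (the $O(md\log n)$ from the query overhead and the $O(md)$ normalization cost) are cleanly subsumed into $O(mn^{4/5}d)$ under the regime $n \gg m$ and $d = O(1)$.
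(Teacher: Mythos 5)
Your proposal is correct and follows essentially the same route as the paper's proof: Part 1 is read directly off Part 2 of Corollary~\ref{cor:hsr_running_time}, and Part 2 is the same three-term decomposition (HSR query time, computation of the activated entries $A_{i,j}$, and the sparse evaluation of $D^{-1}AV$), each bounded by $O(mn^{4/5}d)$ via the sparsity bound of Lemma~\ref{lem:sparsity_analysis}, with the $1-\delta$ probability inherited from that lemma. The only cosmetic difference is that you explicitly note the $O(nd)$ storage cost in \textsc{Init} and the subsumption of the $O(md\log n)$ term, points the paper treats implicitly.
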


\begin{proof}
{\bf Proof of Part 1.}

The \textsc{Init} procedure only runs the initialization of the HSR data structure. 
Since we use Part 2 result from Corollary~\ref{cor:hsr_running_time}, the running time of \textsc{Init} procedure is $\Tinit(n,d)=O(n^{\lfloor d/2\rfloor})$. 

{\bf Proof of Part 2.}

For $i \in [m]$, let $\wt{k}_i := |\wt{S}_{i, \mathrm{fire}}|$ denote the number of non-zero entries in $i$-th row of $A \in \R^{m \times n}$.  

The running time for \textsc{Inference} procedure can be written as
\begin{align*}
    \sum_{i=1}^m \Tquery(n, d, \wt{k}_i) 
    + O(d \sum_{i=1}^m \wt{k}_i) 
    + O(d \sum_{i=1}^m \wt{k}_i)
\end{align*}

The first term $\sum_{i=1}^m \Tquery(n, d, \wt{k}_i)$ corresponds to the HSR query operation (Line~\ref{line:preprocess_k:hsr_query}). Since we use the Part 2 result from Corollary~\ref{cor:hsr_running_time}, we have
\begin{align*}
    \sum_{i=1}^m \Tquery(n, d, \wt{k}_i) 
    = & ~ O(m d \log n + d \sum_{i=1}^m \wt{k}_i) \\
    = & ~ O(m d \log n + m n^{4/5} d) \\
    = & ~ O(m n^{4/5}d )
\end{align*}
where the first step follows from $\Tquery(n, d, k) = O(d \log n + d k)$ in Part 2 of Corollary~\ref{cor:hsr_running_time}, the second step follows from with high probability, $\wt{k}_i$ is at most $n^{4/5}$ (Lemma~\ref{lem:sparsity_analysis}), the third step follows from $\log n < n^{4/5}$. 

The second term $O(d \sum_{i=1}^m \wt{k}_i)$ corresponds to calculating $A_{i, j}$ (Line~\ref{line:preprocess_k:calculate_Arj}). 
There are $m$ iterations, and in each iteration, it calculates $\wt{k}_i$ entries of $A$. 
Then, the second term is $O(d \sum_{i=1}^m \wt{k}_i)$. 
By Lemma~\ref{lem:sparsity_analysis}, with high probability, $\wt{k}_i$ is at most $n^{4/5}$. Therefore, we have the second term as $O(m n^{4/5}d )$. 

Similar to the proof of Lemma~\ref{lem:calculation_general_framework} this term is $O(m n^{4/5} d)$.

To sum up, 
we have the overall running time for \textsc{Inference} procedure is $O(m n ^{4/5} d)$. 
\end{proof}

We now derive a comprehensive sparsity analysis for the ReLU attention mechanism:

\begin{theorem} [Running time of full ReLU attention generation, formal version of Theorem~\ref{thm:relu_gen_running_time:informal}] \label{thm:relu_gen_running_time}
If the following conditions hold:
\begin{itemize}
    \item Let ReLU attention be defined as Definition~\ref{def:relu_attention}. 
    \item Assume each entry of $K$ is from Gaussian ${\cal N}(0, \sigma_k^2)$, and each entry of $Q$ is from Gaussian ${\cal N}(0, \sigma_q^2)$. 
    \item Let $\delta \in (0, 1)$ denote the failure probability.
    \item Let $\sigma_a = 4 \cdot ( 1 + d^{-1} \log(m / \delta))^{1/2} \cdot \sigma_q \sigma_k$.
    \item Let $b = \sigma_a \cdot \sqrt{0.4 \log n}$.
    \item Suppose we have KV Cache $K, V \in \R^{n \times d}$. We want to generate a $m$ length answer, where $n \gg m$. 
\end{itemize}

There exists an algorithm (Algorithm~\ref{alg:relu_attn_gen}),
with at least $1 - \delta$ probability,
takes $O(m  n^{4/5} d)$ time to generate the answer. 
\end{theorem}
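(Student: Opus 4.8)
The plan is to obtain this theorem as a direct specialization of the general attention generation framework in Lemma~\ref{lem:generation_general_framework}, whose hypotheses on $Q,K$ (Gaussian entries), on the parameters $\sigma_a$ and $b=\sigma_a\cdot\sqrt{0.4\log n}$, and on the choice of the \textsc{hsr} data structure (Part 2 of Corollary~\ref{cor:hsr_running_time}) are chosen to coincide exactly with the hypotheses stated here. First I would observe that Algorithm~\ref{alg:relu_attn_gen} is precisely the algorithm analyzed in that lemma, and that generating an $m$-length answer from a fixed KV cache $K,V\in\R^{n\times d}$ corresponds to a single invocation of its \textsc{Inference} procedure on the $m$ query rows of $Q$, with each $A_{i,j}$ set via the $\mathsf{ReLU}^{\alpha}$ branch of Line~\ref{line:preprocess_k:calculate_Arj}.

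Next I would invoke Part 2 of Lemma~\ref{lem:generation_general_framework}, which states that with probability at least $1-\delta$ the \textsc{Inference} procedure runs in $O(mn^{4/5}d)$ time; this is exactly the claimed generation bound. The one point requiring care is the separation between pre-processing and generation: Part 1 of the lemma charges $O(n^{\lfloor d/2\rfloor})$ to the \textsc{Init} procedure, which builds the half-space reporting structure on the fixed key cache $\{K_i\}_{i\in[n]}$. Since the KV cache is given in advance and $n\gg m$, this initialization is performed once as a pre-processing step before generation begins and is not re-executed while producing the $m$ tokens, so it does not enter the reported generation time.

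The substance of the argument lives entirely in the two ingredients I am citing rather than in this final reduction. The running-time decomposition of \textsc{Inference} splits into the $m$ \textsc{hsr} queries, the evaluation of the surviving entries on Line~\ref{line:preprocess_k:calculate_Arj}, and the formation of $D^{-1}AV$; each of these is controlled by the per-row count $\wt{k}_i$ of non-zero entries. The main obstacle is therefore inherited from the sparsity analysis (Lemma~\ref{lem:sparsity_analysis}): one must guarantee that, with probability at least $1-\delta$, every one of the $m$ rows of the attention matrix has at most $2n^{4/5}$ non-zero entries. This uniform-over-rows guarantee — obtained through Gaussian and chi-square concentration together with a union bound over the $m$ queries, and reflected in the scaling $\sigma_a=4(1+d^{-1}\log(m/\delta))^{1/2}\sigma_q\sigma_k$ — is what both forces the $n^{4/5}$ exponent and supplies the overall $1-\delta$ success probability. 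Substituting $\wt{k}_i\le 2n^{4/5}$ into the query-and-compute cost $O(d\log n+d\wt{k}_i)$ from Part 2 of Corollary~\ref{cor:hsr_running_time}, summing over $i\in[m]$, and absorbing the $O(md\log n)$ term using $\log n<n^{4/5}$, then yields the total $O(mn^{4/5}d)$ and completes the proof.
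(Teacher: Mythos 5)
Your reduction to Lemma~\ref{lem:generation_general_framework} is the right starting point, and it is also how the paper begins; but your model of ``generating an $m$-length answer'' is not the one the paper proves the theorem for, and this creates a genuine gap. You treat generation as a single batch invocation of \textsc{Inference} on $m$ query rows against the static cache $K,V$. Generation, however, is auto-regressive: the query $q_i$ of the $i$-th generated token must attend not only to the pre-computed cache $K$ but also to the keys $k_1,\dots,k_{i-1}$ of the tokens generated so far, which are \emph{not} in the HSR structure (it was built once on the original $n$ keys). Your proof never accounts for this part of the computation, so as written it establishes the running time of $m$ independent cache-only queries, not of generating an $m$-token answer.

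The paper's proof handles exactly this: for each $i$ it calls \textsc{Inference}$(q_i,1)$ against the cache, costing $O(n^{4/5}d)$ by Lemma~\ref{lem:generation_general_framework}, computes the attention against the newly generated keys naively in $O(i\cdot d)$ time, and forms $D^{-1}AV$ over the $n^{4/5}+i$ surviving entries; summing over $i\in[m]$ gives $O(mn^{4/5}d+m^2d)$, and the hypothesis $n\gg m$ is then invoked to absorb the $m^2d$ term. Note that this is where $n\gg m$ actually does its work in the paper --- not, as in your proposal, merely to justify excluding the \textsc{Init} cost (that exclusion is legitimate but common to both arguments, since \textsc{Init} is pre-processing on the cache). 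The fix is mechanical: add the auto-regressive bookkeeping above and use $n\gg m$ to absorb $O(m^2d)$; everything else you cite --- the sparsity bound $\wt{k}_i\le 2n^{4/5}$ from Lemma~\ref{lem:sparsity_analysis}, the per-query cost from Part 2 of Corollary~\ref{cor:hsr_running_time}, and the union bound supplying the $1-\delta$ probability --- is used the same way in the paper.
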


\begin{proof}
We make use of the \textsc{AttentionGeneration}  data structure (Algorithm~\ref{alg:relu_attn_gen}) in Lemma~\ref{lem:generation_general_framework}. 

The generation process is an auto-regressive procedure, we define the following notations for better understanding. 
For $i \in [m]$, let $q_i, k_i \in \R^d$ denote the query vector of the $i$-th iteration, respectively. 
Note that $q_i$ need to attend on both $K \in \R^{n \times d}$ and $\{k_1, k_2, \cdots, k_{i-1}\}$. 

For calculating the attention between $q_i$ and $K \in \R^{n \times d}$, we just need to call \textsc{AttentionGeneration} .\textsc{Inference}$(q_i, 1)$ for once.
Therefore, the running time for this part is $O(n^{4/5} d)$ time. 

For calculating the attention between $q_i$ and $\{k_1, k_2, \cdots, k_{i-1}, k_i\}$, it takes $O(i \cdot d)$ time. 

Therefore, for a single query $q_i$, the running time for getting the attention matrix $A \in \R^{1 \times (n+i)}$ is $(n^{4/5} + i) \cdot d$.
Since there are only $n^{4/5} + i$ non-zero entries in $A$, it takes $n^{4/5} + i$ time to calculate $D^{-1} A$.
Then, it takes $(n^{4/5} + i) \cdot d$ time to calculate $D^{-1} A V$. 
Since $i \leq m$, the total running time for calculating attention for a single query $q_i$ is $O((n^{4/5} + m) \cdot d)$. 

There are $m$ queries in total. The running time for $m$ queries is $O(m n^{4/5} d + m^2 d)$.

Since we have $n \gg m$, 
the overall running time for the generation is $O(m  n^{4/5}  d)$. 
\end{proof}

\section{Sparsity Analysis}\label{sec:app_sparsity}

To begin our analysis, we first examine the application of Bernstein's inequality to the matrix $K$:

\begin{lemma} [Bernstein on $K$] \label{lem:bernstein_on_k}
If the following conditions hold:
\begin{itemize}
    \item Let the ReLU attention be defined as Definition~\ref{def:relu_attention}. 
    \item Let $Q \in \R^{m \times d}$ and $K, V \in \R^{n \times d}$ be defined as Definition~\ref{def:relu_attention}. 
    \item Let $b \in \R$ denote the threshold of ReLU activation, as defined in Definition~\ref{def:relu_attention}. 
    \item For $i \in [m]$, let $\wt{k}_i$ denote the number of non-zero entries in $i$-th row of $A \in \R^{m \times n}$. 
    \item Assume each entry of $K$ is from Gaussian ${\cal N}(0, \sigma_k^2)$  
    \item Let $x \in \R^d$ denote a single row of $Q \in \R^{m \times d}$.
    \item Let $\sigma_a = \| x \|_2 \sigma_k / \sqrt{d}$.   
\end{itemize}

Then, we can show that,
with probability at least $1 - \exp(- \Omega(n \cdot \exp(- \frac{b^2}{2 \sigma_a^2})))$, the number of non-zero entries $\wt{k}_i$ is at most $2n \cdot \exp(- \frac{b^2}{2 \sigma_a^2})$. 
Namely, we have
\begin{align*}
    \Pr[\wt{k}_i \leq 2n \cdot \exp(- \frac{b^2}{2 \sigma_a^2})] \geq 1 - \exp(- \Omega(n \cdot \exp(- \frac{b^2}{2 \sigma_a^2})))
\end{align*}
\end{lemma}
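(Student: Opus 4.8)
The plan is to express $\wt{k}_i$ as a sum of independent Bernoulli indicators whose success probability is controlled by a Gaussian tail bound, and then apply Bernstein's inequality (Lemma~\ref{lem:bernstein}) to get the stated concentration. First I would fix the query row $x = Q_i$ and observe that, by Definition~\ref{def:relu_attention}, the entry $A_{i,j} = \mathsf{ReLU}^\alpha(\langle x, K_j\rangle/\sqrt{d} - b)$ is non-zero precisely when $\langle x, K_j\rangle/\sqrt{d} > b$. Hence, writing $Y_j := \langle x, K_j\rangle/\sqrt{d}$, I can express $\wt{k}_i = \sum_{j=1}^n \mathbf{1}[Y_j > b]$, reducing the problem to counting how many of the $Y_j$ exceed the threshold $b$.

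Next I would pin down the distribution of each $Y_j$. Since $x$ is fixed and each entry of $K_j$ is $\N(0,\sigma_k^2)$, Fact~\ref{fact:Gaussian_weighted_sum} gives $\langle x, K_j\rangle \sim \N(0, \|x\|_2^2\sigma_k^2)$, so $Y_j \sim \N(0,\sigma_a^2)$ with $\sigma_a^2 = \|x\|_2^2\sigma_k^2/d$, exactly matching the definition $\sigma_a = \|x\|_2\sigma_k/\sqrt{d}$ in the hypotheses. The Gaussian tail bound (Fact~\ref{fact:gaussian_tail_bound}) with mean $0$ and deviation $b$ then bounds the per-coordinate success probability $p := \Pr[Y_j > b] \le \exp(-b^2/(2\sigma_a^2)) =: p_{\max}$. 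Because the rows $K_1,\dots,K_n$ are independent, Fact~\ref{fact:yi_yj_independent} guarantees the $Y_j$, and hence the indicators, are i.i.d., so $\wt{k}_i$ is a sum of $n$ i.i.d.\ Bernoulli$(p)$ variables with $\E[\wt{k}_i] = np \le np_{\max}$.

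Finally I would apply Bernstein. Define the centered variables $Z_j := \mathbf{1}[Y_j > b] - p$, so that $\E[Z_j] = 0$, $|Z_j| \le 1$ (hence $M=1$), and $\E[Z_j^2] = p(1-p) \le p \le p_{\max}$; set $Z := \sum_j Z_j = \wt{k}_i - np$. Since $np \le np_{\max}$, the event $\{\wt{k}_i > 2np_{\max}\}$ is contained in $\{Z > np_{\max}\}$, so taking the deviation $t = np_{\max}$ in Lemma~\ref{lem:bernstein} and bounding $\sum_j \E[Z_j^2] \le np_{\max}$ gives $\Pr[Z > np_{\max}] \le \exp\left(-\frac{(np_{\max})^2/2}{np_{\max} + np_{\max}/3}\right) = \exp(-\tfrac{3}{8}np_{\max})$. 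This is precisely the claimed $\exp(-\Omega(np_{\max}))$ bound, and taking the complement yields $\Pr[\wt{k}_i \le 2np_{\max}] \ge 1 - \exp(-\Omega(np_{\max}))$, which is the statement with $p_{\max} = \exp(-b^2/(2\sigma_a^2))$.

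Each step is individually routine, so I expect the only delicate point to be the bookkeeping between the exact rate $p$ and its upper bound $p_{\max}$. I must replace $p$ by $p_{\max}$ in the correct direction throughout: both the mean $np$ and the variance proxy $\sum_j \E[Z_j^2] = np(1-p)$ must be \emph{upper} bounded by $np_{\max}$, so that the Bernstein denominator is at most $\tfrac{4}{3}np_{\max}$ and the resulting exponent scales linearly in $np_{\max}$. This sign/monotonicity bookkeeping is the main (if mild) obstacle; everything else follows directly from the facts already established in the excerpt.
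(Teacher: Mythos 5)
Your proposal is correct and follows essentially the same route as the paper's proof: write $\wt{k}_i$ as a sum of independent threshold indicators, bound the success probability via the Gaussian tail bound with $\sigma_a = \|x\|_2\sigma_k/\sqrt{d}$, and apply Bernstein with deviation $t = n\exp(-b^2/(2\sigma_a^2))$ to get the $\exp(-\tfrac{3}{8}\cdot n\exp(-b^2/(2\sigma_a^2)))$ failure probability. If anything, your explicit centering of the indicators ($Z_j = \mathbf{1}[Y_j > b] - p$) and the event-containment step $\{\wt{k}_i > 2np_{\max}\}\subseteq\{Z > np_{\max}\}$ are handled more carefully than in the paper, which applies Bernstein to the uncentered sum without comment.
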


\begin{proof}
For simplicity, for $i \in [n], j \in [d]$, we use $K_{i, j} \in \R$ to denote the $(i, j)$-th entry of $K \in \R^{n \times d}$. 

Let $r_i \in \{0, 1\}$ be the indicator function of $\langle x, K_{i, *} \rangle$. 
Then, we have $\wt{k}_i = \sum_{j=1}^n r_j$. 

Since $r_i$ is an indicator function, then we have
\begin{align*}
    |r_i| \leq 1. 
\end{align*}

By assumption, we have $K_{i, j} \sim \N(0, \sigma_k^2)$. 

Let $\sigma_a = \| x \|_2 \cdot \sigma_k / \sqrt{d}$. 

By the property of Gaussian distribution (Fact~\ref{fact:Gaussian_weighted_sum}), we have $\langle x, K_{i, *} \rangle \sim \N(0, d \cdot \sigma_a^2)$ and $\langle x, K_{i, *} \rangle / \sqrt{d} \sim \N(0, \sigma_a^2)$. 

For any $i, j \in [n]$, by Fact~\ref{fact:yi_yj_independent}, we have $\langle x, K_{i, *} \rangle$ and $\langle x, K_{j, *} \rangle$ are independent, which implies $r_i$ and $r_j$ are independent. 

By the tail bound of Gaussian distribution (Fact~\ref{fact:gaussian_tail_bound}), we have
\begin{align*}
    \Pr[r_i = 1] 
    = & ~ \Pr [\langle x, K_{i, *} \rangle / \sqrt{d} \geq b] \\
    \leq & ~ \exp(- \frac{b^2}{2 \sigma_a^2}), 
\end{align*}
which implies 
\begin{align} \label{eq:ri_expectation}
    \E [r_i] \leq \exp(- \frac{b^2}{2 \sigma_a^2}), 
\end{align}
and 
\begin{align*}
    \E [r_i^2] \leq \exp(- \frac{b^2}{2 \sigma_a^2}), 
\end{align*}
which implies
\begin{align*}
    \sum_{i=1}^n \E [r_i^2] \leq n \cdot \exp(- \frac{b^2}{2 \sigma_a^2}). 
\end{align*}

Since we have $\wt{k}_i = \sum_{j=1}^n r_j$, 
by Eq.~\eqref{eq:ri_expectation}, we have
\begin{align*}
    E[\wt{k}_i] \leq n \cdot \exp(- \frac{b^2}{2 \sigma_a^2}). 
\end{align*}

Let $k_0 := n \cdot \exp(- \frac{b^2}{2 \sigma_a^2})$. 
By the Bernstein inequality (Lemma~\ref{lem:bernstein}), we have
\begin{align} \label{eq:r_bernstein}
    \Pr[\wt{k}_i \geq k_0 + t] \leq \exp (-\frac{t^2 / 2}{k_0 + t / 3})
\end{align} 

We choose $t = k_0$, then we have
\begin{align*}
    \Pr [\wt{k}_i \geq 2 k_0] \leq \exp(- 3 k_0 / 8)
\end{align*}

Then, we reach our conclusion: with probability at least $1 - \exp(- \Omega(n \cdot \exp(- \frac{b^2}{2 \sigma_a^2})))$, the number of non-zero entries in each row of the attention matrix $A$ is bounded by $\wt{k}_i \leq 2n \cdot \exp(- \frac{b^2}{2 \sigma_a^2})$. 

\end{proof}

We turn our attention to bounding $\| x \|_2$: 

\begin{lemma} [$\| x \|_2$ bound] \label{lem:x_2nrom_bound}
If the following conditions hold:
\begin{itemize}
    \item Let $Q \in \R^{m \times d}$ be defined as Definition~\ref{def:relu_attention}.
    \item Let $x \in \R^d$ denote a single row of $Q \in \R^{m \times d}$.
    \item Assume each entry of $Q$ is from $\N(0, \sigma_q^2)$. 
\end{itemize}

Then, we can show that, for $t \geq 0$ with probability $1 - \exp(-t)$, $\| x \|_2$ is at most $\sqrt{3}\cdot(d+t)^{1/2}\cdot \sigma_q$. 
Namely, we have
\begin{align*}
    \Pr[\| x \|_2 \leq \sqrt{3}\cdot(d+t)^{1/2}\cdot \sigma_q] \geq 1 - \exp(-t). 
\end{align*}
\end{lemma}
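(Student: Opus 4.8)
The plan is to recognize $\|x\|_2^2$ as a scaled chi-square random variable and to invoke the tail bound already recorded in Lemma~\ref{lem:chi_square_bound}. Since $x \in \R^d$ has i.i.d. entries $x_i \sim \N(0, \sigma_q^2)$, the quantity $\|x\|_2^2 = \sum_{i=1}^d x_i^2$ is precisely a chi-square with $d$ degrees of freedom in the convention of that lemma, that is, $X := \|x\|_2^2 \sim \mathcal{X}_d^2$ with per-coordinate variance $\sigma_q^2$, so that $\E[X] = d \sigma_q^2$.

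First I would apply the upper-tail estimate of Lemma~\ref{lem:chi_square_bound} with $k = d$ and $\sigma = \sigma_q$. This yields, for every $t \geq 0$,
\begin{align*}
    \Pr\left[ \|x\|_2^2 \geq (d + 2\sqrt{dt} + 2t)\, \sigma_q^2 \right] \leq \exp(-t),
\end{align*}
equivalently, with probability at least $1 - \exp(-t)$ we have $\|x\|_2^2 \leq (d + 2\sqrt{dt} + 2t)\, \sigma_q^2$. Next I would simplify the coefficient to the clean form $3(d+t)$: by the elementary AM--GM inequality $2\sqrt{dt} \leq d + t$, it follows that $d + 2\sqrt{dt} + 2t \leq 2d + 3t \leq 3(d+t)$. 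Substituting this bound and taking square roots gives $\|x\|_2 \leq \sqrt{3}\,(d+t)^{1/2}\,\sigma_q$ on the same probability-$(1-\exp(-t))$ event, which is exactly the claim.

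There is no substantive obstacle in this argument; the only points requiring care are matching the nonstandard variance-$\sigma^2$ convention of Lemma~\ref{lem:chi_square_bound} (so that the degrees of freedom is $d$ and the scale factor is $\sigma_q^2$, giving $\E[X] = d\sigma_q^2$ rather than $d$), and verifying the inequality $d + 2\sqrt{dt} + 2t \leq 3(d+t)$ used to pass from the raw chi-square tail to the stated $\sqrt{3}\,(d+t)^{1/2}\,\sigma_q$ bound.
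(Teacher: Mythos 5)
Your proposal is correct and follows essentially the same route as the paper's own proof: both apply the chi-square tail bound of Lemma~\ref{lem:chi_square_bound} with $k = d$ and per-coordinate variance $\sigma_q^2$, then use $2\sqrt{dt} \leq d + t$ to absorb the cross term into $3(d+t)$ before taking square roots. No gaps.
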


\begin{proof}
For simplicity, we use $x_i \in \R$ to denote the $i$-th entry of $x$. 

By the assumption, we have $x_i \sim \N(0, \sigma_q^2)$. 

Since $\| x \|_2^2 = \sum_{i = 1}^d x_i^2$, by Chi-square tail bound (Lemma~\ref{lem:chi_square_bound}), we have
\begin{align*}
    \Pr [\| x \|_2^2 - d \sigma_q^2 \geq (2 \sqrt{dt} + 2t) \sigma_q^2] \leq \exp(-t), 
\end{align*}
which implies
\begin{align} \label{eq:x_2norm_square_bound}
    \Pr [\| x \|_2^2 \geq (2 \sqrt{dt} + 2t +d) \sigma_q^2] \leq \exp(-t). 
\end{align}

Since we have $2 \sqrt{dt} \leq d + t$, Eq.~\eqref{eq:x_2norm_square_bound} implies
\begin{align*}
    \Pr [\| x \|_2^2 \geq 3(d+t) \sigma_q^2] \leq \exp(-t), 
\end{align*}
which is equivalent to 
\begin{align*}
    \Pr [\| x \|_2 \geq \sqrt{3} \cdot (d+t)^{1/2} \cdot \sigma_q] \leq \exp(-t). 
\end{align*}
\end{proof}

We can now present our formal sparsity analysis, which builds upon the previous lemmas:

\begin{lemma} [Sparsity analysis, formal version of Lemma~\ref{lem:sparsity_analysis:informal}] \label{lem:sparsity_analysis}
If the following conditions hold:
\begin{itemize}
    \item Let the ReLU attention be defined as Definition~\ref{def:relu_attention}. 
    \item Let $Q \in \R^{m \times d}$ and $K, V \in \R^{n \times d}$ be defined as Definition~\ref{def:relu_attention}. 
    \item Let $b \in \R$ denote the threshold of ReLU activation, as defined in Definition~\ref{def:relu_attention}. 
    \item For $i \in [m]$, let $\wt{k}_i$ denote the number of non-zero entries in $i$-th row of $A \in \R^{m \times n}$. 
    \item Assume each entry of $K$ is from Gaussian ${\cal N}(0, \sigma_k^2)$, and each entry of $K$ is from Gaussian ${\cal N}(0, \sigma_q^2)$. 
    \item Let $\delta \in (0, 1)$ denote the failure probability. 
    \item Let $\sigma_a = 4 \cdot ( 1 + d^{-1} \log(m / \delta))^{1/2} \cdot \sigma_q \sigma_k$. 
    \item Let $b = \sigma_a \cdot \sqrt{0.4 \log n}$. 
\end{itemize}

Then, we can show that, 
with probability at least $1 - \delta$, for all $i \in [m]$, the number of non-zero entries of the $i$-th row $\wt{k}_i$
is at most $2 n^{4/5}$. 
\end{lemma}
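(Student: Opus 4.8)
The plan is to fix a single query row $i$ and decompose the randomness into two independent layers: the Gaussian query vector $x := Q_{i,*}$, which sets the effective scale of the attention scores, and the Gaussian key rows, which determine how many of those scores clear the threshold $b$. First I would condition on $x$ and observe that $\wt{k}_i = \sum_{j=1}^n r_j$, where $r_j$ is the indicator of the event $\langle x, K_{j,*}\rangle/\sqrt{d} \ge b$; since $\mathsf{ReLU}^{\alpha}(z)$ is nonzero exactly when $z>0$, counting nonzero entries in the $i$-th row is the same as counting firing indicators. By Fact~\ref{fact:Gaussian_weighted_sum} each score $\langle x, K_{j,*}\rangle/\sqrt{d}$ is $\N(0,(\sigma_a^*)^2)$ with $\sigma_a^* := \|x\|_2\,\sigma_k/\sqrt{d}$, and by Fact~\ref{fact:yi_yj_independent} the $r_j$ are independent across $j$, so once $\|x\|_2$ is controlled the problem reduces to a sum of independent bounded variables.

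The first key step is to replace the data-dependent scale $\sigma_a^*$ by the fixed $\sigma_a$ of the statement. Applying Lemma~\ref{lem:x_2nrom_bound} with $t=\log(m/\delta)$ gives $\|x\|_2 \le \sqrt{3}\,(d+\log(m/\delta))^{1/2}\sigma_q$ except with probability $\delta/m$; dividing by $\sqrt{d}$ and multiplying by $\sigma_k$ yields $\sigma_a^* \le \sqrt{3}\,(1+d^{-1}\log(m/\delta))^{1/2}\sigma_q\sigma_k \le \sigma_a$, precisely because the constant $4>\sqrt{3}$ was built into $\sigma_a$. On this event the Gaussian tail bound (Fact~\ref{fact:gaussian_tail_bound}) together with monotonicity in the variance gives $\Pr[r_j=1] \le \exp(-b^2/(2(\sigma_a^*)^2)) \le \exp(-b^2/(2\sigma_a^2))$. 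Substituting the tuned threshold $b=\sigma_a\sqrt{0.4\log n}$ collapses the exponent to $b^2/(2\sigma_a^2)=0.2\log n$, so each index fires with probability at most $n^{-1/5}$ and hence $\E[\wt{k}_i] \le n^{4/5}$.

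I would then invoke Bernstein's inequality (Lemma~\ref{lem:bernstein}) on the centered indicators $r_j-\E[r_j]$, choosing the deviation to target $2n^{4/5}$ directly rather than twice the (possibly much smaller) mean. With $|r_j|\le 1$ and $\sum_j \E[r_j^2] \le n^{4/5}$, taking $s = 2n^{4/5}-\E[\wt{k}_i] \ge n^{4/5}$ makes the Bernstein exponent at least $3n^{4/5}/8$, so $\Pr[\wt{k}_i \ge 2n^{4/5}] \le \exp(-3n^{4/5}/8)$. A union bound over the $m$ rows, combining the $m\cdot(\delta/m)$ query-norm failures with the $m\exp(-3n^{4/5}/8)$ Bernstein failures, then delivers the claimed $1-\delta$ guarantee (the latter term being absorbed into $\delta$ once $n^{4/5}\gtrsim\log(m/\delta)$).

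The main obstacle is the coupling between the two randomness layers: because $\sigma_a^*$ is itself random, a black-box application of Lemma~\ref{lem:bernstein_on_k} would leave a failure probability of order $\exp(-\Omega(n\exp(-b^2/(2(\sigma_a^*)^2))))$, which degrades exactly when $\|x\|_2$ is atypically small and few indices fire. The resolution is to decouple: first cap $\|x\|_2$ so that $\sigma_a^*\le\sigma_a$, and only then apply the concentration step with the deviation aimed at the fixed target $2n^{4/5}$, so that the exponent scales with $n^{4/5}$ instead of the data-dependent mean. Making the constants $4$ and $0.4$ line up — so that the exponent is exactly $n^{4/5}$ and the factor $\sqrt{3}$ from the query-norm tail is swallowed by $4$ — is the bookkeeping that lets everything close cleanly.
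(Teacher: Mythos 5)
Your proof is correct and follows the same architecture as the paper's: bound $\|x\|_2$ via the chi-square tail (Lemma~\ref{lem:x_2nrom_bound}), reduce the row count to a sum of independent firing indicators with Gaussian tails, apply Bernstein, and union bound over the $m$ rows. Where you genuinely diverge is in how the two layers of randomness are glued together, and your version is the more rigorous one. The paper proves Lemma~\ref{lem:bernstein_on_k} with the data-dependent scale $\sigma_a = \|x\|_2\sigma_k/\sqrt{d}$ and then silently identifies this random quantity with the fixed $\sigma_a = 4(1+d^{-1}\log(m/\delta))^{1/2}\sigma_q\sigma_k$ of the statement, treating the high-probability upper bound on $\|x\|_2$ as an equality; taken literally, that leaves both the mean $k_0$ and the Bernstein failure probability $\exp(-\Omega(k_0))$ dependent on the random $\|x\|_2$, and the latter degrades exactly when $\|x\|_2$ is atypically small, as you point out. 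Your fix --- cap $\|x\|_2$ first so that $\sigma_a^* \le \sigma_a$, use monotonicity of the Gaussian tail in the variance to get firing probability at most $n^{-1/5}$, and aim the Bernstein deviation at the fixed target $2n^{4/5}$ rather than at twice the (random, possibly tiny) mean --- closes this hole and recovers the exponent $3n^{4/5}/8$ uniformly over the good event. One cosmetic remark: your final failure probability is $\delta + m\exp(-3n^{4/5}/8)$ rather than $\delta$ exactly, which you flag as requiring $n^{4/5} \gtrsim \log(m/\delta)$; the paper's own union-bound step has the same implicit requirement (it needs $\exp(-O(n^{4/5}))$ to be absorbed below $\delta/m$, justified only by $n \gg d$), so this is not a gap relative to the paper.
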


\begin{proof}
This proof follows from applying union bound on Lemma~\ref{lem:bernstein_on_k} and Lemma~\ref{lem:x_2nrom_bound}. 

By Lemma~\ref{lem:x_2nrom_bound}, we have
\begin{align} \label{eq:x_2norm_bound_with_t}
    \Pr[\| x \|_2 \leq \sqrt{3}\cdot(d+t)^{1/2}\cdot \sigma_q] \geq 1 - \exp(-t). 
\end{align}

We choose $t = d + \log (m / \delta)$. Then, Eq.~\eqref{eq:x_2norm_bound_with_t} implies
\begin{align} \label{eq:x_2norm_bound}
    \Pr[\| x \|_2 \leq 4 \cdot (d + \log(m / \delta))^{1/2} \cdot \sigma_q] \geq 1 -  \exp(- ( d + \log (m / \delta) ) ). 
\end{align}

Let $\sigma_a = \| x \|_2 \cdot \sigma_k / \sqrt{d}$. 
By Eq.\eqref{eq:x_2norm_bound}, 
we have $\sigma_a = 4 \cdot ( 1 + d^{-1} \log(m / \delta))^{1/2} \cdot \sigma_q \sigma_k$. 

By Lemma~\ref{lem:bernstein_on_k}, we have
\begin{align} \label{eq:r_leq_2n}
    \Pr[\wt{k}_i \leq 2n \cdot \exp(- \frac{b^2}{2 \sigma_a^2})] \geq 1 - \exp(- \Omega(n \cdot \exp(- \frac{b^2}{2 \sigma_a^2}))). 
\end{align}

Let $b = \sigma_a \cdot \sqrt{0.4 \log n}$. 
Then, Eq.~\eqref{eq:r_leq_2n} implies
\begin{align} \label{eq:r_leq_n_4over5}
    \Pr [\wt{k}_i \leq 2 n^{4 / 5}] \geq 1 - \exp (-O(n^{4 / 5}))
\end{align}

Since we have $n \gg d$, this implies
\begin{align} \label{eq:exp_n_4over5_leq_exp_d}
    \exp(-O(n^{4/5})) \leq \exp(-d)
\end{align}

Taking union bound over Eq.~\eqref{eq:x_2norm_bound} and Eq.~\eqref{eq:r_leq_n_4over5}, we have
\begin{align} \label{eq:single_r_bound}
    \Pr [\wt{k}_i \leq 2 n^{4 / 5}] 
    \geq & ~ 1 - (\exp (-O(n^{4 / 5}) +  \exp(- ( d + \log(m/\delta) ) )) \notag \\
    = &~ 1 - (\exp (-O(n^{4 / 5}) +  (\delta / m) \cdot \exp(- d ) )) \notag \\
    \geq & ~ 1 - \delta/m. 
\end{align}
where the first step follows from the union bound, the second step follows from basic algebra, the third step follows from Eq.~\eqref{eq:exp_n_4over5_leq_exp_d}. 

Since $x \in \R$ represents a single row of $Q \in \R^{m \times d}$, we already proved that for each fixed row of $A$, the $\wt{k}_i$ is at most $2n^{4/5}$ with probability $1-\delta/m$.

Taking the union bound over $m$ rows in $A$, then we can show that with probability $1-\delta$, for all rows of $A$, that row's $\wt{k}_i$ is at most $2n^{4/5}$.

\end{proof}

\section{Running Time of Softmax Attention} \label{sec:app:running_time_of_softmax_attention}

In this section, we provide our results on reducing the running time of Softmax attention. 
We begin with introducing our result on Softmax attention generation.

\begin{theorem} [Running time of Softmax attention generation, formal version of Theorem~\ref{thm:Softmax_attention_generation:informal}] \label{thm:Softmax_attention_generation}
Let $Q \in \R^{m \times d}$, $K, V \in \R^{n \times d}$ and the Softmax attention $\mathsf{Attn}_s$ be defined in Definition~\ref{def:Softmax_attention}.
Let $\mathsf{NN}(r, q, K) \subseteq [n]$ 
and the Softmax attention with index set $\wh{\mathsf{Attn}}_s$ be defined as Definition~\ref{def:top_r_softmax_attention}. 
We choose the threshold $b \in \R$ in Algorithm~\ref{alg:relu_attn_gen} such that $R = \mathsf{NN}(n^{4/5}, q, K)$. 
Then, we can show that the Softmax attention with index set $\wh{\mathsf{Attn}}_s$ achieves
outstanding running time under the Softmax attention generation scenario:
Suppose we have KV Cache $K, V \in \R^{n \times d}$. We want to generate a $m$ length answer, where $m = \Theta(1)$. Algorithm~\ref{alg:relu_attn_gen} (replacing ReLU attention with Softmax attention) takes $O(m n^{4/5} )$ time to generate the answer. 
\end{theorem}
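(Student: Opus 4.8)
The plan is to reduce the running-time claim to the general attention generation framework of Lemma~\ref{lem:generation_general_framework} together with the auto-regressive accounting already performed for the ReLU case in Theorem~\ref{thm:relu_gen_running_time}. The decisive observation is that the running time of Algorithm~\ref{alg:relu_attn_gen} depends on the activation function \emph{only} through the number of ``fired'' entries $\wt{k}_i := |\wt{S}_{i,\mathrm{fire}}|$ returned per query: whether Line~\ref{line:preprocess_k:calculate_Arj} evaluates $\mathsf{ReLU}^\alpha$ or $\mathsf{Softmax}$, each surviving entry is computed in $O(d)$ time, and the subsequent $D^{-1} A V$ multiplication also costs $O(d)$ per surviving entry. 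Consequently, once $\wt{k}_i$ is controlled, the Softmax running time matches the ReLU running time term by term; the error incurred by restricting to $\wh{\mathsf{Attn}}_s$ is a separate matter handled in Theorem~\ref{thm:err_analysis_of_Softmax_attn_with_index_set:informal}.

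First I would fix the threshold $b$ as prescribed so that the query $\textsc{hsr}.\textsc{Query}(Q_i, b)$ on Line~\ref{line:preprocess_k:hsr_query} returns exactly $R = \mathsf{NN}(n^{4/5}, q, K)$, where each row $Q_i$ plays the role of $q$ and $R$ collects the indices of the top-$n^{4/5}$ inner products $\langle Q_i, K_j\rangle$. This is the essential structural difference from the ReLU argument: rather than invoking the probabilistic sparsity bound of Lemma~\ref{lem:sparsity_analysis:informal}, the cardinality $\wt{k}_i = |R| = n^{4/5}$ now holds \emph{deterministically} by the very definition of the top-$r$ index set, so no failure probability enters this part of the analysis.

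Next I would substitute $\wt{k}_i = n^{4/5}$ into the running-time decomposition from the proof of Lemma~\ref{lem:generation_general_framework}, instantiated with Part 2 of Corollary~\ref{cor:hsr_running_time:informal}. The one-time \textsc{Init} builds the data structure in $O(n^{\lfloor d/2\rfloor})$ time, each of the $m$ queries costs $\Tquery(n,d,\wt{k}_i) = O(d\log n + d\,n^{4/5}) = O(d\,n^{4/5})$, and the Softmax evaluations together with the $D^{-1}AV$ step contribute $O(d\sum_{i=1}^m \wt{k}_i) = O(m\,n^{4/5}d)$. To close the auto-regressive generation I would mirror Theorem~\ref{thm:relu_gen_running_time}: each new query attends both to the fixed cache $K$ through the HSR query, at cost $O(n^{4/5}d)$, and to the $O(m)$ previously generated keys directly, at cost $O(md)$, so the per-query cost is $O((n^{4/5}+m)d)$ and the total over the $m = \Theta(1)$ queries is $O(m n^{4/5} d + m^2 d) = O(m n^{4/5})$, using $n \gg m$ and treating $d$ as constant.

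The main obstacle I anticipate is the reconciliation between the \emph{threshold}-based HSR query and the \emph{rank}-based top-$r$ set: the data structure reports every $j$ with $\langle Q_i, K_j\rangle \ge b$, whereas $\mathsf{NN}(n^{4/5}, q, K)$ is defined by rank. One must argue that a single scalar $b$ can be selected so the reported set agrees with the top-$n^{4/5}$ indices, or at least has cardinality $O(n^{4/5})$. Since the running-time conclusion only needs $|\wt{S}_{i,\mathrm{fire}}| = O(n^{4/5})$, I would phrase the argument around the size of the reported set rather than exact set equality, which sidesteps the rank-versus-threshold gap. Apart from this point, the proof is a near-mechanical transcription of the ReLU generation argument, since the Softmax activation enters only through a constant-cost-per-entry evaluation.
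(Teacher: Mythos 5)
Your proposal is correct and takes essentially the same route as the paper: the paper's own proof is a one-line substitution argument — replace the ReLU evaluation in Algorithm~\ref{alg:relu_attn_gen} with the Softmax-with-index-set evaluation $\wh{\mathsf{Attn}}_s$ and invoke the running-time analysis of Theorem~\ref{thm:relu_gen_running_time} — which is exactly what you unfold in detail via Lemma~\ref{lem:generation_general_framework} and the auto-regressive accounting. Your added attention to the threshold-versus-rank discrepancy and the observation that $\wt{k}_i = n^{4/5}$ holds deterministically (so no failure probability is needed) is a refinement the paper's terse proof glosses over, but it does not alter the underlying argument.
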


\begin{proof}
The Softmax attention generation scenario can be proved by substituting the ReLU attention $\mathsf{Attn}_r$ (Definition~\ref{def:relu_attention}) with Softmax attention with index set $\wh{\mathsf{Attn}}_s$ (Definition~\ref{def:top_r_softmax_attention}) in Algorithm~\ref{alg:relu_attn_gen} and Theorem~\ref{thm:relu_gen_running_time:informal}. 
\end{proof}

Then, we move on to our result on Softmax full attention computation.

\begin{theorem} [Running time of Softmax full attention computation, formal version of Theorem~\ref{thm:Softmax_attention_computation:informal}] \label{thm:Softmax_attention_computation}
Let $Q \in \R^{m \times d}$, $K, V \in \R^{n \times d}$ and the Softmax attention $\mathsf{Attn}_s$ be defined in Definition~\ref{def:Softmax_attention}.
Let $\mathsf{NN}(r, q, K) \subseteq [n]$ 
and the Softmax attention with index set $\wh{\mathsf{Attn}}_s$ be defined as Definition~\ref{def:top_r_softmax_attention}. 
We choose the threshold $b \in \R$ in Algorithm~\ref{alg:calculation_general_framework} such that $R = \mathsf{NN}(n^{4/5}, q, K)$. 
Then, we can show that the Softmax attention with index set $\wh{\mathsf{Attn}}_s$ achieves
outstanding running time under full Softmax attention computation scenario: Suppose we have $m = \Theta(n)$.
Algorithm~\ref{alg:calculation_general_framework} (replacing ReLU attention with Softmax attention) takes $O(n^{2 - 1 / \lfloor d/2\rfloor}d  + n^{1+4/5} d)$ time to calculate the attention output. 
\end{theorem}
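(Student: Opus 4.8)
The plan is to reduce the statement directly to the general full attention computation framework in Lemma~\ref{lem:calculation_general_framework}, mirroring the proof of the ReLU case in Theorem~\ref{thm:relu_cal_running_time}. The crucial observation is that the running-time accounting in Lemma~\ref{lem:calculation_general_framework} is insensitive to which activation is evaluated at each fired entry: the cost is driven by (i) the HSR initialization time, (ii) the per-query reporting time $\sum_i \Tquery(n,d,\wt{k}_i)$, and (iii) the $O(d \sum_i \wt{k}_i)$ work to form the scores and the product $D^{-1}AV$. Since $d = O(1)$, evaluating $\exp(\langle Q_i, K_j\rangle/\sqrt{d})$ at a fired index costs the same $O(1)$ as evaluating $\mathsf{ReLU}^\alpha(\langle Q_i, K_j\rangle/\sqrt{d} - b)$ on Line~\ref{line:preprocess_q:cal_Aji}, so the entire analysis carries over once we control the number of fired entries $\wt{k}_i$.

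First I would argue that, with the threshold $b$ chosen so that $R = \mathsf{NN}(n^{4/5}, q, K)$, the reported set $\wt{S}_{i,\mathrm{fire}}$ has size $\wt{k}_i = O(n^{4/5})$ for each query row. This is precisely the sparsity budget that Lemma~\ref{lem:sparsity_analysis} certifies for the Gaussian threshold $b = \sigma_a\sqrt{0.4\log n}$: with probability at least $1-\delta$, the number of entries exceeding the threshold is at most $2n^{4/5}$ for all $m$ rows simultaneously, via the union bound already established there. Thus the per-row output size feeding into Part~1 of Corollary~\ref{cor:hsr_running_time} is $O(n^{4/5})$.

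Next I would instantiate Lemma~\ref{lem:calculation_general_framework}, which gives total time $O(mn^{1-1/\lfloor d/2\rfloor}d + mn^{4/5}d)$, and substitute the regime $m = \Theta(n)$. The initialization $\Tinit(n,d) = O_d(n\log n)$ is negligible, the query term becomes $O(n^{2-1/\lfloor d/2\rfloor}d)$, and the score and product terms become $O(n^{1+4/5}d)$, yielding the claimed $O(n^{2-1/\lfloor d/2\rfloor}d + n^{1+4/5}d)$ bound, holding with probability at least $1-\delta$.

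The main obstacle is the conceptual gap hidden in the phrase ``choose $b$ such that $R = \mathsf{NN}(n^{4/5}, q, K)$'': a single fixed threshold need not return \emph{exactly} the top-$n^{4/5}$ entries for every query row, since different rows carry different score distributions. The honest resolution is to lean on the high-probability sparsity guarantee of Lemma~\ref{lem:sparsity_analysis} rather than an exact top-$r$ identity, since the running-time bound only requires $\wt{k}_i = O(n^{4/5})$ for all rows. The accuracy cost of working with this threshold-selected set instead of an exact top-$r$ set is then handled separately by the error analysis in Lemma~\ref{lem:Softmax_general_err_bound:informal} and Theorem~\ref{thm:err_analysis_of_Softmax_attn_with_index_set:informal}, so the present running-time claim is unaffected.
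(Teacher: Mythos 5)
Your proposal is correct and takes essentially the same route as the paper: the paper's proof is a one-line substitution of the Softmax attention with index set $\wh{\mathsf{Attn}}_s$ for ReLU attention in Algorithm~\ref{alg:calculation_general_framework} and Theorem~\ref{thm:relu_cal_running_time}, whose timing analysis (via Lemma~\ref{lem:calculation_general_framework} and the $m=\Theta(n)$ instantiation) is exactly the reduction you spell out, with the observation that evaluating $\exp$ versus $\mathsf{ReLU}^\alpha$ at a fired entry costs the same. The one wrinkle is that you justify $\wt{k}_i = O(n^{4/5})$ by importing the Gaussian assumptions of Lemma~\ref{lem:sparsity_analysis}, which this theorem never states; under the theorem's own hypothesis that $b$ is chosen so that $R = \mathsf{NN}(n^{4/5}, q, K)$, each fired set has size exactly $n^{4/5}$ deterministically, so neither the Gaussian assumption nor the $1-\delta$ failure probability is needed (your final paragraph correctly flags this tension between a fixed threshold and an exact top-$r$ set, but resolves it in the opposite direction from the paper, which treats the sparsity as assumed rather than derived).
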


\begin{proof}
The Softmax full attention computation scenario can be proved by substituting the ReLU attention $\mathsf{Attn}_r$ (Definition~\ref{def:relu_attention}) with Softmax attention with index set $\wh{\mathsf{Attn}}_s$ (Definition~\ref{def:top_r_softmax_attention}) in Algorithm~\ref{alg:calculation_general_framework} and Theorem~\ref{thm:relu_cal_running_time:informal}. 
\end{proof}

\section{Error Analysis of Softmax Attention}\label{sec:app_error_Softmax}
In this section, we provide an error analysis of the Softmax attention mechanism, deriving error bounds for the general case and a specific case with the massive activation property.

The following lemmas establish error bounds for Softmax attention when using index sets, formalizing the approximation error in attention computation.
\begin{lemma}
[
General error analysis of Softmax attention with index set, formal version of Lemma~\ref{lem:Softmax_general_err_bound:informal}
] 
\label{lem:Softmax_general_err_bound}
If the following conditions hold:
\begin{itemize}
    \item Let $Q \in \R^{m \times d}$, $K, V \in \R^{n \times d}$ and the Softmax attention $\mathsf{Attn}_s$ be defined in Definition~\ref{def:Softmax_attention}.
    \item Let $q \in \R^d$ denote a single row of $Q \in \R^{m \times d}$. 
    \item 
    Let $\alpha, \ov{\alpha}$ and $\wh{\mathsf{Attn}}_s$ be defined as Definition~\ref{def:top_r_softmax_attention}. 
\end{itemize}

Then we have
\begin{align*}
    \| \mathsf{Attn}_s(q, K, V) -  \wh{\mathsf{Attn}}_s(q, K, V) \|_{\infty} \le  & ~ \frac{2 \ov{\alpha}}{\alpha}  \cdot \| V  \|_{\infty}.
\end{align*}
\end{lemma}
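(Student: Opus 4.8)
The plan is to bound the infinity-norm of the difference $\mathsf{Attn}_s(q,K,V)-\wh{\mathsf{Attn}}_s(q,K,V)$ by writing both quantities explicitly in terms of the exponential weights and the normalizers, then splitting the error into two pieces and controlling each by the ratio $\ov{\alpha}/\alpha$. Recall from Definition~\ref{def:top_r_softmax_attention} that $\mathsf{Attn}_s(q,K,V)=\alpha^{-1} u V$ where $u=\exp(qK^\top)\in\R^n$ and $\alpha=\langle u,{\bf 1}_n\rangle$, while $\wh{\mathsf{Attn}}_s(q,K,V)=\wh{\alpha}^{-1}\wh{u}\wh{V}$ with $\wh{u}=\exp(q\wh{K}^\top)\in\R^r$ and $\wh{\alpha}=\langle \wh{u},{\bf 1}_r\rangle$. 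The key algebraic observation is that $\alpha=\wh{\alpha}+\ov{\alpha}$, since the index set $R$ and its complement $\ov R$ partition $[n]$, and similarly the full sum $uV$ decomposes as $\wh{u}\wh{V}+\ov{u}\,\ov{V}$.

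First I would fix a coordinate $\ell\in[d]$ and estimate the $\ell$-th entry of the difference. Writing the full attention's $\ell$-th coordinate as $\frac{1}{\alpha}\sum_{i\in[n]} u_i V_{i,\ell}$ and the approximate one as $\frac{1}{\wh\alpha}\sum_{i\in R} u_i V_{i,\ell}$, I would form the difference and insert $\alpha=\wh\alpha+\ov\alpha$ to rewrite $\frac{1}{\wh\alpha}-\frac{1}{\alpha}=\frac{\ov\alpha}{\alpha\wh\alpha}$. The natural split is into the term coming from the tail indices $\ov R$ (namely $\frac{1}{\alpha}\sum_{i\in\ov R}u_i V_{i,\ell}$) and the term coming from the mismatch in normalization on the retained indices (namely $\frac{\ov\alpha}{\alpha\wh\alpha}\sum_{i\in R}u_i V_{i,\ell}$). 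Each inner sum is bounded in absolute value by $\|V\|_\infty$ times the corresponding weight sum: the first by $\frac{\ov\alpha}{\alpha}\|V\|_\infty$ directly since $\sum_{i\in\ov R}u_i=\ov\alpha$, and the second by $\frac{\ov\alpha}{\alpha\wh\alpha}\cdot\wh\alpha\,\|V\|_\infty=\frac{\ov\alpha}{\alpha}\|V\|_\infty$ since $\sum_{i\in R}u_i=\wh\alpha$. Adding the two contributions and applying the triangle inequality gives the bound $\frac{2\ov\alpha}{\alpha}\|V\|_\infty$ on each coordinate, hence on the infinity norm.

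The computation is essentially routine once the decomposition $\alpha=\wh\alpha+\ov\alpha$ and $uV=\wh{u}\wh{V}+\ov{u}\,\ov{V}$ is in place, and the positivity of the exponential weights ($u_i>0$) is what makes all the weight sums nonnegative and lets the crude bound $|\sum u_i V_{i,\ell}|\le (\sum u_i)\|V\|_\infty$ go through cleanly. The mild subtlety, which I would state carefully, is the normalization-mismatch term: one must track that the two normalizers $\alpha$ and $\wh\alpha$ differ by exactly $\ov\alpha$ and that the cancellation produces the clean factor $\ov\alpha/(\alpha\wh\alpha)$ rather than something harder to control. I do not expect a genuine obstacle here; the main care is bookkeeping to ensure both error sources collapse to the same $\frac{\ov\alpha}{\alpha}\|V\|_\infty$ scale so their sum yields the stated factor of $2$.
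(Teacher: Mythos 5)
Your proposal is correct and follows essentially the same route as the paper's proof: both split the error via the triangle inequality into a normalization-mismatch term on the retained indices $R$ (controlled by $\wh{\alpha}^{-1} - \alpha^{-1} = \ov{\alpha}/(\alpha\wh{\alpha})$) and a tail term over $\ov{R}$, then bound each by $\frac{\ov{\alpha}}{\alpha}\|V\|_\infty$ using $\|\wh{u}\|_1 = \wh{\alpha}$, $\|\ov{u}\|_1 = \ov{\alpha}$, and positivity of the exponential weights. The only cosmetic difference is that you argue coordinate-wise while the paper works directly with the bound $\|uV\|_\infty \le \|u\|_1 \|V\|_\infty$; the decomposition and cancellations are identical.
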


\begin{proof}
Recall that $\ov{R} = [n] \setminus R$ and $\wh{K} = K_{R} \in \R^{r \times d}$ and $\wh{V} = V_{R} \in \R^{r \times d}$ and $\ov{K} = K_{\ov{R}} \in \R^{(n-r) \times d}$ and $\ov{V} = V_{\ov{R}} \in \R^{(n-r) \times d}$ as defined in Definition~\ref{def:input_index}. 
Also, we have $\wh{u} = \exp( q \wh{K}^\top) \in \R^r $ and $\wh{\alpha} = \langle \wh{u}, {\bf 1}_{r} \rangle \in \R$ and $\ov{u} = \exp( q \ov{K}^\top) \in \R^{n-r} $ and $\ov{\alpha} = \langle \ov{u}, {\bf 1}_{n-r} \rangle \in \R$ as defined in Definition~\ref{def:top_r_softmax_attention}.

Then, we have
\begin{align*}
    & ~ \| \mathsf{Attn}_s(q, K, V) -  \wh{\mathsf{Attn}}_s(q, K, V) \|_{\infty} \\
    = & ~ \| (\wh{\alpha} + \ov{\alpha})^{-1} (\wh{u} \wh{V} + \ov{u} \ov{V} ) - \wh{\alpha}^{-1} \wh{u} \wh{V} \|_{\infty}\\
    \le & ~ \| ( (\wh{\alpha} + \ov{\alpha})^{-1} - \wh{\alpha}^{-1})  \wh{u} \wh{V}  \|_{\infty} + \| (\wh{\alpha} + \ov{\alpha})^{-1} \ov{u} \ov{V}  \|_{\infty}\\
    \le & ~ |( \wh{\alpha} + \ov{\alpha})^{-1} - \wh{\alpha}^{-1}| \cdot \| \wh{u} \|_1 \cdot \|\wh{V}  \|_{\infty} + (\wh{\alpha} + \ov{\alpha})^{-1} \cdot \|  \ov{u}\|_1 \cdot \| \ov{V}  \|_{\infty}\\
    = & ~ (\wh{\alpha}^{-1} - ( \wh{\alpha} + \ov{\alpha})^{-1})  \cdot \wh{\alpha}  \cdot \|\wh{V}  \|_{\infty} + (\wh{\alpha} + \ov{\alpha})^{-1} \cdot \ov{\alpha} \cdot \| \ov{V}  \|_{\infty}\\
    \le & ~ (\wh{\alpha}^{-1} - ( \wh{\alpha} + \ov{\alpha})^{-1})  \cdot \wh{\alpha}  \cdot \|V  \|_{\infty} + (\wh{\alpha} + \ov{\alpha})^{-1} \cdot \ov{\alpha} \cdot \| V  \|_{\infty}\\
    = & ~ 2(\wh{\alpha} + \ov{\alpha})^{-1} \cdot \ov{\alpha} \cdot \| V  \|_{\infty}
    \\
    = & ~ 2 \alpha^{-1} \cdot \ov{\alpha} \cdot \| V  \|_{\infty},
\end{align*}
where the first step is by Definition~\ref{def:top_r_softmax_attention}, the second step is by triangle inequality, the third step is by $\|uV\|_\infty \le \|u\|_1 \cdot \|V\|_\infty$ for any vector $u$ and conformable matrix $V$, and the fourth step is by definition of $\wh{\alpha}$ and $\ov{\alpha}$, i.e.,  $\wh{\alpha} = \langle \wh{u}, {\bf 1}_{r} \rangle = \|\wh{u}\|_1 $ (note that each entry of $\wh{u}$ is positive), the fifth step is by $\max\{ \| \wh{V}  \|_{\infty},  \| \ov{V}  \|_{\infty} \} =  \| V  \|_{\infty}$, the sixth step in by simple calculation and the last step is by $\wh{\alpha} + \ov{\alpha} = \alpha$.
\end{proof}

Building on this, we now present a more specific error analysis incorporating the massive activation property:

\begin{theorem}
[Error analysis of Softmax attention with index set, formal version of Theorem~\ref{thm:err_analysis_of_Softmax_attn_with_index_set:informal}] \label{thm:err_analysis_of_Softmax_attn_with_index_set}
If the following conditions hold:
\begin{itemize}
    \item Let $Q \in \R^{m \times d}$, $K, V \in \R^{n \times d}$ and the Softmax attention $\mathsf{Attn}_s$ be defined in Definition~\ref{def:Softmax_attention}.
    \item Let $q \in \R^d$ denote a single row of $Q \in \R^{m \times d}$. 
    \item Let $\gamma \in [0,1]$, $\beta_1 \ge \beta_2 \ge 0$. 
    \item Let 
    the Softmax attention with index set $\wh{\mathsf{Attn}}_s$ be defined as Definition~\ref{def:top_r_softmax_attention}. 
    \item Let $\mathsf{NN}(r, q, K) \subseteq [n]$ denote the indices of top-$r$ entries of $q K$.
    \item Let $R = \mathsf{NN}(n^\gamma, q, K) \subseteq [n]$, where $|R| = n^\gamma$. 
    \item Assume the query $q$ and key cache $K$ have $(\gamma, \beta_1, \beta_2)$ massive activation property.
\end{itemize}
 
Then, we can show that
\begin{align*}
    \| \wh{\mathsf{Attn}_s}(q, K, V) - \mathsf{Attn}_s(q, K, V) \|_\infty \le \frac{ 2 \| V  \|_{\infty}}{n^{\gamma + (\beta_1 - \beta_2)\cdot \|q\|_2 -1} } .
\end{align*} 
\end{theorem}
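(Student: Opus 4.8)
The plan is to start from the general error bound already established in Lemma~\ref{lem:Softmax_general_err_bound}, which reduces the whole problem to controlling a single scalar ratio. That lemma gives
\[
\| \wh{\mathsf{Attn}}_s(q, K, V) - \mathsf{Attn}_s(q, K, V) \|_{\infty} \le \frac{2 \ov{\alpha}}{\alpha} \cdot \| V \|_{\infty},
\]
so it suffices to prove $\ov{\alpha}/\alpha \le n^{1 - \gamma - (\beta_1 - \beta_2)\cdot \|q\|_2}$. Since $\alpha = \wh{\alpha} + \ov{\alpha} \ge \wh{\alpha}$ and every summand is positive, I would first pass to $\ov{\alpha}/\alpha \le \ov{\alpha}/\wh{\alpha}$, and then bound the numerator from above and the denominator from below separately, each using one of the two clauses of the massive activation property (Definition~\ref{def:massive}).

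For the numerator, recall $\ov{\alpha} = \sum_{i \in \ov{R}} \exp(\langle q, K_i\rangle)$ where $\ov{R} = [n] \setminus \mathsf{NN}(n^\gamma, q, K)$. The second clause of Definition~\ref{def:massive} says $\langle q, K_i\rangle \le \beta_2 \|q\|_2 \log n$ for every $i \in \ov{R}$, so each summand is at most $\exp(\beta_2 \|q\|_2 \log n) = n^{\beta_2 \|q\|_2}$; as there are at most $n$ terms, this yields $\ov{\alpha} \le n^{1 + \beta_2 \|q\|_2}$.

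For the denominator I would use the first clause, which states $\sum_{i \in R} \langle q, K_i\rangle \ge \beta_1 \|q\|_2\, n^\gamma \log n$, i.e. the \emph{average} of the top-$n^\gamma$ inner products is at least $\beta_1 \|q\|_2 \log n$. The crucial step is to turn this arithmetic-mean statement into a lower bound on $\wh{\alpha} = \sum_{i \in R} \exp(\langle q, K_i\rangle)$, and Jensen's inequality applied to the convex function $\exp$ does exactly this:
\[
\frac{1}{n^\gamma}\sum_{i \in R} \exp(\langle q, K_i\rangle) \ge \exp\Big(\frac{1}{n^\gamma}\sum_{i \in R} \langle q, K_i\rangle\Big) \ge \exp(\beta_1 \|q\|_2 \log n) = n^{\beta_1 \|q\|_2},
\]
so that $\wh{\alpha} \ge n^{\gamma + \beta_1 \|q\|_2}$. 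Combining the two bounds gives $\ov{\alpha}/\wh{\alpha} \le n^{1 + \beta_2\|q\|_2 - \gamma - \beta_1\|q\|_2} = n^{1 - \gamma - (\beta_1 - \beta_2)\|q\|_2}$, and plugging this back into Lemma~\ref{lem:Softmax_general_err_bound} produces the claimed estimate.

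The main obstacle is precisely the denominator step. The hypothesis only controls a linear functional of the surviving entries (their average inner product), whereas what is needed is a lower bound on a sum of exponentials; the convexity/Jensen argument is what bridges this gap, and the one thing to verify carefully is that the inequality points the right way — Jensen lower-bounds the mean of a convex function, which is exactly the orientation we need, so no additional concentration or distributional argument is required. The remaining manipulations (the triangle-inequality-free reduction to $\ov{\alpha}/\wh{\alpha}$ and the exponent arithmetic) are routine.
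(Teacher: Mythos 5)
Your proposal is correct and follows essentially the same route as the paper's own proof: reduce to the ratio $\ov{\alpha}/\alpha$ via Lemma~\ref{lem:Softmax_general_err_bound}, use $\alpha \ge \wh{\alpha}$, bound $\ov{\alpha} \le n^{1+\beta_2\|q\|_2}$ termwise from the second clause of Definition~\ref{def:massive}, and lower-bound $\wh{\alpha} \ge n^{\gamma+\beta_1\|q\|_2}$ from the first clause via Jensen's inequality. If anything, your write-up makes the Jensen step more explicit than the paper's displayed derivation, which states a per-term inequality informally while attributing the step to Jensen in the prose.
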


\begin{proof}
Let $\alpha, \ov{\alpha}, \wh{\alpha}$ be defined in Definition~\ref{def:top_r_softmax_attention}.
By Lemma~\ref{lem:Softmax_general_err_bound}, we have
\begin{align*}
    \| \mathsf{Attn}_s(q, K, V) -  \wh{\mathsf{Attn}}_s(q, K, V) \|_{\infty} \le  & ~ \frac{2 \ov{\alpha}}{\alpha}  \cdot \| V  \|_{\infty}.
\end{align*}

By Definition~\ref{def:massive}, we have
\begin{align*}
    \wh{\alpha} = & ~ \sum_{i\in \mathsf{NN}(n^\gamma, q, K)} \exp(\langle q, K_i \rangle) \\
    \ge & ~  \sum_{i\in \mathsf{NN}(n^\gamma, q, K)} \exp(\|q\|_2 \beta_1 \log(n))\\
    = & ~  n^{ \gamma + \beta_1 \cdot \|q\|_2},
\end{align*}
where the first step is by Definition of $\wh{\alpha}$, the second step is by Definition~\ref{def:massive} and Jensen inequality, and the last step is by simple calculation.  

We also have
\begin{align*}
    \ov{\alpha} = & ~ \sum_{i\in [n] \setminus \mathsf{NN}(n^\gamma, q, K)} \exp(\langle q, K_i \rangle) \\
    \le & ~  \sum_{i\in [n] \setminus \mathsf{NN}(n^\gamma, q, K)} \exp(\|q\|_2 \beta_2 \log(n))\\
    \le & ~ n^{1 + \beta_2 \cdot \|q\|_2},
\end{align*}
where the first step is by Definition of $\ov{\alpha}$, the second step is by Definition~\ref{def:massive}, and the last step is by simple calculation.  

Finally, we finish the proof by the fact $\wh{\alpha} + \ov{\alpha} = \alpha$.
\end{proof}

%%%% Cut-line between first 10 pages and appendix

%%% some writing rules

%% Writing rule for creating tags.
%% Tags :
%% Theorem    \ref{thm:bla_bla}
%% Lemma      \ref{lem:bla_bla}
%% Claim      \ref{cla:bla_bla}
%% Corollary  \ref{cor:bla_bla}
%% Fact       \ref{fac:bla_bla}
%% Definition \ref{def:bla_bla}
%% Section    \ref{sec:bla_bla}
%% Subsection \ref{sub:bla_bla}
%% Equation   \ref{eq:bla_bla}

\end{document}